\documentclass{article}
\usepackage{amsmath,amsfonts,bm}
\usepackage{xspace}



\newcommand{\method}{{\sf USD3}\xspace}
\newcommand{\methodce}{{\sf USD3-CE}\xspace}
\newcommand{\methodvlb}{{\sf USD3-VLB}\xspace}
\newcommand{\methodsim}{{\sf USD3$^\ast$}\xspace}

\newcommand{\tauldr}{{$\tau$-LDR}\xspace}
\newcommand{\sddm}{{SDDM}\xspace}
\newcommand{\dpm}{{D3PM}\xspace}
\newcommand{\rdm}{{RDM}\xspace}
\newcommand{\sedd}{{SEDD}\xspace}

\newcommand{\pone}{{\sc Piano-P}\xspace}
\newcommand{\ptwo}{{\sc Piano}\xspace}
\newcommand{\cifar}{{\sc VQCIFAR10}\xspace}

\newcommand{\beq}{\begin{equation}}
	\newcommand{\eeq}{\end{equation}}
 
\def\figref#1{Fig.~\ref{#1}}





\def\eqref#1{Eq.~(\ref{#1})}









\def\1{\bm{1}}




\def\rvb{{\mathbf{b}}}

\def\rvm{{\mathbf{m}}}

\def\rvx{{\mathbf{x}}}

\def\rvz{{\mathbf{z}}}





\def\ve{{\bm{e}}}

\def\vm{{\bm{m}}}

\def\vp{{\bm{p}}}

\def\vx{{\bm{x}}}
\def\vy{{\bm{y}}}
\def\vz{{\bm{z}}}



\DeclareMathAlphabet{\mathsfit}{\encodingdefault}{\sfdefault}{m}{sl}
\SetMathAlphabet{\mathsfit}{bold}{\encodingdefault}{\sfdefault}{bx}{n}


\def\gM{{\mathcal{M}}}

\def\gS{{\mathcal{S}}}










\newcommand{\E}{\mathbb{E}}
\newcommand{\Ls}{\mathcal{L}}
\newcommand{\R}{\mathbb{R}}

\newcommand{\KL}{D_{\mathrm{KL}}}



\newcommand\ci{\perp\!\!\!\perp}

    \PassOptionsToPackage{compress}{natbib}


    \usepackage[preprint]{neurips_2024}



\usepackage[utf8]{inputenc} 
\usepackage[T1]{fontenc}    
\usepackage{hyperref}       
\hypersetup{
    colorlinks=true,
    linkcolor=blue,
    citecolor=blue,
    filecolor=magenta,      
    urlcolor=cyan,
}
\usepackage{url}            
\usepackage{booktabs}       
\usepackage{amsfonts}       
\usepackage{nicefrac}       
\usepackage{microtype}      
\usepackage{xcolor}         
\usepackage{graphicx}

\usepackage{amssymb,amsmath,amsthm,enumitem}
\usepackage{mathtools}
\theoremstyle{plain}
\newtheorem{theorem}{Theorem}[section]
\newtheorem{proposition}[theorem]{Proposition}

\theoremstyle{definition}

\theoremstyle{remark}

\usepackage{algorithm}
\usepackage{algorithmic}
\usepackage{titlesec}
\titlespacing*{\subsection}{0pt}{0\baselineskip}{0\baselineskip}
\titlespacing*{\subsubsection}{0pt}{0\baselineskip}{0\baselineskip}
\titlespacing*{\section}{0pt}{0\baselineskip}{0\baselineskip}
\everydisplay{
\setlength{\abovecaptionskip}{2pt}
\setlength{\abovedisplayskip}{2pt}
\setlength{\belowdisplayskip}{3pt}
\setlength{\belowcaptionskip}{0pt}
}
\usepackage{caption}
\usepackage{subcaption}
\usepackage{multirow}

\newcommand{\first}[1]{\textbf{\textcolor{red}{#1}}}
\newcommand{\second}[1]{\textbf{\textcolor{violet}{#1}}}


\title{Unified Discrete Diffusion for Categorical Data}

%

\author{%
  Lingxiao Zhao,\  Xueying Ding \thanks{Equal contribution.}\\
  \small Carnegie Mellon University\\
  \texttt{\small lingxiaozlx@gmail.com}\\
  \texttt{\small xding2@andrew.cmu.edu}\\
  \And
  Lijun Yu\\
  \small Carnegie Mellon University\\
  \texttt{\small lijun@lj-y.com} \\ 
  \And
  Leman Akoglu\\
 \small  Carnegie Mellon University\\
  \texttt{\small lakoglu@andrew.cmu.edu} \\ 
}

\begin{document}

\maketitle

\begin{abstract}
Discrete diffusion models have attracted significant attention for their application to naturally discrete data, such as language and graphs. While discrete-time discrete diffusion has been established for some time, it was only recently that \citet{campbell2022continuous} introduced the first framework for continuous-time discrete diffusion. However, their training and backward sampling processes significantly differ from those of the discrete-time version, requiring nontrivial approximations for tractability. In this paper, we first introduce a series of generalizations and simplifications of the variational lower bound (VLB) that facilitate more accurate and easier optimization both discrete- and continuous-time discrete diffusion. 
We further establish a unification of discrete- and continuous-time discrete diffusion through shared forward process and backward parameterization. Thanks to this unification, the continuous-time diffusion can now utilize the exact and efficient backward process developed for the discrete-time case, avoiding the need for costly and inexact approximations. Similarly, the discrete-time diffusion now also employ the MCMC corrector, which was previously exclusive to the continuous-time case. Extensive experiments and ablations demonstrate the significant improvement.
\end{abstract}

\section{Introduction}


{Deep generative models have taken the world by storm, 
capturing complex data distributions and producing realistic data, from human-like text \citep{brown2020language,li2022diffusion,OpenAI_GPT4_2023} and natural looking images \citep{dhariwal2021diffusion,ramesh2022hierarchical,zhang2023text} to  novel compounds like molecules and drugs \citep{kang2018conditional,li2021structure} and video synthesis \citep{ho2022imagen}.  
Denoising diffusion models \citep{ho2020denoising}, a powerful class of generative models, 
are trained through a forward diffusion process that gradually adds noise to the training samples, and a backward process that denoises these diffusion trajectories. New data are then generated by sampling from the noise distribution and employing the trained model for recursive denoising.

Discrete diffusion for categorical data has two modeling paradigms: discrete-time and continuous-time. 
The former discretizes time such that  backward denoising 
 is learned only at pre-specified time points. This limits generation, which can only  ``jump back'' through fixed points. In contrast, continuous-time case 
 allows a path  through any point in range,  
 and often yields higher sample quality. 
Current literature on discrete-time discrete diffusion is relatively established, while only recently 
\citet{campbell2022continuous} introduced the first continuous-time discrete diffusion framework. While groundbreaking, their  
their loss requires multiple network evaluations during training.
Moreover, the exact sampling through their learned 
backward process is extremely tedious for multi-dimensional variables.
Due to mathematically complicated and computationally demanding formulations,  \citet{campbell2022continuous} propose nontrivial
approximations for tractability with unknown potential errors.

In this paper, we introduce a series of improvements to two critical aspects of discrete diffusion: loss computation and backward sampling, for both discrete- and continuous-time scenarios. What is more, for the first time, we show that the discrete-time and continuous-time discrete diffusion can be unified together such that they share exactly the same forward diffusion and backward sampling process. We elaborate on the details and their benefits as follows:
\begin{itemize}[nosep, left=5pt,itemsep=4pt,leftmargin=1em,labelwidth=*,align=left]
    \item \textbf{Loss simplifcation and generalization}:  We derive significantly simplified yet exact VLB calculations for both discrete\&continuous-time discrete diffusion,  taking into account the properties of categorical data. Our simplified formulations allow both forward and backward probabilities to accommodate any noise distribution element-wisely, which is particularly attractive for multi-element objects where each element can exhibit a  different noise distribution. 
    
    \item \textbf{Efficient backward sampling}: We present a closed-form backward probability \( p_\theta(\vx_s | \vx_t) \) for all \( s < t \) with reconstruction-based model parameterization, paving the way for VLB simplification and  accelerated backward sampling, initially derived in the discrete-time case. Furthermore, we demonstrate that this improved backward process can be applied to continuous-time case, eliminating the need for the costly and inexact sampling approximations in \cite{campbell2022continuous}.
    
    \item \textbf{Unification:} We demonstrate that continuous-time diffusion can share the exact same forward process as discrete-time diffusion for \textit{any} noise distribution. This unified forward process, along with shared backward parameterization, leads to a unification of both forward and backward processes for discrete-\& continuous-time cases. This offers mutual benefits: discrete-time diffusion can utilize continuous time's MCMC corrector to enhance performance, while continuous-time diffusion can employ fast and exact backward sampling derived in discrete-time case.
\end{itemize}

We present both discrete-time (\S\ref{sec:dd}) and continuous-time (\S\ref{sec:cd}) discrete diffusion in a self-contained manner, accompanied by easy-to-use open-source code at {\small\url{https://github.com/LingxiaoShawn/USD3}}. 
We explicitly mark our novel contributions in (sub)sections with summarization.

\section{Discrete-time Discrete Diffusion}
\label{sec:dd}
\textbf{Notation:} Let $\rvx_0 \sim p_{\text{data}}(\rvx_0)$ be the random variable of observed data with underlying distribution $p_{\text{data}}(\rvx_0)$. Let $\rvx_t \sim q({\rvx_t})$ be the latent variable at time $t$ of a single-element object, like a pixel of an image or a node/edge of a graph, with maximum time $T$. 
Let $\rvx_{t|s} \sim q(\rvx_t| \rvx_s)$ be the conditional random variable. 
We model the \textit{forward diffusion} process independently for each element of the object, while the \textit{backward denoising} process is modeled jointly for all elements of the object. For simplicity and clarity of presentation, we first assume that the object only has 1 element and extend to multi-element object later. Let $\rvx_0^{1:D}$ denote the object with $D$ elements, and  $\rvx^i_t$ be the $i$-th element of latent object at time $t$. We assume all random variables take categorical values from $\{1,2,...,K\}$. Let $\ve_k \in \{0,1\}^K $ be the one-hot encoding of category $k$. For a random variable $\rvx$, we use $\vx$ denoting its one-hot encoded sample where $\vx \in \{\ve_1,...,\ve_K \}$. Also, we interchangeably use $q(\rvx_t| \rvx_s)$, $q(\rvx_t=\vx_t| \rvx_s=\vx_s)$, and $q_{t|s}(\vx_t| \vx_s)$ when no ambiguity is introduced.
Let $\langle \cdot, \cdot \rangle $ denote inner product. All vectors are column-wise vectors. 

\subsection{Graphical Model View of Diffusion Models}
\label{ssec:prelim}

Diffusion models
can be represented by latent variable graphical models (see Appx. \figref{fig:pgm}).
We can write the joint probability as $p_{\theta}(\rvx_{0:T}):= p_{\theta}(\rvx_0, \rvx_1, ..., \rvx_T) = p_{\theta}(\rvx_T)\prod_{t=1}^T p_{\theta}(\rvx_{t-1} | \rvx_{t})$ using the Markov condition.  Parameters $\theta$ are learned by maximizing the loglikehood of the observed variable $\rvx_0$: $\log p_{\theta} (\rvx_0) = \log \int p_{\theta}(\rvx_{0:T}) d\rvx_{1:T} $. However the marginalization is intractable, and instead the following variational lower bound (VLB) is used.
\begin{equation}
\scalebox{0.94}{$
\begin{aligned}
     &\log p_{\theta}(\rvx_0) =\log \int q(\rvx_{1:T}|\rvx_0) \frac{p_{\theta}(\rvx_{0:T})}{q(\rvx_{1:T}|\rvx_0)}d\rvx_{1:T} \label{eq:vlb_0} 
     \geq \E_{q(\rvx_{1:T}|\rvx_0)} \big[\log p_{\theta}(\rvx_{0:T}) - \log q(\rvx_{1:T} | \rvx_0)] .
\end{aligned}
$}
\end{equation}
\noindent The above inequality holds for any conditional probability $q(\rvx_{1:T}|\rvx_0)$ and finding the best $q(\rvx_{1:T}|\rvx_0)$ to tighten the bound is  the inference problem in graphical models (i.e. E step in EM algorithm). Exact inference is intractable, thus $q(\rvx_{1:T}|\rvx_0)$ in diffusion models  is fixed or chosen specifically to simplify the learning objective. To simplify 
\eqref{eq:vlb_0}, it is important to assume $q(\rvx_{1:T}|\rvx_0)$ is \textit{decomposable}. The typical assumption is  $q(\rvx_{1:T}|\rvx_0) = \prod_{t=1}^T q(\rvx_t|\rvx_{t-1})$ \citep{DDPM}, which we also adopt. Others that have been explored include $q(\rvx_{1:T}|\rvx_0) = \prod_{t=1}^T q(\rvx_t|\rvx_{t-1}, \rvx_{0})$ \citep{DDIM}.

Assuming $q(\rvx_{1:T}|\rvx_0) = \prod_{t=1}^T q(\rvx_t|\rvx_{t-1})$,  \eqref{eq:vlb_0} can be simplified as the following
\begin{equation}
\scalebox{0.85}{$
\begin{aligned}
&\underbrace{\E_{q(\rvx_{1}|\rvx_0)}\big[ \log p_{\theta}(\rvx_{0}| \rvx_{1} )\big]}_{-\Ls_1(\theta)} 
      - \underbrace{\KL\big(q(\rvx_{T} |\rvx_0 ) || p_{\theta}(\rvx_{T}) \big)}_{\Ls_{\text{prior} }} 
  -\sum_{t=2}^T \underbrace{\E_{q(\rvx_{t}|\rvx_0)}\big[ \KL\big(q(\rvx_{t-1} | \rvx_{t}, \rvx_0 ) || p_{\theta}(\rvx_{t-1}| \rvx_{t} \big)  \big]}_{\Ls_t(\theta)} 
       \label{eq:vlb}
\end{aligned}
$}
\end{equation}
Where $\Ls_{\text{prior}}\approx 0$, since $p_{\theta}(\rvx_{T}) \approx q(\rvx_{T} |\rvx_0 )$ is designed as a  fixed noise distribution that is easy to sample from. (See Appx. \S\ref{ssec:vlbderive} for derivation.) To compute \eqref{eq:vlb}, we need to formalize distributions ($i$) 
 $q(\rvx_t| \rvx_0)$ and ($ii$)  $q(\rvx_{t-1} | \rvx_t, \rvx_0)$, as well as ($iii$) the parameterization of $p_{\theta}(\rvx_{t-1}| \rvx_{t} )$. We specify these respectively in \S\ref{ssec:frwd}, \S\ref{ssec:backw}, and \S\ref{ssec:param}. 

\subsection{Components of Discrete-time Discrete Diffusion}
After reviewing the forward process for discrete diffusion  (\S\ref{ssec:frwd}),  we  contribute a series of analytical simplifications for various components (\S\ref{ssec:backw}, \S\ref{ssec:param}) of the VLB, providing exact closed-form formulation in \S\ref{ssec:vlbloss}, as well as an approximated loss for easier optimization in \S\ref{ssec:loss_approx}. We present fast backward sampling in Appx.\S\ref{ssec:reparameter}, and give the extension to multi-element case in Appx. \S\ref{ssec:discrete_multi_element_app}. 

\subsubsection{The Forward  Diffusion Process}
\label{ssec:frwd}

We assume each discrete random variable $\rvx_t$ has a categorical distribution, i.e. $\rvx_t \sim \text{Cat}(\rvx_t ; \vp)$ with $\vp\in [0,1]^{K}$ and $\1^\top \vp = 1$ . One can verify that $p(\rvx_t= \vx_t) = \vx_t^\top\vp$, or simply $p(\rvx_t) = \vx_t^\top\vp$. As shown in \citep{hoogeboom2021argmax, D3PM}, the forward process with discrete variables $q(\rvx_t | \rvx_{t-1})$ at $t$ step can be represented as a transition matrix  $Q_t \in [0,1]^{K\times K}$ such that $[Q_{t}]_{ij} = q(\rvx_t = \ve_j | \rvx_{t-1} = \ve_i)$. Then, we can write the distribution explicitly as 
\begin{align}
    q(\rvx_t | \rvx_{t-1}) = \text{Cat}(\rvx_t; Q^\top_t\vx_{t-1}) \;.
\end{align}
Given transition matrices $Q_1,...,Q_T$,  the $t$-step  marginal distribution conditioning on $s, \forall t>s$ is  
\begin{align}
\hspace{-0.1in}
    q(\rvx_t | \rvx_s) = \text{Cat}(\rvx_t; \overline{Q}_{t|s}^\top\vx_s ), \text{ with } \overline{Q}_{t|s} = Q_{s+1}...Q_{t} \; .\label{eq:Qbar_ts}
\end{align} 
The $s$-step posterior distribution conditioning on $\rvx_0$ and $t$-step  can be derived as
\begin{align}
    q(\rvx_{s} | \rvx_t, \rvx_0) =\frac{q(\rvx_t | \rvx_{s}) q(\rvx_{s}| \rvx_0)}{q(\rvx_t | \rvx_0)} = \text{Cat}(\rvx_{s};\frac{\overline{Q}_{t|s} \vx_{t} \odot \overline{Q}_{s}^\top\vx_0}{\vx_t^\top\overline{Q}_t^\top\vx_0 }),  \ \forall t > s \; . \label{eq:t|s,0}
\end{align}
The above formulations are valid (derivation in Appx. \S\ref{ssec:t-1|t,0} ) for \textit{any} transition matrices $Q_1,...,Q_T$. However, to achieve uninformative noise $\rvx_T$,  $\{Q_i\}_{i=1}^T$ should be chosen such that every row of $\overline{Q}_t= \overline{Q}_{t|0}$ converge to the same known stationary distribution when $t$ becomes large enough (at $T$). Let the known stationary distribution be $\rvm_0 \sim \text{Cat}(\rvm_0;\vm)$. Then, the constraint can be stated as 
\begin{align}
    \lim_{t\rightarrow T} \overline{Q}_t  = \1 \vm^\top \;. \label{eq:stationary}
\end{align}
In addition, this paper focuses on \textit{nominal data} (see Appx. \S\ref{ssec:nominal}) where categories are unordered and only equality comparison is defined. Hence, no ordering prior \textit{except checking equality} should be used to define $Q_t$\footnote{Mathematically, this means $\forall k,j$, $q(\rvx_t | \rvx_{t-1} = \ve_j , \rvx_t\neq\{\ve_j, \ve_k\} ) = q(\rvx_t | \rvx_{t-1} =\ve_k,  \rvx_t\neq\{\ve_j, \ve_k\} ) .$}. To achieve the desired convergence on nominal data while keeping the flexibility of choosing any categorical stationary distribution $\rvm_0 \sim \text{Cat}(\rvm_0;\vm)$, we define $Q_t$ as 
\begin{align}
    Q_t = \alpha_t I + (1-\alpha_t)\1\vm^\top \;, \label{eq:Qt}
\end{align}
where $\alpha_t \in [0, 1]$. 
This results in the accumulated transition matrix $\overline{Q}_{t|s} $ being equal to 
\begin{align}
    \overline{Q}_{t|s} = \overline{\alpha}_{t|s} I + (1-\overline{\alpha}_{t|s} )\1 \vm^\top , \; 
   \forall t > s \;,\label{eq:Qbar_t_s}
\end{align}
where $\overline{\alpha}_{t|s} = \prod_{i=s+1}^t \alpha_i $. Note that $ \overline{\alpha}_{t} = \overline{\alpha}_{t|0} =\overline{\alpha}_{t|s} \overline{\alpha}_{s}$. We can achieve 
the uninformative noise with satisfying the constraint  
\eqref{eq:stationary} by picking $\alpha_t$ such that $\lim_{t\rightarrow T} \overline{\alpha}_t  = 0$.

\subsubsection{Analytical Form of $q(\rvx_{t-1} | \rvx_t, \rvx_0 )$}
\label{ssec:backw}

The formulation in \eqref{eq:Qt} can be used to simplify  $q(\rvx_{t-1} | \rvx_t, \rvx_0 )$. We provide a general formulation of $q(\rvx_{s} | \rvx_t, \rvx_0 )$ for any $s,t$ with $0 < s<t \le T$, which will be useful for unifying with continuous-time diffusion.
One can recover $q(\rvx_{t-1} | \rvx_t, \rvx_0 )$ by setting $s$ as $t$$-$$1$.

\begin{proposition}[\cite{RPM}]
For both discrete- and continuous-time discrete diffusion, we can write the conditional distribution as 
\begin{align}
\scalebox{0.95}{$
q(\rvx_s| \rvx_t, \rvx_0) =
\begin{cases}
    \text{Cat}\Big(\rvx_s; \ (1 - \lambda_{t|s}) \cdot \vx_t + \lambda_{t|s} \cdot \vm\Big)    &\text{when }\vx_t = \vx_0\\
     \text{Cat}\Big(\rvx_s; \ (1 - \mu_{t|s}) \cdot \vx_0 + \mu_{t|s}\overline{\alpha}_{t|s} \cdot \vx_t  + \mu_{t|s}(1-\overline{\alpha}_{t|s}) \cdot \vm \Big)    &\text{when }\vx_t \neq \vx_0 
\end{cases}
$}\label{eq:s|t,0 full}
\end{align}
where $\lambda_{t|s} := \frac{(1- \overline{\alpha}_s)(1-\overline{\alpha}_{t|s}) \langle \vm, \vx_t \rangle}{ \overline{\alpha}_t  + (1 - \overline{\alpha}_t) \langle \vm, \vx_t \rangle }, \  \mu_{t|s} := \frac{1- \overline{\alpha}_s}{1-\overline{\alpha}_t} \;. $
\vspace{-0.1in}
\end{proposition}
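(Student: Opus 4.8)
The plan is to start from the exact posterior already recorded in \eqref{eq:t|s,0}, namely $q(\rvx_s\mid\rvx_t,\rvx_0)=\text{Cat}\big(\rvx_s;\,(\overline{Q}_{t|s}\vx_t\odot\overline{Q}_s^\top\vx_0)/(\vx_t^\top\overline{Q}_t^\top\vx_0)\big)$, and to substitute the closed form of the accumulated transition matrix from \eqref{eq:Qbar_t_s}. Writing $\overline{Q}_{t|s}=\overline{\alpha}_{t|s}I+(1-\overline{\alpha}_{t|s})\1\vm^\top$ together with the analogous expressions for $\overline{Q}_s$ and $\overline{Q}_t$, and recalling $\overline{\alpha}_t=\overline{\alpha}_{t|s}\overline{\alpha}_s$, the whole computation reduces to evaluating two matrix--vector products, one Hadamard product, and one scalar, all on one-hot inputs.

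First I would compute the two factors separately. Since $\vx_t$ is one-hot, $\overline{Q}_{t|s}\vx_t=\overline{\alpha}_{t|s}\vx_t+(1-\overline{\alpha}_{t|s})\langle\vm,\vx_t\rangle\1$, and since $\1^\top\vx_0=1$, $\overline{Q}_s^\top\vx_0=\overline{\alpha}_s\vx_0+(1-\overline{\alpha}_s)\vm$. Expanding the Hadamard product then leaves four terms, which I would collapse using the one-hot identities $\1\odot\vv=\vv$, $\vx_t\odot\vm=\langle\vm,\vx_t\rangle\vx_t$, and $\vx_t\odot\vx_0=\mathbb{1}[\vx_t=\vx_0]\,\vx_t$. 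The denominator is the scalar $\vx_t^\top\overline{Q}_t^\top\vx_0=\overline{\alpha}_t\langle\vx_t,\vx_0\rangle+(1-\overline{\alpha}_t)\langle\vm,\vx_t\rangle$. The appearance of $\mathbb{1}[\vx_t=\vx_0]$ and $\langle\vx_t,\vx_0\rangle$ is exactly what forces the two-case split in the statement.

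Next I would treat the two cases. When $\vx_t=\vx_0$ the numerator collapses to $A\vx_t+B\vm$ with $B=(1-\overline{\alpha}_{t|s})(1-\overline{\alpha}_s)\langle\vm,\vx_t\rangle$, and the denominator is $Z=\overline{\alpha}_t+(1-\overline{\alpha}_t)\langle\vm,\vx_t\rangle$; dividing gives $B/Z=\lambda_{t|s}$ at once, so the only real algebra is verifying $A+B=Z$, which makes the $\vx_t$ coefficient equal $1-\lambda_{t|s}$. This rests on the identity $a(1-b)+(1-a)b+(1-a)(1-b)=1-ab$ with $a=\overline{\alpha}_{t|s}$, $b=\overline{\alpha}_s$, combined with $\overline{\alpha}_t=ab$. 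When $\vx_t\neq\vx_0$, the $\vx_t\odot\vx_0$ and $\langle\vx_t,\vx_0\rangle$ terms vanish, a common factor $\langle\vm,\vx_t\rangle$ cancels between numerator and denominator, and what remains is $\tfrac{1}{1-\overline{\alpha}_t}$ times $\overline{\alpha}_{t|s}(1-\overline{\alpha}_s)\vx_t+(1-\overline{\alpha}_{t|s})\overline{\alpha}_s\vx_0+(1-\overline{\alpha}_{t|s})(1-\overline{\alpha}_s)\vm$; reading off coefficients and using $1-\mu_{t|s}=\overline{\alpha}_s(1-\overline{\alpha}_{t|s})/(1-\overline{\alpha}_t)$ matches all three terms.

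I expect the main obstacle to be bookkeeping rather than conceptual: keeping the Hadamard expansion organized and correctly invoking $\vx_t\odot\vm=\langle\vm,\vx_t\rangle\vx_t$ (the step where the one-hot structure does the real work), and then checking the two coefficient identities ($A+B=Z$ in the matched case, and the $\vx_0$-coefficient equaling $1-\mu_{t|s}$ in the mismatched case). A final sanity check is that both parameter vectors sum to one, which follows since $\vx_t,\vx_0,\vm$ are each probability vectors and the coefficients sum to one in each case. Because the unified forward process endows continuous-time diffusion with the same accumulated kernel $\overline{Q}_{t|s}$ of \eqref{eq:Qbar_t_s}, the identical computation delivers the result there too, so no separate argument is required.
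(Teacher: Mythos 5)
Your proposal is correct and follows essentially the same route as the paper's own derivation (Appx.~\S\ref{ssec:t-1|t,0}): Bayes' factorization \eqref{eq:t|s,0}, substitution of the closed form \eqref{eq:Qbar_t_s}, expansion of the Hadamard product into four terms via the one-hot identities, and the same two-case collection of coefficients yielding $\lambda_{t|s}$ and $\mu_{t|s}$ (your identity $A+B=Z$ is exactly the normalization step the paper performs implicitly when writing the $\vx_t$-coefficient as $1-\lambda_{t|s}$). The appeal to the unified forward process for the continuous-time case likewise matches how the paper handles it in \S\ref{ssec:fb_ctmc}.
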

We remark that the above formulation is originally presented in \cite{RPM} with $s=t-1$. Appx. \S\ref{ssec:t-1|t,0} gives the detailed derivation of the probability's formulation. 

\subsubsection{Contrib. 1: Analytical Form of $p_{\theta}(\rvx_{t-1} | \rvx_{t})$}
\label{ssec:param}

The literature has explored 
three different parameterizations of $p_{\theta}(\rvx_{t-1} | \rvx_{t})$:  
(1) parameterizing $p_{\theta}(\rvx_{t-1} | \rvx_{t})$ directly;
(2) parameterizing $p_{\theta}(\rvx_{0} | \rvx_t) $ with $f_t^\theta$ such that $p_{\theta}(\rvx_{0} | \rvx_t) = \text{Cat}(\rvx_0; f_t^\theta(\rvx_t)) $ and letting $p_{\theta}(\rvx_{t-1} | \rvx_{t}) =q(\rvx_{t-1} | \rvx_{t}, f_t^\theta(\rvx_t)) $; and 
(3) parameterizing $p_{\theta}(\rvx_{0} | \rvx_{t})$ with $f_t^\theta$
and then marginalizing $q(\rvx_{t-1}, \rvx_0 | \rvx_{t} )$ such that  $p_\theta(\rvx_{t-1}| \rvx_{t} ) =\sum_{\rvx_0}q(\rvx_{t-1} | \rvx_t, \rvx_0) p_\theta(\rvx_0 | \rvx_t)$. 

Method (1) does not reuse any distribution in forward process, and hence is less effective. 
Method (2) has been widely used for continuous diffusion models as in \cite{DDPM} and \cite{DDIM}, and some discrete diffusion models like in \cite{hoogeboom2021argmax} and \cite{RPM}. It avoids marginalization and works effectively for continuous diffusion. However for discrete diffusion, as shown in \eqref{eq:s|t,0 full}, sample $\rvx_0$ determines which categorical distribution should be used, which cannot be determined without the true $\rvx_0$. Some heuristics have been proposed in \cite{RPM}, however those can have a large gap to the true $q(\rvx_{t-1}|\rvx_{t})$, leading to an inaccurate sampling process.

Method (3) has been used in \cite{D3PM} by directly marginalizing out $\rvx_0$, which introduces additional computational cost in both loss function computation and sampling process as the formulation of $p_\theta(\rvx_{t-1} | \rvx_t)$ has not been simplified to a closed-form distribution.  
In this paper,  we show that method (3) parameterization can be simplified to a clean formulation of categorical distribution.

\begin{proposition}
The parameterization of $p_\theta(\rvx_{s} | \rvx_t)$ can be simplified for any $0 < s<t \le T$ as~
{\small{
\begin{align}
p_\theta(\rvx_{s} | \rvx_t) & = \text{Cat}\Big(\rvx_s; 
        (1-\mu_{t|s})\cdot f_t^\theta(\vx_t)   + (\mu_{t|s}\overline{\alpha}_{t|s} + \gamma^\theta_{{t|s}})  \cdot \vx_t +
        (\mu_{t|s}(1-\overline{\alpha}_{t|s}) - \gamma^\theta_{{t|s}})\cdot \vm
    \Big) \label{eq:s|t}
\end{align}
}}
 where  $\gamma_{t|s}^\theta$ is affected by $f_t^\theta(\rvx_t)$, 
    $\gamma_{t|s}^\theta := (\mu_{t|s} - \lambda_{t|s} - \mu_{t|s}\overline{\alpha}_{t|s})
    \langle f_t^\theta(\vx_t) , \vx_t \rangle  \;.\label{eq:gamma}
$
\vspace{-0.1in}
\end{proposition}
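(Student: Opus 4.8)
The plan is to directly evaluate the method~(3) marginalization
\[
p_\theta(\rvx_s \mid \rvx_t) = \sum_{\rvx_0} q(\rvx_s \mid \rvx_t, \rvx_0)\, p_\theta(\rvx_0 \mid \rvx_t)
\]
and show it collapses to a single categorical distribution. Since every distribution here is categorical over $\{1,\dots,K\}$, it suffices to track mean (probability) vectors. Writing $p_\theta(\rvx_0 \mid \rvx_t) = \text{Cat}(\rvx_0; f_t^\theta(\vx_t))$, the mass assigned to a category $\vx_0$ is the inner product $\langle f_t^\theta(\vx_t), \vx_0 \rangle$, so the mean vector of $p_\theta(\rvx_s \mid \rvx_t)$ is the convex combination $\sum_{\vx_0} \langle f_t^\theta(\vx_t), \vx_0 \rangle \, \vmu(\vx_0)$, where $\vmu(\vx_0)$ is the posterior mean vector read off from \eqref{eq:s|t,0 full}.

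First I would split the sum according to the two cases of \eqref{eq:s|t,0 full}: the single diagonal term $\vx_0 = \vx_t$ (governed by $\lambda_{t|s}$) and the off-diagonal terms $\vx_0 \neq \vx_t$ (governed by $\mu_{t|s}$). Let $w := \langle f_t^\theta(\vx_t), \vx_t \rangle$ denote the model's mass on $\vx_0 = \vx_t$. The two elementary identities needed to collapse the off-diagonal sum are the normalization $\sum_{\vx_0} \langle f_t^\theta(\vx_t), \vx_0 \rangle = 1$ and the resolution of identity $\sum_{\vx_0} \langle f_t^\theta(\vx_t), \vx_0 \rangle \, \vx_0 = f_t^\theta(\vx_t)$, valid because the one-hot vectors form the standard basis. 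Subtracting the diagonal contribution yields $\sum_{\vx_0 \neq \vx_t} \langle f_t^\theta(\vx_t), \vx_0 \rangle = 1 - w$ and $\sum_{\vx_0 \neq \vx_t} \langle f_t^\theta(\vx_t), \vx_0 \rangle \, \vx_0 = f_t^\theta(\vx_t) - w\,\vx_t$.

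Substituting these into the split sum turns the whole expression into a linear combination of the three fixed vectors $f_t^\theta(\vx_t)$, $\vx_t$, and $\vm$, and I would then collect coefficients. The coefficient of $f_t^\theta(\vx_t)$ reads off immediately as $(1-\mu_{t|s})$ from the off-diagonal term. The coefficients of $\vx_t$ and $\vm$ each receive one contribution from the $\lambda_{t|s}$ diagonal term and one from the $\mu_{t|s}$ off-diagonal terms; gathering the $w$-proportional pieces in the $\vx_t$-coefficient gives $\mu_{t|s}\overline{\alpha}_{t|s} + w(\mu_{t|s} - \lambda_{t|s} - \mu_{t|s}\overline{\alpha}_{t|s})$, which equals $\mu_{t|s}\overline{\alpha}_{t|s} + \gamma_{t|s}^\theta$ by the stated definition of $\gamma_{t|s}^\theta$. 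The $\vm$-coefficient simplifies analogously to $\mu_{t|s}(1-\overline{\alpha}_{t|s}) - \gamma_{t|s}^\theta$, matching \eqref{eq:s|t}. I would close by checking that the three coefficients sum to one, so the result is a genuine categorical distribution.

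The main obstacle is purely bookkeeping: the piecewise posterior \eqref{eq:s|t,0 full} treats $\vx_0 = \vx_t$ differently from $\vx_0 \neq \vx_t$, so the diagonal weight $w$ leaks into both the $\vx_t$- and the $\vm$-coefficient and must be recombined carefully. The conceptual content is recognizing that this entire mismatch is captured by the single scalar $\gamma_{t|s}^\theta$, namely $w$ times the gap between how the $\lambda$-case and the $\mu$-case would each assign the diagonal term; isolating this common factor is exactly what makes the final formula clean. No genuine analytic difficulty arises once the resolution-of-identity is applied.
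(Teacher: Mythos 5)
Your proposal is correct and follows essentially the same route as the paper's proof in Appendix \S\ref{ssec:p_x_s_x_t}: marginalize $\sum_{\rvx_0} q(\rvx_s\mid\rvx_t,\rvx_0)\,p_\theta(\rvx_0\mid\rvx_t)$, split the sum into the diagonal case $\vx_0=\vx_t$ and the off-diagonal cases, apply the resolution-of-identity identities (the paper writes them in matrix form, $\sum_{\rvx\neq\rvx_t}\rvx\rvx^\top = I-\rvx_t\rvx_t^\top$ and $\sum_{\rvx\neq\rvx_t}\rvx = \1-\rvx_t$), and collect coefficients of $f_t^\theta(\vx_t)$, $\vx_t$, and $\vm$ to recover $\gamma_{t|s}^\theta$. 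Your closing check that the three coefficients sum to one is a small addition not in the paper, but otherwise the two arguments are the same.
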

The detailed proof is  in Appx. \S\ref{ssec:p_x_s_x_t}.
Notice that $f_t^\theta(\vx_t)$ is a parameterized neural network with $\1^T f_t^\theta(\vx_t) = 1$. As we show next, the above formulation simplifies the negative VLB loss computation greatly (\S\ref{ssec:vlbloss}),  further motivates an approximated loss that is much easier to optimize (\S\ref{ssec:loss_approx}),  and accelerates the sampling process through reparameterization (\S\ref{ssec:reparameter}). 

\subsubsection{Loss Function}
\label{ssec:vlbloss}

With $q(\rvx_{t-1} | \rvx_{t}, \rvx_{0})$ in \eqref{eq:s|t,0 full} and $p_\theta(\rvx_{t-1} | \rvx_t) $ in \eqref{eq:s|t}, the $\Ls_{t}(\theta)$ in \eqref{eq:vlb} can be written as
\begin{equation}
\scalebox{0.83}{$
\begin{aligned}
    \E_{q(\rvx_t| \rvx_0)}\Big[  \delta_{\rvx_t, \rvx_0}
    \KL\big(q(\rvx_{t-1}| \rvx_t = \rvx_0)\Vert p_\theta(\rvx_{t-1} | \rvx_t)  \big) + 
    (1-\delta_{\rvx_t, \rvx_0})
    \KL\big(q(\rvx_{t-1}| \rvx_t \neq\rvx_0)\Vert p_\theta(\rvx_{t-1} | \rvx_t) \big) \Big] \;, 
    \label{eq:vlb-discrete}
\end{aligned}
$}
\end{equation}
where $\delta_{\rvx_t, \rvx_0}$ denotes the Kronecker delta of $\rvx_t$ and $\rvx_0$. $q(\rvx_{t-1} | \rvx_t = \rvx_0)$ and $q(\rvx_{t-1} | \rvx_t \neq \rvx_0 )$ represent the first and second categorical distribution in \eqref{eq:s|t,0 full}, respectively. 

Apart from negative VLB, another commonly employed auxiliary loss is the cross-entropy (CE) loss between $q(\rvx_t | \rvx_0)$ and $p_\theta(\rvx_0|\rvx_t)$, which measures the  reconstruction quality. 
\begin{align}
    \Ls^{CE}_t(\theta) : = \E_{q(\rvx_t | \rvx_0)} [ - \log p_\theta(\rvx_0 | \rvx_t )] \label{eq:lt_approx}
\end{align}
\eqref{eq:lt_approx} and \eqref{eq:vlb-discrete} share the same global minima with $p_\theta(\rvx_0|\rvx_t)$ being the true posterior $q(\rvx_0 | \rvx_t)$. However they have different optimization landscape;  and thus under limited data and network capacity, which loss is be easier to minimize is unknown \citep{tewari2007consistency, demirkaya2020exploring}. 

\subsubsection{Contrib. 2: Further Loss Simplification  for Easier Optimization}
\label{ssec:loss_approx}

While \eqref{eq:vlb-discrete} is the \textit{exact} negative VLB loss, in practice we find it harder to minimize than $\Ls^{CE}_t$. In this section, we first derive a much simpler, approximated loss by observing a relation between $q(\rvx_s|\rvx_t,\rvx_0)$ and $p_\theta(\rvx_t|\rvx_0)$. 
We then show that  coefficient simplification and  $\Ls^{CE}_t$ are \textit{both} valuable for optimizing a general negative VLB where only partial time steps are observed. We combine all designs to derive the final  approximated loss, denoted as $\tilde{\Ls}_t$.

\begin{proposition}\label{prop:3}
For any $0<s<t\leq T$, with $\rvx_0$ known, $\triangle \vp_{\theta}(\rvx_s| \rvx_t, \rvx_0)$ is defined as 
\begin{align}
  p_\theta(\rvx_s | \rvx_t) - q(\rvx_s | \rvx_t, \rvx_0) =  (1-\mu_{t|s})[
    f^\theta_t(\vx_t)-\vx_0 + \phi_{t|s}  \langle f_t^\theta(\vx_t) -\vx_0 , \vx_t \rangle
    (\vx_t - \vm) 
    ] \;, \label{eq:prop3}
\end{align}
where 
$    \phi_{t|s} := \frac{(1-\bar{\alpha}_s)\bar{\alpha}_{t|s}}{\bar{\alpha}_t + (1-\bar{\alpha}_t)  \langle \rvx_t, \vm  \rangle } \;.$    
\end{proposition}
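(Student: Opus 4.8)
The plan is to work directly with the probability vectors that parameterize the two categorical distributions and subtract them componentwise. Since both $p_\theta(\rvx_s|\rvx_t)$ (the simplified form in \eqref{eq:s|t}) and $q(\rvx_s|\rvx_t,\rvx_0)$ (the case formula in \eqref{eq:s|t,0 full}) are categorical over the same support, their difference as measures is exactly the difference of their $K$-dimensional parameter vectors. Thus $\triangle\vp_\theta(\rvx_s|\rvx_t,\rvx_0)$ reduces to an algebraic identity among the vectors $f_t^\theta(\vx_t)$, $\vx_0$, $\vx_t$, and $\vm$, and the whole proof becomes careful bookkeeping of scalar coefficients.

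First I would isolate the single scalar identity that drives everything, namely
\[
(1-\mu_{t|s})\,\phi_{t|s} \;=\; \mu_{t|s}(1-\bar{\alpha}_{t|s}) - \lambda_{t|s}\,.
\]
This is a pure computation from the definitions of $\mu_{t|s}$, $\lambda_{t|s}$ (Proposition~1) and $\phi_{t|s}$: after clearing the common denominator $\bar{\alpha}_t + (1-\bar{\alpha}_t)\langle\vm,\vx_t\rangle$ on both sides, the equality holds precisely because of the semigroup relation $\bar{\alpha}_t = \bar{\alpha}_{t|s}\,\bar{\alpha}_s$ recorded after \eqref{eq:Qbar_t_s}. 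Combined with the definition of $\gamma_{t|s}^\theta$ in Proposition~2, this identity lets me rewrite $\gamma_{t|s}^\theta = (1-\mu_{t|s})\,\phi_{t|s}\,\langle f_t^\theta(\vx_t),\vx_t\rangle$, which is exactly the shape needed to match the $(\vx_t-\vm)$ term in \eqref{eq:prop3}.

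Then I would split into the two cases of \eqref{eq:s|t,0 full}. When $\vx_t\neq\vx_0$ the subtraction is immediate: the $\vx_t$ and $\vm$ contributions of $q$ cancel the corresponding non-$\gamma$ parts of $p_\theta$, leaving $(1-\mu_{t|s})(f_t^\theta(\vx_t)-\vx_0) + \gamma_{t|s}^\theta(\vx_t-\vm)$; since $\langle\vx_0,\vx_t\rangle=0$ here, I may replace $\langle f_t^\theta(\vx_t),\vx_t\rangle$ by $\langle f_t^\theta(\vx_t)-\vx_0,\vx_t\rangle$ to recover \eqref{eq:prop3} verbatim. When $\vx_t=\vx_0$ I would instead collect the coefficients of $\vx_t$ and of $\vm$ separately, use $\langle\vx_t,\vx_t\rangle=1$ so that $\langle f_t^\theta(\vx_t)-\vx_0,\vx_t\rangle = \langle f_t^\theta(\vx_t),\vx_t\rangle - 1$, and verify that each of the two coefficient equations collapses, after substituting the rewritten $\gamma_{t|s}^\theta$, to the single identity above.

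The main obstacle is that scalar identity: it is the one place where the three coefficients $\mu_{t|s}$, $\lambda_{t|s}$, $\phi_{t|s}$, which carry different denominators and the inner product $\langle\vm,\vx_t\rangle$, must be reconciled, and it is exactly the step where $\bar{\alpha}_t = \bar{\alpha}_{t|s}\bar{\alpha}_s$ becomes indispensable. Everything else is routine cancellation that makes the two seemingly different cases (one built from $\lambda_{t|s}$, the other from $\mu_{t|s}$) collapse to the same closed form.
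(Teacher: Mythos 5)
Your proof is correct and follows essentially the same route as the paper's: the same case split on $\vx_t = \vx_0$ versus $\vx_t \neq \vx_0$, the same subtraction of the categorical parameter vectors of \eqref{eq:s|t} and \eqref{eq:s|t,0 full}, and the same reduction through $\gamma_{t|s}^\theta$. The only difference is organizational: you state and verify the scalar identity $(1-\mu_{t|s})\,\phi_{t|s} = \mu_{t|s}(1-\overline{\alpha}_{t|s}) - \lambda_{t|s}$ up front (via the semigroup relation $\overline{\alpha}_t = \overline{\alpha}_{t|s}\overline{\alpha}_s$), whereas the paper's proof invokes exactly this identity silently when it replaces $\frac{\mu_{t|s}-\lambda_{t|s}-\mu_{t|s}\overline{\alpha}_{t|s}}{1-\mu_{t|s}}$ by $\phi_{t|s}$, so making it explicit is a gain in rigor rather than a different argument.
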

Proposition \ref{prop:3} shows that the distribution difference between $p_\theta(\rvx_s|\rvx_t)$ and $q(\rvx_s|\rvx_t,\rvx_0)$ has a closed-form formulation. (See Appx. \S\ref{ssec:proof_delta_pq} for the proof.) With it, we can apply the Taylor expansion (up to second order) to approximate the KL divergence directly (See Appx. \S\ref{ssec:kl_approx}.) 
\begin{equation}
\scalebox{1}{$
\begin{aligned}
    \KL\big(q(\rvx_s|\rvx_t, \rvx_0) \Vert p_\theta(\rvx_s |\rvx_t) \big) \approx \sum_{\rvx_s }\frac{ |\triangle \vp_{\theta} (\rvx_s| \rvx_t, \rvx_0)|^2 }{q(\rvx_s| \rvx_t, \rvx_0)}\label{eq:kl_approx}
\end{aligned}
$}
\end{equation}
The above formulation along with prop.\ref{prop:3} are valid for \textit{any} $0<s<t\leq T$. We next show that minimizing divergence between $q$ and $p_\theta$ at any $s$ and $t$ is also valid, as it is inside a general negative VLB with partial time steps. The initial version of the VLB is derived under the assumption that observations are made at every time step. Its backward denoising process is designed to advance by a single time step during each generation step  for best generation quality. Let us consider a more general case where only partial time steps are observed in the forward process, then, minimizing its negative VLB can help improve generation quality with fewer steps. 
Assuming only $\rvx_s$ and $\rvx_t$ are observed, where $0 < s < t \leq T$, a derivation analogous to that of \eqref{eq:vlb} can show that
\begin{equation}
\scalebox{0.88}{$
\begin{aligned}
    &\log p_\theta(\rvx_0) \ge  - \KL[q(\rvx_t| \rvx_0) \Vert p_\theta(\rvx_t)] + \E_{q(\rvx_s|\rvx_0)}[\log p_\theta(\rvx_0| \rvx_s)] - \KL[q(\rvx_s|\rvx_t,\rvx_0) \Vert p_\theta(\rvx_s|\rvx_t)] ,\label{eq:vlb_partial}
\end{aligned}
$}
\end{equation}
where the first divergence term between the prior and posterior quantifies the quality of the backward denoising process from $T$ to $t$. The second term represents the 
CE loss, $\Ls_s^{\text{CE}}$, which influences the generation quality from time $s$ to time $0$. The final term is given by \eqref{eq:kl_approx}, and contributes to the generation process from $t$ to $s$. (See  Appx. \S\ref{ssec:vlb_partial} for proof.) 

This generalized formulation of the VLB highlights the significance of the CE loss and the alignment between $q(\rvx_s|\rvx_s,\rvx_0)$ and $p_\theta(\rvx_s|\rvx_t)$ at any observed times $s$ and $t$. While CE loss  does not have a coefficient that depends on noise schedules and time, changing $s$ and $t$ or noise schedule (which determines $\bar{\alpha}_t $) during training will greatly impact the scale of the term in \eqref{eq:kl_approx}.
By rendering the loss scaling term independent of time and the noise schedule, the minimization of this adjusted loss concurrently leads to the minimization of the original loss in \eqref{eq:kl_approx} for any given $s$ and $t$. Hence, by removing the sensitive scale $\frac{(1-\mu_{t|s})^2}{q(\rvx_s|\rvx_t, \rvx_0)}$ in \eqref{eq:kl_approx}, we
reformulate the loss as
\begin{align}
   \Ls_t^2 :=  \Vert f^\theta_t(\vx_t)-\vx_0 + \phi_{t|s}  \langle f_t^\theta(\vx_t) -\vx_0 , \vx_t \rangle
    (\vx_t - \vm) \Vert^2_2,
\end{align}
where we can further clip $\phi_{t|s}$ to $\min(1, \phi_{t|s})$ for minimal scaling influence. It is important to note that while we have modified the coefficient to be invariant to the noise schedule and time $s$ to effectively minimize \eqref{eq:kl_approx} at any $t$ and $s$, a similar approach to coefficient revision has been previously explored in \cite{DDPM, karras2022elucidating}, primarily  to facilitate an easier optimization by achieving a balance of terms in the loss function.

Overall, we have the final approximated loss as 
$    \tilde{\Ls}_t(\theta) = \Ls_t^2(\theta) + \Ls_t^{CE}(\theta) \;.$We find that in practice this loss is much easier to optimize than the original exact negative VLB for more complicated distributions, leading to faster convergence and improved generation quality.

\section{Continuous-time Discrete Diffusion}
\label{sec:cd}
Despite being simple,  discrete-time diffusion limits the generation process as we can only ``jump back'' through  fixed time points. Recent works generalize continuous-state diffusion models to continuous-time \citep{song2021scorebased}.
This generalization enables great flexibility in backward generation  as one can ``jump back'' through any time in $[0,T]$  with improved sample quality.

Nevertheless, generalizing discrete-state diffusion model from discrete-time to continuous-time is nontrivial, as the score-matching based technique \citep{song2021scorebased} in continuous-state models requires the score function $\nabla_{\rvx}\log p_t(\rvx)$ to be available.  This function, however, is evidently non-existent for discrete distributions.
Recently, \citet{campbell2022continuous} presented the first continuous-time diffusion model for discrete data. 
It formulates the forward process through a Continuous Time Markov Chain (CTMC) 
and aims at learning a reverse CTMC that matches the marginal distribution with the forward CTMC at any time $t$. 
While being theoretically solid, the formulation in \cite{campbell2022continuous} has two problems: 
\textbf{(1)} the negative VLB loss of matching the forward and backward CTMCs is analytically complicated and hard to implement;  
and \textbf{(2)} the exact sampling through the learned backward CTMC is unrealistic.
\citet{campbell2022continuous} propose approximate solutions, which however, trade off computational tractability with unknown errors and sample quality.  
SDDM \cite{sun2023scorebased} takes a different approach with ratio matching \cite{ratiomatch1, ratiomatch2}, which is the generalization of score matching to discrete data. However, it needs a specific model architecture. \cite{lou2024discrete} advances ratio matching, yielding a loss that requires only a single model pass. Interestingly, this loss is similar to ours in formulation, despite originating from a different angle.

In this paper, we build on the CTMC formulation in \citet{campbell2022continuous}, 
and show that the loss can be analytically simplified for nominal data. This simplification also inspires an improved MCMC corrector with closed-form formulation.
For problem \textbf{(2)}, we argue that the difficulty arises from directly using the learned backward CTMC's transition rate. Instead, realizing that the reverse transition rate is computed based on the learned $p_\theta(\rvx_0 | \rvx_t)$, we propose to compute $p_\theta(\rvx_s | \rvx_t)$ through $ p_\theta(\rvx_0 | \rvx_t)$ without using the transition rate matrix. This avoids the approximation error of sampling directly from the backward CTMC and greatly simplifies the generation process.

\textit{Remarkably}, with the new formulation of generation, we show that the continuous-time and discrete-time diffusion models can be unified together, with exactly the same forward diffusion and now also backward generation process. {Moreover, we demonstrate that this unification offers mutual benefits: the continuous-time diffusion can leverage the swift and precise sampling formulation derived from the discrete-time case (as detailed in \S\ref{ssec:reparameter}), while the discrete-time diffusion can utilize the MCMC corrector from the continuous-time scenario (see  Appx. \S\ref{ssec:cont_other} and \S\ref{ssec:mcmc}).}

\subsection{Background: Continuous-Time Markov Chain}
\label{ssec:ctmcbackground}

CTMC generalizes Markov chain from discrete- to continuous-time via the Markov property: $\rvx_{t_1} \ci \rvx_{t_3} \ |\  \rvx_{t_2}$, $\forall t_1 < t_2 < t_3$. \citet{book_CTMC} provides an introduction to time-homogeneous CTMC. 
It can be derived from discrete-time Markov chain by  increasing the number of time stamps $N$ to infinite while keeping the total time $T$ fixed. Specifically, 
we can define $\triangle t = \frac{T}{N}$, $t_i = i \triangle t$, and a discrete-time Markov chain characterized by transition probability $q(\rvx_{t_{i}} | \rvx_{t_{i-1}})$ and transition matrix $Q_{t_i}$ with $[Q_{t_i}]_{jk} = q(\rvx_{t_{i}} =\ve_k | \rvx_{t_{i-1}}=\ve_j )$. 
By setting $N$ to infinite, the transition probability $q(\rvx_{t_{i}} | \rvx_{t_{i-1}})$ converges to 0, hence is not suitable for describing CTMC. Instead, CTMC is fully characterized by its \textit{transition rate} $r_t(\vy| \vx)$, s.t. 
\begin{align}
    r_t(\vy|\vx) =\lim_{N\rightarrow \infty} \frac{q(\rvx_{t_{i}} = \vy| \rvx_{t_{i-1}}=\vx)- \delta_{\vx, \vy}}{t_{i} - t_{i-1}}
    = \lim_{\triangle t \rightarrow 0} \frac{q_{t|t-\triangle t}(\vy | \vx) - \delta_{\vx, \vy}}{\triangle t} \;. \label{eq:r_t limit}
\end{align}
As the name suggests, $r_t(\vy | 
 \vx)$ measures the change rate of the transition probability of moving from state $\vx$ to state $\vy$ at time $t$ in the direction of the process. The corresponding \textit{transition rate matrix} $R_t$ with  $[R_t]_{ij} = r_t(\ve_j| \ve_i)$ fully determines the underlying stochastic process. A CTMC's transition probabilities satisfy the Kolmogorov equations \citep{kolmogorov_equation} (see Appx. \S\ref{ssec:ctmc-intro}), which have unique solution. As \citet{inhomo_CTMC} stated, when $R_{t_1}$ and $R_{t_2}$ commute (i.e. 
 $R_{t_1}R_{t_2} = R_{t_2}R_{t_1} $) for any $t_1,t_2$, the transition probability matrix can be written as 
$
    \overline{Q}_{t|s} = \exp\Big(\int_s^t R_a da \Big) \;, 
$
where $\exp(M):=\sum_{k=0}^{\infty}  \frac{M^k}{k!}$. The commutative property of $R_t$ can be achieved by choosing $R_t = \beta(t) R_b$ where $R_b \in \R^{K\times K}$ is a time-independent base rate matrix.

\subsection{Contrib. 3: Simplified Forward \& Backward CTMCs that Connect to Discrete-Time Case}
\label{ssec:fb_ctmc}
\textbf{Forward CTMC.} Two properties are needed for modeling the forward process: P1) the process can converge to an easy-to-sample stationary distribution at final time $T$; and P2) the conditional marginal distribution $q(\rvx_t | \rvx_0)$ can be obtained analytically for efficient training. As given above, P2) can be achieved by choosing commutative transition rate matrices  with $R_t = \beta(t)R_b$. 

We next show that property P1), i.e. $\lim_{t\rightarrow T}\overline{Q}_{t|0} = \overline{Q}_{T|0} = \1 \vm^\top$ for some stationary distribution $\vm$,  can be achieved by choosing $R_b = \1\vm^\top - I$, which is a valid transition rate matrix with the property  $(-R_b)^2 = (-R_b)$ (see derivation in Appx. \S\ref{ssec:derivation-qbar}). Then we have 
\begin{align}
    \overline{Q}_{t|s} = \exp(\overline{\beta}_{t|s}R_b) = e^{-\overline{\beta}_{t|s}} I + (1-e^{-\overline{\beta}_{t|s}}) \1 \vm^\top. 
    \label{eq:Qbar_ts_continuous}
\end{align}
{\LARGE{$\star$}} \textbf{Unified forward process.} \eqref{eq:Qbar_ts_continuous} has the same formulation as the transition matrix of the discrete-time case in \eqref{eq:Qbar_t_s}, if we set $\overline{\alpha}_{t|s} = \exp(-\overline{\beta}_{t|s}) = \exp(-\int_{s}^t \beta(a)da)$. Thus, \textit{this formulation unifies the forward processes of adding noise for both discrete- and continuous-time discrete diffusion}. With $\lim_{t\rightarrow T} \overline{\alpha}_{t|0} = 0$, or equivalently $\lim_{t\rightarrow T } \int_0^T\beta(a) da = \infty$, we achieve the goal $\overline{Q}_{T|0} = \1\vm^\top$. We use $\overline{\alpha}_{t|s}$ directly in the following sections, i.e. \eqref{eq:Qbar_t_s}. To summarize, for the forward CTMC
\begin{align}
R_t &= \beta(t)(\1\vm^\top - I)\;, \text{and} \quad
\overline{Q}_{t|s} = 
\overline{\alpha}_{t|s} I + (1 - \overline{\alpha}_{t|s}) \1 \vm^\top \;. \label{eq:forward_CTMC}
\end{align}
We can further get vector-form forward rate 
\begin{align}
 r_t(\vx|\cdot) &= R_t \vx = \beta(t) \big(\langle \vx, \vm \rangle \1 - \vx \big), \quad
r_t(\cdot|\vx) = R_t^\top \vx =\beta(t)
\big( \vm - \vx \big) \;. 
\end{align}

\textbf{Backward CTMC.} To generate samples from the target distribution, we have to reverse the process of the forward CTMC. 
Let $\widehat{r}_t$ be the transition rate of the backward CTMC with corresponding matrix $\widehat{R}_t$. When the forward and backward CTMCs are matched exactly, theoretically 
the forward and backward CTMCs have the following relationship (see \citet{campbell2022continuous}'s Proposition 1):
\begin{align}
    \widehat{r}_t(\vx| \vy) = r_t(\vy | \vx) \frac{q_t(\vx) }{q_t(\vy)}, \quad 
 \forall \vx \neq \vy  \;. \label{eq:forward_backward_rate}
\end{align}
However, the marginal distributions $q_t(\vx)$ and $q_t(\vy)$ are intractable analytically, hence we cannot derive the backward CTMC directly from \eqref{eq:forward_backward_rate}. Instead, \citet{campbell2022continuous} parameterize the transition rate $\widehat{r}^\theta_t$,  by observing that $\frac{q_t(\vx) }{q_t(\vy)} = \sum_{\vx_0} \frac{q_{t|0}(\vx | \vx_0)}{q_{t|0}(\vy | \vx_0)} q_{0|t} (\vx_0 | \vy)$\footnote{ As
$q_t(\vx) = \sum_{\vx_0}q_{t|0}(\vx|\vx_0)q_{\text{data}}(\vx_0)$ 
and $q_t(\vx) = \frac{q_{t|0}(\vx| \tilde{\vx}_0 ) q_{\text{data}}(\tilde{\vx}_0) }{q_{0|t}( \tilde{\vx}_0 | \vx) } $.}, as follows
\begin{align}
    \widehat{r}^\theta_t(\vx| \vy) = r_t(\vy | \vx) \sum_{\vx_0} \frac{q_{t|0}(\vx | \vx_0)}{q_{t|0}(\vy | \vx_0)} p^\theta_{0|t} (\vx_0 | \vy). \label{eq:rate_parameterization_original}
\end{align}
Then, $\widehat{r}^\theta_t$ is learned with $\theta$ to minimize the continuous-time negative VLB introduced next. 

\textbf{Negative VLB.} Similar to the discrete-time case, the backward CTMC can be learned by maximizing the VLB for data log-likelihood. Computing VLB for CTMC is nontrivial, and fortunately \citet{campbell2022continuous} has derived (see their Proposition 2) that the negative VLB can be formulated as 
\begin{align}
    T \ \E_{
    \substack{t\sim\text{Uni}(0,T) \\ \vx \sim q(\rvx_t | \rvx_0)  }} 
    \Big[ \sum_{\vz \neq \vx} \widehat{r}_t^\theta(\vz | \vx) - \sum_{\vz \neq \vx}r_t(\vz|\vx)\log \widehat{r}_t^\theta(\vx | \vz) \Big] \;.
    \label{eq:CTMC_VLB_0}
\end{align}
However, \citet{campbell2022continuous}'s original design did not simplify the negative VLB with the parameterization of $\widehat{r}^\theta_t$ in \eqref{eq:rate_parameterization_original}, making the implementation nontrivial and inefficient. In this section, we show that their formulation can be greatly simplified to a closed-form evaluation.

To simplify \eqref{eq:CTMC_VLB_0}, we introduce $g^\theta_t(\vx | \vy)$ such that $\widehat{r}^\theta_t(\vx| \vy) = r_t(\vy | \vx)  g^\theta_t(\vx | \vy)$, with 
\begin{align}
    g^\theta_t(\vx | \vy) := \sum_{\vx_0} \frac{q_{t|0}(\vx | \vx_0)}{q_{t|0}(\vy | \vx_0)} p^\theta_{0|t} (\vx_0 | \vy) \approx  \frac{q_t(\vx) }{q_t(\vy)} \;,
\end{align}
which is the estimator of the marginal probability ratio. 
\begin{proposition}
The vector form parameterization of $g^\theta_t(\vx | \vy)$ can be simplified analytically as:
\begin{align}
    g_t^\theta(\cdot| \vy) = \Big[ 
        \big(1 - \frac{\overline{\alpha}_{t|0} \langle f_t^\theta(\vy), \vy \rangle }{\overline{\alpha}_{t|0} + (1-\overline{\alpha}_{t|0})\langle \vy, \vm \rangle }\big) \vm +  
        \frac{\overline{\alpha}_{t|0}}{1-\overline{\alpha}_{t|0}} f_t^\theta(\vy)
    \Big] \odot \frac{\1 - \vy}{\langle \vy, \vm \rangle}   + \vy 
\label{eq:vector_g}
\end{align}
\vspace{-0.2in}
\end{proposition}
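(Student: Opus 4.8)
The plan is to collapse the sum over $\vx_0$ defining $g^\theta_t(\vx|\vy)$ into a per-coordinate scalar identity and then repackage it in vector form. First I would make the forward marginal explicit: applying $\overline{Q}_{t|0} = \overline{\alpha}_{t|0} I + (1-\overline{\alpha}_{t|0})\1\vm^\top$ from \eqref{eq:forward_CTMC} to the one-hot vector $\vx$ gives $q_{t|0}(\vx|\vx_0) = \vx_0^\top \overline{Q}_{t|0}\vx = \overline{\alpha}_{t|0}\langle \vx_0,\vx\rangle + (1-\overline{\alpha}_{t|0})\langle \vm,\vx\rangle$. The crucial observation for nominal (one-hot) data is that $\langle \vx_0,\vx\rangle = \delta_{\vx_0,\vx}$, so both the numerator $q_{t|0}(\vx|\vx_0)$ and the denominator $q_{t|0}(\vy|\vx_0)$ of the summand take only two possible values, according to whether $\vx_0$ equals $\vx$ (resp. $\vy$) or not.

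Next I would fix an off-diagonal coordinate $\vx\neq\vy$ and split $\sum_{\vx_0} \frac{q_{t|0}(\vx|\vx_0)}{q_{t|0}(\vy|\vx_0)} p^\theta_{0|t}(\vx_0|\vy)$ into three groups, $\vx_0=\vy$, $\vx_0=\vx$, and $\vx_0\notin\{\vx,\vy\}$, writing $p^\theta_{0|t}(\vx_0|\vy) = \langle f_t^\theta(\vy),\vx_0\rangle$. Only the $\vx_0=\vy$ term carries the ``diagonal'' denominator $\overline{\alpha}_{t|0} + (1-\overline{\alpha}_{t|0})\langle\vm,\vy\rangle$, while the other two groups share the denominator $(1-\overline{\alpha}_{t|0})\langle\vm,\vy\rangle$. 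Using $\sum_{\vx_0}\langle f_t^\theta(\vy),\vx_0\rangle = \1^\top f_t^\theta(\vy) = 1$ to eliminate the sum over $\vx_0\notin\{\vx,\vy\}$, the two shared-denominator groups collapse to $\frac{\overline{\alpha}_{t|0}\langle f_t^\theta(\vy),\vx\rangle + (1-\overline{\alpha}_{t|0})\langle\vm,\vx\rangle(1-\langle f_t^\theta(\vy),\vy\rangle)}{(1-\overline{\alpha}_{t|0})\langle\vm,\vy\rangle}$, leaving a single extra fraction $\frac{(1-\overline{\alpha}_{t|0})\langle\vm,\vx\rangle\langle f_t^\theta(\vy),\vy\rangle}{\overline{\alpha}_{t|0}+(1-\overline{\alpha}_{t|0})\langle\vm,\vy\rangle}$ from the $\vx_0=\vy$ term.

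The main obstacle, and really the only genuine computation, is reconciling these two fractions, whose denominators $(1-\overline{\alpha}_{t|0})\langle\vm,\vy\rangle$ and $\overline{\alpha}_{t|0}+(1-\overline{\alpha}_{t|0})\langle\vm,\vy\rangle$ differ. I expect a partial-fractions-style cancellation: after separating the ``bare'' piece $\langle\vm,\vx\rangle/\langle\vm,\vy\rangle$, the residual coefficient of $\langle\vm,\vx\rangle\langle f_t^\theta(\vy),\vy\rangle$ should reproduce exactly the factor $-\tfrac{1}{\langle\vm,\vy\rangle}\cdot\frac{\overline{\alpha}_{t|0}}{\overline{\alpha}_{t|0}+(1-\overline{\alpha}_{t|0})\langle\vm,\vy\rangle}$ implied by the target. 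Concretely, writing $a=\overline{\alpha}_{t|0}$ and $m_y=\langle\vm,\vy\rangle$, the crux is the identity $-\frac{1}{m_y} + \frac{1-a}{a+(1-a)m_y} = -\frac{1}{m_y}\cdot\frac{a}{a+(1-a)m_y}$, both sides equaling $\frac{-a}{m_y[a+(1-a)m_y]}$; this is precisely where the two denominators merge.

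Finally I would repackage the scalar result as the stated vector identity. Reading $\langle\vm,\vx\rangle$ and $\langle f_t^\theta(\vy),\vx\rangle$ as the $\vx$-coordinates of $\vm$ and $f_t^\theta(\vy)$, the off-diagonal coordinates equal the bracketed vector $\big(1 - \frac{\overline{\alpha}_{t|0}\langle f_t^\theta(\vy),\vy\rangle}{\overline{\alpha}_{t|0}+(1-\overline{\alpha}_{t|0})\langle\vy,\vm\rangle}\big)\vm + \frac{\overline{\alpha}_{t|0}}{1-\overline{\alpha}_{t|0}}f_t^\theta(\vy)$ masked by $\odot\frac{\1-\vy}{\langle\vy,\vm\rangle}$, since $(\1-\vy)_{\vx}=1$ for $\vx\neq\vy$ and the prefactor $1/\langle\vm,\vy\rangle$ matches. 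The diagonal coordinate $\vx=\vy$ is handled uniformly and trivially: there the summand ratio is identically $1$, so $g^\theta_t(\vy|\vy)=\sum_{\vx_0}p^\theta_{0|t}(\vx_0|\vy)=1$, which is exactly what the masking term (which vanishes at $\vy$) together with the additive $\vy$ supply. This completes the match to \eqref{eq:vector_g}.
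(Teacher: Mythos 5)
Your proposal is correct and follows essentially the same route as the paper's own derivation: expand $q_{t|0}$ via $\overline{Q}_{t|0}=\overline{\alpha}_{t|0}I+(1-\overline{\alpha}_{t|0})\1\vm^\top$, split the sum over $\vx_0$ into the three cases $\vx_0=\vy$, $\vx_0=\vx$, and $\vx_0\notin\{\vx,\vy\}$, use $\1^\top f_t^\theta(\vy)=1$ to collapse the last group, merge the two denominators by exactly the algebraic identity you state, and finish by vectorizing with the $(\1-\vy)$ mask plus the additive $\vy$ for the diagonal case $g^\theta_t(\vy|\vy)=1$. The only cosmetic difference is bookkeeping (the paper first rescales the case ratios by $1-\overline{\alpha}_{t|0}$ and combines all three terms at once), so no substantive comparison is needed.
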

The proof is in Appx. \S\ref{ssec:derivation-gt}, with its  extension to multi-element  $g_t^{\theta,d}(\cdot|\vy^{1:D})$  in \eqref{eq:vector_g_multidim}.

\subsection{Contrib. 4: Simplification of Continuous-time Negative VLB}
\label{ssec:cont_loss_approx}

As the derivation is much harder in multi-element case, we work on it directly and single-element can be induced as a special case. Given  $\vx^{1:D}$, let $\vx^{\backslash d}$ represent $\vx^{1:D\backslash d}$, i.e. the object without $d$-th element.
As we assume the forward processes are independent for different elements, $r_t^d$ represents the transition rate of the forward CTMC process at the $d$-th element. 
\begin{proposition}\label{prop:ctmc_vlb_final}
The negative VLB in
\eqref{eq:CTMC_VLB_0} in multi-element case can be simplified as 
\begin{equation}
\scalebox{0.84}{$
\begin{aligned}
T\ \E_{\substack{t\sim\text{Uni}(0,T) \\ \vx^{1:D} \sim q_{t|0}(\cdot|\vx_0^{1:D}) 
\\\vz^{1:D} \sim S_t(\cdot | \vx^{1:D})}} 
\Big[ 
\sum_{d=1}^D r^d_t(\vx^d|\cdot)^\top  g_t^{\theta,d}(\cdot| \vx^{1:D}) -
\frac{1}{\gM_{S_t}(\vz^{1:D}|\vx^{1:D}_0)}
\sum_{d=1}^D \frac{\1^\top
[q_{t|0}(\cdot |\vx_0^d) \odot 
r_t^d(\vz^d | \cdot) \odot
\log g_t^{\theta,d}(\cdot | \vz^{1:D})]
}{q_{t|0}(\vz^d|\vx_0^d)} 
\Big] \label{eq:ctmc_vlb_final}
\end{aligned}
$}
\end{equation}
where $S_t(\vz^{1:D}|\vx^{1:D})$ is any unnormalized distribution (see \eqref{eq:St} in Appx.) of sampling the auxiliary  variable $\vz^{1:D}$ from $\vx^{1:D}$. The auxiliary variable is introduced to avoid multiple passes of the model for computing the second term of \eqref{eq:CTMC_VLB_0}. $M_{S_t}$ is a normalization scalar (see \eqref{eq:M_St} in Appx.) that only depends on $S_t, \vz^{1:D},\text{and } \vx_0^{1:D}$.  
\end{proposition}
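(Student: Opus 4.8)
The plan is to start from the CTMC negative VLB in \eqref{eq:CTMC_VLB_0}, insert the factored parameterization $\widehat{r}_t^\theta(\vx|\vy) = r_t(\vy|\vx)\,g_t^\theta(\vx|\vy)$ together with the closed form of $g_t^\theta$ from \eqref{eq:vector_g}, and exploit that the multi-element forward CTMC is independent across coordinates. Independence forces the rate to permit only single-coordinate jumps, so every neighbor $\vz^{1:D}$ of $\vx^{1:D}$ (those with $\widehat{r}_t^\theta(\vz|\vx)\neq 0$) differs from $\vx^{1:D}$ in exactly one coordinate $d$, and $q_{t|0}(\vx^{1:D}|\vx_0^{1:D})=\prod_{d'}q_{t|0}(\vx^{d'}|\vx_0^{d'})$. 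I would treat the two summands of \eqref{eq:CTMC_VLB_0} separately while keeping the outer $\E_{t,\vx^{1:D}\sim q_{t|0}}$ throughout; the claimed identity then holds up to an additive $\theta$-independent constant, which is harmless for training.

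For the first summand, writing $\sum_{\vz\neq\vx}\widehat{r}_t^\theta(\vz|\vx)=\sum_d\sum_{\vz^d\neq\vx^d} r_t^d(\vx^d|\vz^d)\,g_t^{\theta,d}(\vz^d|\vx^{1:D})$ and collecting each per-coordinate sum into an inner product yields $\sum_{d=1}^D r_t^d(\vx^d|\cdot)^\top g_t^{\theta,d}(\cdot|\vx^{1:D})$, matching the first term of \eqref{eq:ctmc_vlb_final}. The only subtlety is the diagonal index $\vz^d=\vx^d$ that the vector inner product silently includes; since $g_t^{\theta,d}(\vx^d|\vx^{1:D})=1$ by \eqref{eq:vector_g}, it contributes $r_t^d(\vx^d|\vx^d)$, a $\theta$-independent quantity that I absorb into the constant.

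The second summand is the main work. Substituting the parameterization into $-\sum_{\vz\neq\vx}r_t(\vz|\vx)\log\widehat{r}_t^\theta(\vx|\vz)$ splits the logarithm as $\log r_t(\vz|\vx)+\log g_t^\theta(\vx|\vz)$; the first piece is $\theta$-independent and is dropped. The remainder is problematic because $\log g_t^\theta(\vx|\vz)$ evaluates the network at each neighbor $\vz$, i.e. at $D(K-1)$ distinct inputs. The key manipulation is to \emph{swap the summation order}: expand the expectation over $\vx\sim q_{t|0}$ as a sum and re-index so that the model-input point $\vz^{1:D}=(\vx^{\backslash d},\vz^d)$ is the outer variable and $\vx^d$ the inner one. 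Because $\vz^{1:D}$ no longer depends on the inner index, the inner sum over $\vx^d$ collapses to the single-pass inner product
\[
\scalebox{0.9}{$\displaystyle \sum_{\vx^d}q_{t|0}(\vx^d|\vx_0^d)\,r_t^d(\vz^d|\vx^d)\log g_t^{\theta,d}(\vx^d|\vz^{1:D})=\1^\top\big[q_{t|0}(\cdot|\vx_0^d)\odot r_t^d(\vz^d|\cdot)\odot\log g_t^{\theta,d}(\cdot|\vz^{1:D})\big],$}
\]
where the diagonal $\vx^d=\vz^d$ vanishes automatically since $\log g_t^{\theta,d}(\vz^d|\vz^{1:D})=\log 1=0$, so—unlike the first summand—no constant needs tracking here. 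Using the factorization $q_{t|0}(\vz^{\backslash d}|\vx_0^{\backslash d})=q_{t|0}(\vz^{1:D}|\vx_0^{1:D})/q_{t|0}(\vz^d|\vx_0^d)$ converts the leftover weight on the shared coordinates into the factor $1/q_{t|0}(\vz^d|\vx_0^d)$ and exposes the global marginal $q_{t|0}(\vz^{1:D}|\vx_0^{1:D})$ out front.

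At this point the second summand reads $-\sum_{\vz^{1:D}}q_{t|0}(\vz^{1:D}|\vx_0^{1:D})\,H(\vz^{1:D})$, where $H$ sums over $d$ the inner products divided by $q_{t|0}(\vz^d|\vx_0^d)$; this is exactly $\E_{\vz^{1:D}\sim q_{t|0}(\cdot|\vx_0^{1:D})}[H]$. The final step replaces direct sampling of $\vz$ from the forward marginal by the auxiliary proposal $S_t(\vz^{1:D}|\vx^{1:D})$ of \eqref{eq:St}, so $\vz$ is drawn conditionally on the already-sampled $\vx$ while keeping a single network evaluation; the normalizer $\gM_{S_t}(\vz^{1:D}|\vx_0^{1:D})$ of \eqref{eq:M_St} is chosen precisely so that $\E_{\vx^{1:D}\sim q_{t|0},\,\vz^{1:D}\sim S_t(\cdot|\vx^{1:D})}[\,\cdot/\gM_{S_t}\,]$ reproduces that expectation without bias. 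I expect this last step—pinning down $S_t$ and verifying that $\gM_{S_t}$ makes the reweighted single-sample estimator exactly equal to $\E_{\vz\sim q_{t|0}}[H]$—to be the main obstacle, alongside the careful neighbor/coordinate bookkeeping in the summation swap; everything else is substitution plus the factorization identities already available from \eqref{eq:forward_CTMC} and \eqref{eq:vector_g}.
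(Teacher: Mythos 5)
Your route is genuinely different from the paper's, and the parts you carried out are sound. The paper never passes through the single-pass form: it introduces $S_t$ at the outset by rewriting the neighbor sum $\sum_{\vz^{1:D}\neq\vx^{1:D}}$ in the second term of \eqref{eq:CTMC_VLB_0} as an expectation over $\vz^{1:D}\sim S_t(\cdot|\vx^{1:D})$, then swaps the order of expectation by computing the conditional posterior $q_{S_t}(\vx^{1:D}|\vx_0^{1:D},\vz^{1:D})$ via Bayes' rule, and $\gM_{S_t}$ emerges as the normalizing denominator of that posterior. You instead perform the coordinate re-indexing directly under $\E_{\vx^{1:D}\sim q_{t|0}}$, which collapses the second term to $\E_{\vz^{1:D}\sim q_{t|0}(\cdot|\vx_0^{1:D})}[H(\vz^{1:D})]$ with a single network evaluation; that intermediate statement is exactly the paper's Proposition \ref{prop:ctmc_vlb_perfect}, which the paper only derives \emph{afterwards} from \eqref{eq:ctmc_vlb_final}. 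Your Term 1 treatment (diagonal absorbed into a $\theta$-independent constant, matching the paper's ``$+\,\text{const.}$''), the dropping of the $\log r_t$ piece, and the observation that the diagonal of the second term vanishes because $\log g_t^{\theta,d}(\vz^d|\vz^{1:D})=\log 1=0$ are all correct.

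The one genuine gap is the step you yourself flag as the main obstacle: you assert, but never verify, that dividing by $\gM_{S_t}$ makes the $S_t$-sampled estimator unbiased for $\E_{\vz^{1:D}\sim q_{t|0}}[H]$. That claim is precisely where the specific form \eqref{eq:M_St} of $\gM_{S_t}$ enters; without it you have proved Proposition \ref{prop:ctmc_vlb_perfect} but not the stated proposition. The verification does close in a few lines (it is the computation the paper performs as \eqref{eq:specific_mt_for_single_pass}, used there in the reverse direction): by the single-coordinate support of $S_t$ in \eqref{eq:St} and the factorization of $q_{t|0}$ across elements, the marginal of $\vz^{1:D}$ under the composite sampling ($\vx^{1:D}\sim q_{t|0}$, then $\vz^{1:D}\sim S_t(\cdot|\vx^{1:D})/\gS_t(\vx^{1:D})$) satisfies
\begin{align}
\sum_{\vx^{1:D}} q_{t|0}(\vx^{1:D}|\vx_0^{1:D})\,\frac{S_t(\vz^{1:D}|\vx^{1:D})}{\gS_t(\vx^{1:D})}
&= \sum_{d=1}^D\sum_{\vy^d\neq\vz^d} q_{t|0}(\vy^d\circ\vz^{\backslash d}|\vx_0^{1:D})\,\frac{S_t^d(\vz^d|\vy^d)}{\gS_t(\vy^d\circ\vz^{\backslash d})} \nonumber\\
&= q_{t|0}(\vz^{1:D}|\vx_0^{1:D})\sum_{d=1}^D\sum_{\vy^d\neq\vz^d}\frac{q_{t|0}(\vy^d|\vx_0^d)}{q_{t|0}(\vz^d|\vx_0^d)}\,\frac{S_t^d(\vz^d|\vy^d)}{\gS_t(\vy^d\circ\vz^{\backslash d})}
= q_{t|0}(\vz^{1:D}|\vx_0^{1:D})\,\gM_{S_t}(\vz^{1:D}|\vx_0^{1:D}) \;,
\end{align}
so the factor $1/\gM_{S_t}(\vz^{1:D}|\vx_0^{1:D})$ cancels the sampling distortion exactly and $\E_{\vx^{1:D},\vz^{1:D}}\big[H(\vz^{1:D})/\gM_{S_t}(\vz^{1:D}|\vx_0^{1:D})\big]=\E_{\vz^{1:D}\sim q_{t|0}}\big[H(\vz^{1:D})\big]$, with no leftover constant. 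With this lemma supplied your proof is complete, and your ordering buys something the paper's does not: both \eqref{eq:ctmc_vlb_final} and the single-pass loss of Proposition \ref{prop:ctmc_vlb_perfect} fall out of one derivation, whereas the paper's posterior computation is considerably messier and still requires a separate argument (Appendix \S\ref{apdx:proof_sedd}) to reach the single-pass form.
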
 

The detailed proof is in Appx. \S\ref{ssec:cont-vlb-multi-dim}. This formulation simplifies and generalizes the result in \citet{campbell2022continuous}, such that any $S_t$ can be used for introducing the auxiliary variable $\vz_{1:D}$. Importantly, \eqref{eq:ctmc_vlb_final} shows that changing $S_t$ only affects a scalar weight. Computing the loss requires two passes of the model for $\vz^{1:D}$ and $\vx^{1:D}$ separately. Now we show that it can be further simplified to only a single pass of the model.

\begin{proposition}\label{prop:ctmc_vlb_perfect}
The loss in \eqref{eq:ctmc_vlb_final} can be further simplified as 
\begin{equation}
\scalebox{1}{$
\begin{aligned}
    T\ \E_{\substack{t\sim\text{Uni}(0,T) \\ \vx^{1:D} 
    \sim q_{t|0}(\cdot|\vx_0^{1:D}) }} 
\Big[ 
\sum_{d=1}^D r^d_t(\vx^d|\cdot)^\top 
\Big( g_t^{\theta,d}(\cdot| \vx^{1:D}) 
- 
\frac{
q_{t|0}(\cdot |\vx_0^d) \odot 
\log g_t^{\theta,d}(\cdot | \vx^{1:D})
}{q_{t|0}(\vx^d|\vx_0^d)} \Big) 
\Big] 
\end{aligned}
$}
\end{equation}
\end{proposition}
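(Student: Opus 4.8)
The plan is to derive Proposition~\ref{prop:ctmc_vlb_perfect} as the special case of Proposition~\ref{prop:ctmc_vlb_final} in which the auxiliary variable is collapsed onto $\vx^{1:D}$ itself. Proposition~\ref{prop:ctmc_vlb_final} already establishes that the negative VLB in \eqref{eq:ctmc_vlb_final} holds for \emph{any} unnormalized sampling distribution $S_t(\vz^{1:D}\mid\vx^{1:D})$, and that the choice of $S_t$ enters only through the scalar weight $\gM_{S_t}$. Since the two passes of the model---one on $\vx^{1:D}$ in the first term and one on the sampled $\vz^{1:D}$ inside $\log g_t^{\theta,d}(\cdot\mid\vz^{1:D})$---are the sole source of the extra cost, the goal is to pick $S_t$ so that $\vz^{1:D}$ never leaves $\vx^{1:D}$, leaving a single evaluation of $g_t^{\theta,d}(\cdot\mid\vx^{1:D})$.

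First I would set $S_t(\vz^{1:D}\mid\vx^{1:D})=\delta_{\vz^{1:D},\vx^{1:D}}$, the point mass at $\vx^{1:D}$, which is admissible because Proposition~\ref{prop:ctmc_vlb_final} permits any unnormalized distribution. Under this choice the outer expectation over $\vz^{1:D}$ collapses, so every occurrence of $\vz^{1:D}$ is replaced by $\vx^{1:D}$; in particular $g_t^{\theta,d}(\cdot\mid\vz^{1:D})\to g_t^{\theta,d}(\cdot\mid\vx^{1:D})$, $r_t^d(\vz^d\mid\cdot)\to r_t^d(\vx^d\mid\cdot)$, and $q_{t|0}(\vz^d\mid\vx_0^d)\to q_{t|0}(\vx^d\mid\vx_0^d)$. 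The second key step is to evaluate the normalizer $\gM_{S_t}$ at this degenerate $S_t$ directly from its definition and check that it equals $1$, so that the importance weight $1/\gM_{S_t}$ disappears. Finally I would invoke the elementary symmetry of the element-wise triple product, $\1^\top(\va\odot\vb\odot\vc)=\vb^\top(\va\odot\vc)$, to rewrite
\[
\frac{\1^\top\big[q_{t|0}(\cdot\mid\vx_0^d)\odot r_t^d(\vx^d\mid\cdot)\odot\log g_t^{\theta,d}(\cdot\mid\vx^{1:D})\big]}{q_{t|0}(\vx^d\mid\vx_0^d)}
=
r_t^d(\vx^d\mid\cdot)^\top\frac{q_{t|0}(\cdot\mid\vx_0^d)\odot\log g_t^{\theta,d}(\cdot\mid\vx^{1:D})}{q_{t|0}(\vx^d\mid\vx_0^d)},
\]
which is precisely the second summand of Proposition~\ref{prop:ctmc_vlb_perfect}; the first summand is unchanged, since it already conditions on $\vx^{1:D}$.

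The main obstacle will be the middle step: confirming that the auxiliary-variable machinery of Proposition~\ref{prop:ctmc_vlb_final} stays valid and that $\gM_{S_t}=1$ at the degenerate point mass, rather than becoming ill-defined. Because $\gM_{S_t}$ was constructed as the normalization that makes the importance-weighted single sample unbiased for the full neighbor sum, I expect its definition to carry a factor that cancels $S_t(\vz^{1:D}\mid\vx^{1:D})$ and evaluates to $1$ exactly when all the mass sits at $\vz^{1:D}=\vx^{1:D}$; verifying this cancellation---and that no term in the derivation of \eqref{eq:ctmc_vlb_final} silently assumed $\vz^{1:D}\neq\vx^{1:D}$, which would have forced a division by a vanishing quantity---is where the real work lies. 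Once that is settled, the collapse of the expectation and the triple-product rewriting are routine.
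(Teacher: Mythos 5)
The obstacle you flagged at the end is fatal, and it is where your route breaks: the derivation of \eqref{eq:ctmc_vlb_final} does not \emph{silently} assume $\vz^{1:D}\neq\vx^{1:D}$ --- it assumes it explicitly. In \eqref{eq:St} the auxiliary distribution is required to be nonzero if and only if $\vz^{1:D}$ differs from $\vx^{1:D}$ in exactly one element (the prefactor $(1-\delta_{\vz^{1:D},\vx^{1:D}})$ is built into its definition), because $S_t$ serves as an importance-sampling distribution for the sum $\sum_{\vz^{1:D}\neq\vx^{1:D}} r_t(\vz^{1:D}|\vx^{1:D})\log\hat{r}^\theta_t(\vx^{1:D}|\vz^{1:D})$, whose support is precisely the set of single-element neighbors of $\vx^{1:D}$. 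A point mass at $\vx^{1:D}$ has support disjoint from that set, so the importance-sampling step that produces \eqref{eq:ctmc_vlb_final} is no longer valid for it; concretely, within the product form of \eqref{eq:St} the diagonal choice $S_t^d(\vz^d|\vx^d)=\delta_{\vz^d,\vx^d}$ forces $S_t\equiv 0$, hence $\gS_t(\vx^{1:D})=0$, and $\gM_{S_t}$ in \eqref{eq:M_St} becomes $0/0$ rather than $1$. So the ``any $S_t$'' clause of Proposition \ref{prop:ctmc_vlb_final} cannot be stretched to your degenerate choice, and the collapse of the expectation never gets off the ground.

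The paper's actual argument keeps $S_t$ admissible and \emph{arbitrary}, and instead shows that the weight cancels in distribution. Writing $\tilde{q}_{t|0}(\vz^{1:D}|\vx_0^{1:D})$ for the marginal law of the auxiliary variable (sample $\vx^{1:D}\sim q_{t|0}$, then $\vz^{1:D}\sim S_t/\gS_t$), a direct computation gives
\begin{equation*}
\tilde{q}_{t|0}(\vz^{1:D}|\vx_0^{1:D}) \;=\; q_{t|0}(\vz^{1:D}|\vx_0^{1:D})\,\gM_{S_t}(\vz^{1:D}|\vx_0^{1:D}) \;.
\end{equation*}
Hence the factor $1/\gM_{S_t}$ in the second term of \eqref{eq:ctmc_vlb_final} exactly cancels against this density ratio, and the double expectation over $(\vx^{1:D},\vz^{1:D})$ equals a single expectation of the unweighted integrand with $\vz^{1:D}\sim q_{t|0}$; renaming $\vz^{1:D}$ as $\vx^{1:D}$ then merges the two terms into the single-pass loss. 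The correct mechanism is therefore not that $\vz^{1:D}$ collapses onto $\vx^{1:D}$, but that after the weight cancellation $\vz^{1:D}$ has the \emph{same marginal law} as $\vx^{1:D}$, which is what permits identifying the two model evaluations. Your final triple-product identity $\1^\top(\va\odot\vb\odot\vc)=\vb^\top(\va\odot\vc)$ is fine, but it is the only step of the proposal that survives as stated.
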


The detailed proof can be found in Appendix \S\ref{apdx:proof_sedd}. Note that this loss requires only a single forward pass of the model, making it extremely efficient. Additionally, this loss aligns with the DWDSE loss (while presented in single-element case) in \cite{lou2024discrete}, which is derived from the different concrete score matching direction. This indicates that the VLB is essentially the same as the ratio matching loss, as also demonstrated in the continuous-state setting, such that the score matching \cite{song2021scorebased} is shown to be equivalent to VLB computation in \cite{DDPM}.


\subsection{Contrib. 5: Fast Backward Sampling \& Unification of Discrete Diffusion} \label{ssec:CTMC_reverse_sampling}

\citet{campbell2022continuous} proposes to use the learned transition rate $\widehat{r}^\theta_t$ of the backward CTMC  to sample reversely for the target distribution $p_{\text{data}}(\rvx_0)$. 
However, different from the forward CTMC where all elements are sampled independently, the backward CTMC processes for all elements are coupled together and do not have the closed-form transition probability derived from the learned transition rate. Direct and exact sampling from a CTMC uses the algorithm by \citet{gillespie1977exact},  which is intractable for multi-element objects.  \citet{campbell2022continuous} proposes to use tau-leaping \citep{gillespie2001approximate} that approximately samples all transitions from time $t$ to $t-\tau$,  assuming $R_t^{1:D}$ fixed during the time period. While being tractable, it introduces unknown errors that needs correction process.


 We propose to avoid the costly sampling from using the transition rate of backward CTMC. We first observe that the estimated transition rate $\widehat{r}^\theta_t$ is essentially derived from the estimated $p^\theta_{0 |t} (\rvx_0| \rvx_t)$. Hence using $\widehat{r}^\theta_t$ is equivalent to using $p^\theta_{0 |t} (\rvx_0| \rvx_t)$ in an indirect way.
 We realize that the transition probability $q_{s|t}(\rvx_s| \rvx_t)$ can be computed easily from $p^\theta_{0 |t} (\rvx_0| \rvx_t)$ directly, as shown in \eqref{eq:s|t}. With the help of \eqref{eq:s|t}, we  can sample $\rvx_{0}$ or equivalently $\rvx_{0|T}$ through sampling $\rvx_{t_{n-1} | t_{n}},...,\rvx_{t_1|t_2}, \rvx_{t_0|t_1}$ sequentially,  with any $\{t_i\}_{i=0}^n$ that satisfies $0$=$t_0<t_1<...<t_n$=$T$.  Although \eqref{eq:s|t} is derived for discrete-time case, it applies directly in continuous-time case, thanks to the unification of the forward process for discrete- and continuous-time diffusion (\S\ref{ssec:fb_ctmc}). Hence, discrete-time and continuous-time diffusion also have the same \textbf{unified backward generation process}.

 {\LARGE{$\star$}} \textbf{Unified Discrete Diffusion.}
All in all, through a series of mathematical simplifications, we have shown that both discrete\&continuous-time discrete diffusion share  \textbf{(1)}  the same forward diffusion process; \textbf{(2)}  the same parameterization for learning $p_{0|t}^\theta$; and  \textbf{(3)} the same backward denoising 
process. In light of our reformulations, we propose \method, a novel \underline{U}nified and \underline{S}implified \underline{D}iscrete \underline{D}enoising \underline{D}iffusion model.  
Notably, \method utilizes the same source code for both discrete- and continuous-time,  up to a single alteration in the loss function during training, and a shared sample generation process (resp. Algo. \ref{alg:overall} and \ref{alg:generation} in Appx. \S\ref{ssec:alg-unify}).


\section{Experiments}
\label{sec:exp}
Discrete diffusion has limited
, no established guidelines on 
model training.
Many prior work optimize a combined VLB and cross-entropy (CE) loss with a fixed weight, without deeper investigation. 
In this work we not only improve VLB loss mathematically, but also empirically explore an extensive testbed of training regimes for discrete diffusion---including various loss combinations, both discrete- and continuous-time training, as well as  varying model sizes---{toward a deeper understanding of which yield tractable optimization and higher generation quality.} 
 \method is a hybrid of our  simplified exact VLB and CE, and \methodsim refers to its approximation with further simplifications (\S\ref{ssec:loss_approx} and \S\ref{ssec:cont_loss_approx}).  \methodce and \methodvlb are variants, resp. with CE or VLB loss \textit{only}.

 \textbf{Baselines.}
We compare \method and variants to  three latest SOTA discrete diffusion models in the literature: \dpm in \cite{D3PM}, \rdm in \cite{zhang2023text} (discrete-time), \tauldr-0 in \cite{campbell2022continuous} , \sddm in \cite{sun2023scorebased} and \sedd in \cite{lou2024discrete} (continuous time).

 \textbf{Training Details.}
Thanks to unification, we can evaluate both discrete- and continuous-time models with both cosine and constant noise schedulers at ease. In \method, we combine VLB and CE with weight  $0.001$ for the latter, following prior literature. For \methodsim, we combine VLB with CE weight $1.0$. As architecture, we parameterize $f_t^{\theta}(\mathbf{x}_t)$ with a sequence transformer model. By varying its depths and widths, we study the impact of model size. The details are described in Appx.  \S\ref{ssec.training_details}.

\subsection{Music Generation}

\textbf{Lakh Pianoroll Datasets.~}
 We evaluate monophonic music generation on \ptwo, the cleaned Lakh pianoroll dataset [\cite{raffel2016learning,dong2017musegan}], containing $6,000$ training and $973$ evaluation (or test) sequences of $256$ notes each. Here we perform conditional generation; given the first $32$ notes, the models are required to generate the rest $224$ notes.  Interestingly, we find that some (124)  evaluation sequences share the \textit{same} first $32$ notes as one or more training sequences. Such repetition gives us a unique opportunity to investigate what-we-call ``parroting''---the phenomenon that  models are simply re-playing the exact training data during generation.
To that end we create \pone, a subset consisting only of these 124 evaluation sequences paired with their first-32-note-matching training sequences. Details are in Appx. \S\ref{sssec.piano}.

 \textbf{Metrics.~}
We use a number of new metrics for \ptwo: 
2- and 3-gram 
 (1) {Hellinger Distance} (\textdownarrow) and (2) {Proportion of Outliers} (\textdownarrow), in addition to previously used 1-gram counterparts based only on \textit{single} note appearances  in \cite{campbell2022continuous}; 
 (3) {Diverse Edit Distance} (\textuparrow), which accounts for the creativity/novelty across multiple generated samples. On \pone, we also report  (4) {Train-to-Test Ratios} (\textuparrow) for $\{$1,2,3$\}$-gram Hellinger as well as Proportion of Outliers, 
which compare the weighted distance of a generated sample to its matching training vs. test sequence that share the same first $32$ notes. Such ratios quantify the extent of ``parroting''; a smaller ratio 
indicates a higher (lower) degree of memorization (generalization).  
Calculation details of all metrics are given in  Appx. \S\ref{sssec.music_metrics}.

\textbf{Results.~} 
Table \ref{tab:piano_ngram} shows that \method and its variants perform better than the baseline methods across metrics. An exception is \dpm's Diverse Edit Distance, which  trades-off high novelty with low generation quality. \methodce, while performing well w.r.t. 1-gram metrics, does not compete with \methodvlb and \method w.r.t. $\{$2,3$\}$-gram metrics, which indicates that \textit{CE only} loss captures the least sequential information. Models with combined losses, \method and \methodsim, achieve higher Train-to-Test ratio than pure CE or VLB based ones, suggesting that the combination of two losses can alleviate overfitting, i.e. ``parroting'' the training data. 
We find continuous-time \method to perform better than discrete-time counterparts 
models because its ability of denoising through any timestep.
\begin{table}[ht]
\vspace{-0.2in}
    \centering
    \begin{minipage}{0.68\textwidth}
        \caption{Conditional music gen. quality (3 samples avg.) on \ptwo w.r.t. n-gram Hellinger, n-gram Prop. of Out., and Diverse Edit Dist. Also, Train-to-Test Ratio for on \pone quantifies ``parroting''. \first{First} \& \second{Second} shown in color (Detailed version is in Table \ref{tab:piano_ngram_full}).}
\scalebox{0.70}{
\begin{tabular}{ll|ccc|ccc|c|c}
\toprule
 &  & \multicolumn{3}{c|}{n-gram Hellinger(\textdownarrow)}                   & \multicolumn{3}{c|}{n-gram Prop. of Out.(\textdownarrow)} & Div. Edit & 3g-Prop.  \\ 
& Method & 
1gram & 2gram  & 3gram & 1gram  & 2gram & 3gram &  Dist. (\textuparrow) & 
Ratio(\textuparrow) \\
\midrule
\multirow{4}{*}{\rotatebox{90}{Discrete-time}} &  \dpm  & 0.398  & 0.530 & 0.591 & 0.120 & 0.253 & 0.379 & \first{0.295} & 2.221 \\
& \rdm  & 0.412  & 0.596 & 0.620 & 0.140  & 0.246 & 0.386 & 0.289 & 2.521 \\
\cmidrule{2-10}
& \scalebox{0.78}{\methodce} & 0.375 & 0.483 & 0.574 & 0.107 & 0.209 & 0.303 & 0.047 & 2.888 \\
& \scalebox{0.78}{\methodvlb} & 0.379 &  0.464 & \first{0.542}  & 0.117  & 0.184  & 0.273  & \second{0.082} & 2.863 \\
& \method & 0.377 & 0.469 & 0.552 &    \second{0.107} & 0.186 & 0.286 & 0.064 & \first{3.083} \\
& \methodsim & 0.375  & 0.470  & 0.555 &  0.110   & 0.191  & 0.283  & 0.066 & 2.959 \\
\midrule
\multirow{4}{*}{\rotatebox{90}{Continuous-time}} & \sddm & 0.375& 0.485& 0.577 & 0.110 & 0.205 & 0.340 & 0.060 & 2.901\\
& \tauldr-0 & 0.379 & 0.481 & 0.571 & 0.114 & 0.207 & 0.320  & 0.050 & 2.965 \\
& \sedd & 0.381 & 0.471 & 0.547 & \first{0.105} & \first{0.182} & 0.271 & 0.061 &  3.042 \\
\cmidrule{2-10}
& \scalebox{0.78}{\methodce} & \first{0.373} & 0.483 & 0.577 & 0.116 & 0.222 & 0.346 & 0.043 & 2.637\\
& \scalebox{0.78}{\methodvlb} & 0.376 & \second{0.462} & 0.549 & 0.120 & 0.186  & 0.295 & 0.061 & 2.904 \\
& \method & \second{0.374} & \first{0.462} & \second{0.543} & 0.112 & \second{0.184} & \first{0.270} & 0.066 & \second{3.083}  \\
& \methodsim & 0.374 & 0.462 & 0.549 & 0.114 & 0.184 & \second{0.271} & 0.052 & 2.923 \\
\bottomrule
\end{tabular}
\label{tab:piano_ngram}
}
    \end{minipage}\hfill
    \begin{minipage}{0.29\textwidth}
        \caption{
Image generation quality w.r.t. IS and FID over 50,000 unconditional generated CIFAR10  samples. \first{First} \& \second{Second} shown in color.}
\scalebox{0.75}{
\begin{tabular}{llcc}
\toprule
& Method  & IS (\textuparrow) & FID (\textdownarrow)  \\
\midrule
\multirow{4}{*}{\scalebox{0.90}{\rotatebox{90}{Discrete-time}}} &
  \dpm  & 8.13 & 18.08   \\
  &
  \rdm & 7.16 & 24.23   \\
\cline{2-4}
& \scalebox{0.8}{\methodce} & 9.02 & 12.64 \\
& \methodsim & \first{9.27} & \second{12.07}  \\
& \method & 8.85 & 13.25 \\
\midrule
\multirow{4}{*}{\scalebox{0.90}{\rotatebox{90}{Cont.-time}}} & \sddm & 8.72 & 14.17 \\
& \tauldr  & 8.37 & 17.61 \\
& \sedd & 8.51 & 17.01 \\
\cline{2-4}
& \scalebox{0.8}{\methodce} & \second{9.23} & \first{11.97} \\
& \methodsim & 8.83 & 14.03 \\
& \method & 8.97 & 13.09 \\
\midrule
\multicolumn{2}{l}{VQGAN Recons. } &10.42 &7.68\\
\bottomrule
\end{tabular}
}
\label{tab:vqcifar10}
    \end{minipage}
\vspace{-0.1in}
\end{table}

 \subsection{Image Generation}

\textbf{VQCIFAR10 Dataset.~}
CIFAR10  
  contains $50,000$ training images in continuous values, which we 
 convert to vectors from $64 \times 512$-dimensional quantization hash-code space with a pre-trained VQGAN \cite{esser2020taming}. This conversion allows us to (1) evaluate our methods on \textit{nominal} data by breaking the neighboring orders in the image space, and (2) select a closed-form stationary distribution $\vm$. 
 Details on \cifar dataset are given in Appx.  \S\ref{sssec.vqgan}.

\vspace{-0.05in}
\textbf{Metrics.~} 
After feeding the generated samples through the VQGAN decoder to obtain representations from the discretized space, we measure the {Inception Score (IS)}  (\textuparrow)  and {Frechet Inception Distance (FID)} (\textdownarrow) against the original CIFAR10 dataset. Note that our training set, which are discretized images from  VQGAN, achieves $\text{IS}$$=$$10.42$ and $\text{FID}$$=$$7.68$, which are optimistic limits for generation.

\textbf{Results.~}
Table \ref{tab:vqcifar10} shows that our proposed approaches outperform existing baselines. Discrete-time \dpm and \rdm fall short for the harder image generation task, whereas our simplified discrete-time loss boosts quality significantly. In continuous-time, 
\tauldr with a similar loss to \method falls short due to its complicated generation process, whereas SDDM is most competitive among the baselines, although requires substantial compute resources while being limited to a specialized model architecture.  \methodce achieves competitive IS and FID scores, which validates our finding (recall \eqref{eq:vlb_partial}) that CE loss is essential for diffusion loss minimization. 

\vspace{-0.05in}
{\bf Additional results.} 
We provide memory and runtimes of models in Table \ref{tab:size_time_memory} in Appx. \S \ref{ssec:memory_and_runningtime},
generated images in Fig.\ref{fig:vq_recons} in Appx. \S\ref{ssec:imagesamples}, and ablations of MCMC sampling in Appx. \S\ref{ssec:mcmc}.

\section{Conclusion}
\label{sec:conclusion}
We introduce two fundamental contributions 
for both discrete-time  and continuous-time diffusion for categorical data. First, 
we presented extensive \textit{mathematical simplifications for the loss functions}, including exact closed-form derivations as well as novel easy-to-optimize approximations. Second, we established a \textit{mathematical unification } of discrete-time  and continuous-time discrete diffusion, enabling mutual benefits.
Our proposed approach {\small{\method}} for discrete diffusion achieved significant improvements on established datasets  across a suite of generation quality metrics.

\bibliographystyle{plainnat}
\bibliography{reference}

\clearpage
\appendix
\section{Appendix}

\subsection{Variational Lower Bound Derivation}
\label{ssec:vlbderive}

\begin{align}
    \log &\int p_{\theta}(\rvx_{0:T}) d\rvx_{1:T} \geq \E_{q(\rvx_{1:T}|\rvx_0)} \big[\log p_{\theta}(\rvx_{0:T}) - \log q(\rvx_{1:T}|\rvx_0)\big] \nonumber\\
    &= \E_{q(\rvx_{1:T}|\rvx_0)} \big[
        \log p_{\theta}(\rvx_T) + \sum_{t=1}^T \log \frac{p_{\theta}(\rvx_{t-1}| \rvx_{t} )}{q(\rvx_{t}| \rvx_{t-1} )} 
    \big] \nonumber\\
    & = \E_{q(\rvx_{1:T}|\rvx_0)} \big[
        \log p_{\theta}(\rvx_T) + \log \frac{p_{\theta}(\rvx_{0}| \rvx_{1} )}{q(\rvx_{1}| \rvx_{0} )} +
        \sum_{t=2}^T \log \frac{p_{\theta}(\rvx_{t-1}| \rvx_{t} )}{q(\rvx_{t}| \rvx_{t-1}, \rvx_0 )} 
    \big] \nonumber\\
    & = \E_{q(\rvx_{1:T}|\rvx_0)} \big[
        \log p_{\theta}(\rvx_T) + \log \frac{p_{\theta}(\rvx_{0}| \rvx_{1} )}{q(\rvx_{1}| \rvx_{0} )} +
        \sum_{t=2}^T \log 
        \Bigg (\frac{p_{\theta}(\rvx_{t-1}| \rvx_{t} )}{q(\rvx_{t-1}| \rvx_{t}, \rvx_0 )} \cdot 
        \frac{q(\rvx_{t-1}| \rvx_0 )}{q(\rvx_{t}| \rvx_0 ) }   \Bigg )
    \big] \nonumber\\
    & = \E_{q(\rvx_{1:T}|\rvx_0)} \big[ 
        \log p_{\theta}(\rvx_T) + \log \frac{p_{\theta}(\rvx_{0}| \rvx_{1} )}{q(\rvx_{1}| \rvx_{0} )} + \log \frac{q(\rvx_1 | \rvx_0)}{q(\rvx_T | \rvx_0)} +
        \sum_{t=2}^T \log \frac{p_{\theta}(\rvx_{t-1}| \rvx_{t} )}{q(\rvx_{t-1}| \rvx_{t}, \rvx_0 )}
    \big] \nonumber\\
    & = \E_{q(\rvx_{1:T}|\rvx_0)} \big[ \log p_{\theta}(\rvx_{0}| \rvx_{1} ) + \log \frac{p_{\theta}(\rvx_T)}{q(\rvx_T | \rvx_0)} -
        \sum_{t=2}^T \log \frac{q(\rvx_{t-1}| \rvx_{t}, \rvx_0 )}{p_{\theta}(\rvx_{t-1}| \rvx_{t} )}
    \big] \nonumber\\
    &= \underbrace{\E_{q(\rvx_{1}|\rvx_0)}\big[ \log p_{\theta}(\rvx_{0}| \rvx_{1} )\big]}_{-\Ls_1(\theta)}
      - \sum_{t=2}^T \underbrace{\E_{q(\rvx_{t}|\rvx_0)}\big[ \KL\big(q(\rvx_{t-1} | \rvx_{t}, \rvx_0 ) || p_{\theta}(\rvx_{t-1}| \rvx_{t} \big)  \big]}_{\Ls_t(\theta)}
      - \text{const.} \label{eq:vlb_1}
\end{align}

\begin{figure}[h]
    \centering
    \includegraphics[width=\textwidth]{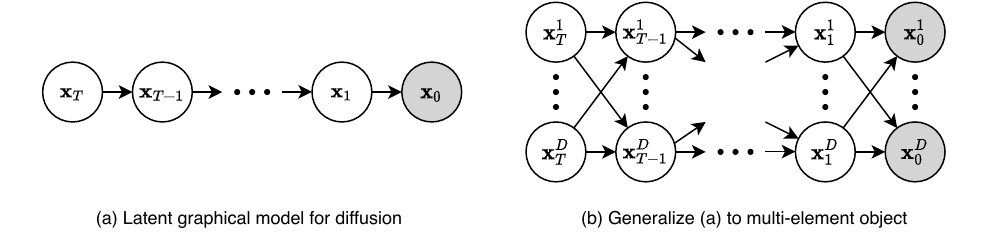}
    \caption{The graphical model diagram for both element-wise diffusion and multi-element diffusion. }
    \label{fig:pgm}
\end{figure}

\subsection{Definition of nominal data}
\label{ssec:nominal}
Nominal data, or categorical data, refers to data that can be categorized but not ranked or ordered. In mathematics, nominal data can be represented as a set of discrete categories or distinct labels without inherent ordering. As there is no ordering inside, operations of comparing categories is meaningless except checking equality. In the real world, most categories of data are nominal in nature, such as nationality, blood type, colors, among others.

\subsection{Derivation of $q(x_{t-1} | x_t, x_0)$}
\label{ssec:t-1|t,0}
First, let us define $\overline{Q}_{t|s} = Q_{s+1}...Q_{t}$. Note that $  \overline{Q}_{t|0} = \overline{Q}_t$ and $\overline{Q}_{t|t-1} = Q_t$. Accordingly, we  can derive the following two equalities.
\begin{align}
    q(\rvx_t | \rvx_s) = \text{Cat}(\rvx_t; \overline{Q}_{t|s}^\top\rvx_s ) \;
\end{align} 
\begin{align}
    q(\rvx_{s} | \rvx_t, \rvx_0) &= \frac{q(\rvx_t | \rvx_{s}) q(\rvx_{s}| \rvx_0)}{q(\rvx_t | \rvx_0)} = \frac{\text{Cat}(\rvx_t; \overline{Q}^\top_{t|s}\rvx_{s})\text{Cat}(\rvx_{s} ; \overline{Q}_{s}^\top\rvx_0)}{\text{Cat}(\rvx_t ; \overline{Q}_t^\top\rvx_0)}
    \nonumber \\
    &= \frac{\rvx_{s}^\top \overline{Q}_{t|s} \rvx_{t} \cdot \rvx_{s}^\top\overline{Q}_{s}^\top\rvx_0}{\rvx_t^\top\overline{Q}_t^\top\rvx_0 } 
    = \rvx_{s}^\top \frac{\overline{Q}_{t|s} \rvx_{t} \odot \overline{Q}_{s}^\top\rvx_0}{\rvx_t^\top\overline{Q}_t^\top\rvx_0 } = \text{Cat}(\rvx_{s};\frac{\overline{Q}_{t|s} \rvx_{t} \odot \overline{Q}_{s}^\top\rvx_0}{\rvx_t^\top\overline{Q}_t^\top\rvx_0 })
\end{align}

Using the formulation of $Q_t$ in \eqref{eq:Qt}, $\overline{Q}_{t|s}$ can be written as 
\begin{align}
    \overline{Q}_{t|s} = \overline{\alpha}_{t|s} I + (1-\overline{\alpha}_{t|s} )\1 \vm^\top  \;
\end{align}
where $\overline{\alpha}_{t|s} = \prod_{i=s+1}^t \alpha_i $. Note that $ \overline{\alpha}_{t} = \overline{\alpha}_{t|s} \overline{\alpha}_{s}$.

Next, we can simplify \eqref{eq:t|s,0} using the above formulations as 
\begin{align}
    &\frac{\overline{Q}_{t|s} \rvx_{t} \odot \overline{Q}_{s}^\top\rvx_0}{\rvx_t^\top\overline{Q}_t^\top\rvx_0 }  = \frac
        {
            \big(\overline{\alpha}_{t|s}\rvx_t + (1-\overline{\alpha}_{t|s}) \langle \vm, \rvx_t \rangle \1\big)\odot \big(\overline{\alpha}_s \rvx_0 + (1-\overline{\alpha}_s) \vm \big)
        }
        {
            \overline{\alpha}_t \langle \rvx_t, \rvx_0 \rangle + (1 - \overline{\alpha}_t) \langle \vm, \rvx_t \rangle 
        }  \nonumber\\ 
    &= \frac
        {
            \overline{\alpha}_t \rvx_t\odot\rvx_0 + (\overline{\alpha}_s - \overline{\alpha}_t )\langle \vm, \rvx_t \rangle  \rvx_0 
            + (\overline{\alpha}_{t|s} - \overline{\alpha}_t)\rvx_t \odot \vm
            + (1- \overline{\alpha}_s)(1-\overline{\alpha}_{t|s})
             \langle \vm, \rvx_t \rangle\vm
        }
        {
            \overline{\alpha}_t \langle \rvx_t, \rvx_0 \rangle + (1 - \overline{\alpha}_t) \langle \vm, \rvx_t \rangle 
        } \label{eq:t|s,0-simplify0}
\end{align}
We can simplify the above equation further by considering two cases: (1) $\rvx_t = \rvx_0$ and (2) $\rvx_t \neq \rvx_0 $. 

\textbf{(Case 1)} When $\rvx_t = \rvx_0$, using the fact that both $\rvx_t$ and $\rvx_0$ are one-hot encoded, we observe 
\begin{align}
    \text{\eqref{eq:t|s,0-simplify0}} &= \frac
        {
            \overline{\alpha}_t \rvx_t +  (\overline{\alpha}_s - \overline{\alpha}_t )\langle \vm, \rvx_t \rangle  \rvx_t + 
             (\overline{\alpha}_{t|s}- \overline{\alpha}_t )\langle \vm, \rvx_t \rangle  \rvx_t + 
             (1- \overline{\alpha}_s)(1-\overline{\alpha}_{t|s})
             \langle \vm, \rvx_t \rangle\vm
        }
        {
            \overline{\alpha}_t  + (1 - \overline{\alpha}_t) \langle \vm, \rvx_t \rangle 
        } \nonumber\\
    &=\frac
        {
             \overline{\alpha}_t + ( \overline{\alpha}_{t|s} +  \overline{\alpha}_s - 2  \overline{\alpha}_t) \langle \vm, \rvx_t \rangle 
        }        
        {
            \overline{\alpha}_t  + (1 - \overline{\alpha}_t) \langle \vm, \rvx_t \rangle 
        }\rvx_t
    + \frac
        {
            (1- \overline{\alpha}_s)(1-\overline{\alpha}_{t|s}) \langle \vm, \rvx_t \rangle
        }        
        {
            \overline{\alpha}_t  + (1 - \overline{\alpha}_t) \langle \vm, \rvx_t \rangle 
        }\vm \nonumber \\
    &= (1 - \lambda_{t|s}) \cdot \rvx_t + \lambda_{t|s} \cdot \vm \;, \label{eq:s|t=0}
\end{align}
where $\lambda_{t|s}$ is defined as 
\begin{align}
    \lambda_{t|s} = \frac{(1- \overline{\alpha}_s)(1-\overline{\alpha}_{t|s}) \langle \vm, \rvx_t \rangle}{ \overline{\alpha}_t  + (1 - \overline{\alpha}_t) \langle \vm, \rvx_t \rangle } \;. 
\end{align}

\textbf{(Case 2)} When $\rvx_t \neq \rvx_0$,  we can similarly derive 
\begin{align}
  \text{\eqref{eq:t|s,0-simplify0}} &=  \frac
        {
             (\overline{\alpha}_s - \overline{\alpha}_t )\langle \vm, \rvx_t \rangle  \rvx_0 + 
             (\overline{\alpha}_{t|s}- \overline{\alpha}_t )\langle \vm, \rvx_t \rangle  \rvx_t + 
             (1- \overline{\alpha}_s)(1-\overline{\alpha}_{t|s})
             \langle \vm, \rvx_t \rangle\vm
        }
        {(1-\overline{\alpha}_t ) \langle \vm, \rvx_t \rangle} \nonumber\\
        & = 
        \frac{\overline{\alpha}_s - \overline{\alpha}_t}{1-\overline{\alpha}_t}\rvx_0 + 
        \frac{1- \overline{\alpha}_s}{1-\overline{\alpha}_t}\overline{\alpha}_{t|s}\rvx_t + 
        \frac{1- \overline{\alpha}_s}{1-\overline{\alpha}_t}(1 - \overline{\alpha}_{t|s})\vm \nonumber \\ 
        & = (1 - \mu_{t|s}) \cdot \rvx_0 + \mu_{t|s}\overline{\alpha}_{t|s} \cdot \rvx_t + 
        \mu_{t|s}(1-\overline{\alpha}_{t|s}) \cdot \vm \;, \label{eq:s|t!=0}
\end{align}
where $\mu_{t|s}$ is defined as 
\begin{align}
    \mu_{t|s} = \frac{1- \overline{\alpha}_s}{1-\overline{\alpha}_t} \;
\end{align}
Combining the results from both cases together, we can write $q(\rvx_s| \rvx_t, \rvx_0)$ in the following form.
\begin{align}
    q(\rvx_s| \rvx_t, \rvx_0) = 
    \begin{cases}
        \text{Cat}\Big(\rvx_s; \ (1 - \lambda_{t|s}) \cdot \rvx_t + \lambda_{t|s} \cdot \vm\Big)   & \text{when }\rvx_t = \rvx_0\\
         \text{Cat}\Big(\rvx_s; \ (1 - \mu_{t|s}) \cdot \rvx_0 + \mu_{t|s}\overline{\alpha}_{t|s} \cdot \rvx_t + 
        \mu_{t|s}(1-\overline{\alpha}_{t|s}) \cdot \vm \Big)    & \text{when }\rvx_t \neq \rvx_0 
    \end{cases}
\end{align}

\subsection{Parameterization of $p_{\theta} (x_s |x_t) $}
\label{ssec:p_x_s_x_t}
We first provide the formulation as follows.
\begin{align}
    p_\theta(\rvx_{s} | \rvx_t) = \sum_{\rvx_0} q(\rvx_s | \rvx_t , \rvx_0) p_\theta(\rvx_0 | \rvx_t)
\end{align}
Using $q(\rvx_s | \rvx_t, \rvx_0)$ in \eqref{eq:s|t,0 full} and $p_\theta(\rvx_0 | \rvx_t) = \text{Cat}(\rvx_0; f_t^\theta(\rvx_t))$, we can expand it as

\begin{align}
    p_\theta(\rvx_s | \rvx_t )& = q(\rvx_s|\rvx_t,\rvx_t)p_\theta(\rvx_t|\rvx_t) + \sum_{\rvx \neq \rvx_t} q(\rvx_s|\rvx_t,\rvx) p_\theta(\rvx|\rvx_t) \nonumber\\
    =   & \rvx_s^\top \Big( (1-\lambda_{t|s})\rvx_t + \lambda_{t|s}\vm \Big)\rvx_t^\top 
           f_t^\theta(\rvx_t)  \nonumber\\
        & + \sum_{\rvx \neq \rvx_t} \rvx_s^T\Big((1-\mu_{t|s})\rvx + 
        \mu_{t|s}\overline{\alpha}_{t|s}\rvx_t + \mu_{t|s}(1-\overline{\alpha}_{t|s})\vm  \Big) \rvx^\top f_t^\theta(\rvx_t) \nonumber\\
    =& \rvx_s^\top \Big[  
            \big( (1-\lambda_{t|s})\rvx_t + \lambda_{t|s}\vm \big)\rvx_t^\top f_t^\theta(\rvx_t) 
            + (1-\mu_{t|s})(\sum_{\rvx \neq \rvx_t}\rvx \rvx^\top)f_t^\theta(\rvx_t) \nonumber\\
            &+ (\mu_{t|s}\overline{\alpha}_{t|s}\rvx_t + \mu_{t|s}(1-\overline{\alpha}_{t|s})\vm)
            (\sum_{\rvx \neq \rvx_t} \rvx)^\top f_t^\theta(\rvx_t)
        \Big] \nonumber\\
    =& \rvx_s^\top \Big[
            \big( (1-\lambda_{t|s})\rvx_t + \lambda_{t|s}\vm \big)\rvx_t^\top f_t^\theta(\rvx_t) 
            + (1-\mu_{t|s})(I - \rvx_t\rvx_t^\top)f_t^\theta(\rvx_t) \nonumber\\
            &+(\mu_{t|s}\overline{\alpha}_{t|s}\rvx_t + \mu_{t|s}(1-\overline{\alpha}_{t|s})\vm)
            (\1 - \rvx_t)^\top f_t^\theta(\rvx_t) 
    \Big] \nonumber \\
    = & \rvx_s^\top \Big[
            (1-\lambda_{t|s})\rvx_t^\top f_t^\theta(\rvx_t) \rvx_t + \lambda_{t|s}\rvx_t^\top f_t^\theta(\rvx_t) \vm      
            + (1-\mu_{t|s})(f_t^\theta(\rvx_t) - \rvx_t^\top f_t^\theta(\rvx_t)\rvx_t) \nonumber\\
            &+(\mu_{t|s}\overline{\alpha}_{t|s}\rvx_t + \mu_{t|s}(1-\overline{\alpha}_{t|s})\vm)(1 - \rvx_t^T f_t^\theta(\rvx_t))
    \Big] \nonumber \\
    = & \rvx_s^\top \Big[ (1-\mu_{t|s})\cdot f_t^\theta(\rvx_t)
     + \Big( (\mu_{t|s} - \lambda_{t|s} - \mu_{t|s}\overline{\alpha}_{t|s})\rvx_t^\top f_t^\theta(\rvx_t) + \mu_{t|s}\overline{\alpha}_{t|s} \Big)\cdot\rvx_t \nonumber\\
     & + \Big( -(\mu_{t|s} - \lambda_{t|s} - \mu_{t|s}\overline{\alpha}_{t|s})\rvx_t^\top f_t^\theta(\rvx_t) + \mu_{t|s}(1-\overline{\alpha}_{t|s}) \Big)\cdot\vm 
    \Big] \nonumber\\
    =&\quad \text{Cat}\Big(\rvx_s; \ 
        (1-\mu_{t|s})\cdot f_t^\theta(\rvx_t) + 
        (\mu_{t|s}\overline{\alpha}_{t|s} + \gamma^\theta_{{t|s}})\cdot \rvx_t  + 
        (\mu_{t|s}(1-\overline{\alpha}_{t|s}) - \gamma^\theta_{{t|s}})\cdot \vm
    \Big)
\end{align}

\subsection{Proof of Proposition 3}\label{ssec:proof_delta_pq}
As $q(\rvx_s|\rvx_t, \rvx_0)$ has two different categorical distributions based on whether $\rvx_t$ is equal to $\rvx_0$, we prove the Proposition 3 based on two cases.

\textbf{Case 1: $\rvx_t = \rvx_0$}. 
In this case, let $h^\theta_t := f_t^\theta(\vx_t) - \vx_0$, now replace $f_t^\theta $ with $g_t^\theta$ in \eqref{eq:gamma} and \eqref{eq:s|t}, we get 
\begin{align}
    \gamma_{t|s}^\theta = (\mu_{t|s} - \lambda_{t|s} - \mu_{t|s}\overline{\alpha}_{t|s})
    \langle h^\theta_t + \vx_0 , \vx_t \rangle  = (\mu_{t|s} - \lambda_{t|s} - \mu_{t|s}\overline{\alpha}_{t|s})
    \langle h^\theta_t , \vx_t \rangle + (\mu_{t|s} - \lambda_{t|s} - \mu_{t|s}\overline{\alpha}_{t|s})
\end{align}
Taking the formulation into \eqref{eq:s|t} and cancel out terms, we can simplify $p_\theta(\rvx_s|\rvx_t)$ as 
\begin{align}
p_\theta(\rvx_s|\rvx_t) &= (1-\mu_{t|s})h^\theta_t + (1-\lambda_{t|s})\vx_t + \lambda_{t|s} \vm + (\mu_{t|s} - \lambda_{t|s} - \mu_{t|s}\overline{\alpha}_{t|s}) \langle h^\theta_t, \vx_t \rangle(\vx_t - \vm) \nonumber\\
&= q(\rvx_s|\rvx_t, \rvx_0) + (1-\mu_{t|s})\Big[ h^\theta_t + \frac{\mu_{t|s} - \lambda_{t|s} - \mu_{t|s}\overline{\alpha}_{t|s}}{1-\mu_{t|s}} \langle h^\theta_t, \vx_t \rangle(\vx_t - \vm)  \Big] \nonumber\\
&=  q(\rvx_s|\rvx_t, \rvx_0) + (1-\mu_{t|s})\Big[ h^\theta_t + \phi_{t|s}\langle h^\theta_t, \vx_t \rangle(\vx_t - \vm)  \Big]
\end{align}

\textbf{Case 2: $\rvx_t \neq \rvx_0$.} Based on \eqref{eq:s|t,0 full} and \eqref{eq:s|t}, it's easy to find that 
\begin{align}
    p_\theta(\rvx_s| \rvx_t) &= (1-\mu_{t|s})(f^\theta_t(\vx_t)-\vx_0) + \gamma_{t|s}^\theta (\vx_t - \vm) + (1-\mu_{t|s})\vx_0 + \mu_{t|s}\bar{\alpha}_{t|s}\vx_t + \mu_{t|s}(1-\bar{\alpha}_{t|s} ) \vm \nonumber\\
    &= (1-\mu_{t|s})(f^\theta_t(\vx_t)-\vx_0) + \gamma_{t|s}^\theta (\vx_t - \vm) + q(\rvx_s|\rvx_t, \rvx_0) \nonumber\\
    &= q(\rvx_s|\rvx_t, \rvx_0) + (1-\mu_{t|s})\Big[f^\theta_t(\vx_t)-\vx_0 + \frac{\gamma_{t|s}^\theta}{(1-\mu_{t|s})} (\vx_t - \vm)\Big] 
\end{align}
Taking the definition of $\gamma_{t|s}^\theta$ in \eqref{eq:gamma} and using the fact $<\vx_0, \vx_t> = 0$ we get the formulation in proposition 3. 

For both cases, we proved that the formulation in proposition 3 is correct. 

\subsection{Proof of KL divergence approximation}\label{ssec:kl_approx}
Assume that $p(\vx ) = q(\vx)+ \triangle p(\vx)$, and both $p(\vx )$  and $q(\vx )$ are valid categorical distributions. Then 
\begin{align}
 \KL\big( q(\vx) \Vert p(\vx ) \big) & = -\sum_\vx q(\vx) \log \frac{p(\vx)}{q(\vx)} 
   = -\sum_\vx q(\vx) \log (1 + \frac{\triangle p(\vx)}{q(\vx)})
\end{align}
By Taylor expansion, $\log (1+x) \approx x - \frac{x^2}{2}$, we ignore larger than 2 order terms. Then apply this we get 
\begin{align}
    \KL\big( q(\vx) \Vert p(\vx ) \big) & = -\sum_\vx q(\vx) [\frac{\triangle p(\vx)}{q(\vx)} - (\frac{\triangle p(\vx)}{q(\vx)})^2] \nonumber\\
    & = -\sum_\vx \triangle p(\vx)  + \sum_\vx \frac{\triangle p(\vx)^2 }{q(\vx)} \nonumber\\
    & = \sum_\vx \frac{\triangle p(\vx)^2 }{q(\vx)}
\end{align}

Change probabilities $q$ and $p$ to $q_(\rvx_s|\rvx_t, \rvx_0)$ and $p_\theta(\rvx_s| \rvx_t)$ we get the targeted formulation.

\subsection{VLB with partial time steps}\label{ssec:vlb_partial}
Assume that we only observe $\rvx_t$, $\rvx_s$, and $\rvx_0$, and assume that we have learned the prior $p_\theta(\rvx_t)$. 
\begin{align}
    \log p_\theta(\rvx_0) &= \log \E_{q(\rvx_t, \rvx_s | \rvx_0)}\frac{p_\theta(\rvx_t,\rvx_s,\rvx_0)}{q(\rvx_t,\rvx_s|\rvx_0)} \ge \E_{q(\rvx_t,\rvx_s|\rvx_0)}\log \frac{p_\theta(\rvx_t,\rvx_s,\rvx_0)}{q(\rvx_t,\rvx_s|\rvx_0)} \nonumber\\
    &= \E_{q(\rvx_t, \rvx_s| \rvx_0)} [\log \frac{p_\theta(\rvx_t)p_\theta(\rvx_s|\rvx_t) p_\theta(\rvx_0|\rvx_s)}{q(\rvx_t|\rvx_0) q(\rvx_s|\rvx_t, \rvx_0)}] \nonumber\\
    &=\E_{q(\rvx_t, \rvx_s| \rvx_0)} \Big[ \log \frac{p_\theta(\rvx_t)}{q(\rvx_t|\rvx_0)}
    + \log p_\theta(\rvx_t) + \log \frac{p_\theta(\rvx_s|\rvx_t)}{q(\rvx_s|\rvx_t,\rvx_0)} \Big] \nonumber\\
    &= -\KL[q(\rvx_t|\rvx_0) \Vert p_\theta(\rvx_t) ]
    + \E_{q(\rvx_s|\rvx_0)}\log p_\theta(\rvx_0|\rvx_s) - \KL[q(\rvx_s|\rvx_t,\rvx_0) \Vert p_\theta(\rvx_s|\rvx_t)]
\end{align}

\subsection{Reparameterization Form for Sampling}
\label{ssec:reparameter}

In practice, we need to sample $\rvx_{t|0} \sim q(\rvx_t| \rvx_0)$ for training and $\rvx_{s|t} \sim p_\theta(\rvx_s | \rvx_t)$ for  generation (backward denoising). In what follows, we provide the reparameterization form for these to facilitate fast sampling. Given
$q(\rvx_t | \rvx_0) = \text{Cat} (\rvx_t ;  \overline{\alpha}_{t}\rvx_0 + (1-\overline{\alpha}_t )\vm )$ and $p_\theta(\rvx_s|\rvx_t)$ as in \eqref{eq:s|t}, 
we can rewrite the corresponding variables as  
\begin{align}
    \rvx_{t|0} & = \delta_{1, \rvb_t} \rvx_0 + (1-  \delta_{1, \rvb_t})\rvm_0,  
    \text{ where } \rvb_t \sim \text{Bernoulli}(\overline{\alpha}_t)
    \label{eq:reparam_t|0} \\
    \rvx_{s|t}  & = \delta_{1, \rvb_{s|t}} \tilde{\rvx}_{0|t} \;+\; \delta_{2, \rvb_{s|t}} \rvx_{t} \;+\; \delta_{3, \rvb_{s|t}} \rvm_0, \label{eq:reparam_s|t} 
\end{align}
where $\tilde{\rvx}_{0|t}\sim \text{Cat}(f_t^\theta(\rvx_t)) $
 and  $\rvb_{s|t} \sim $
 $ \text{Cat}(\cdot;[1-\mu_{t|s}, \mu_{t|s}\overline{\alpha}_{t|s} + \gamma^\theta_{{t|s}}, \ \mu_{t|s}(1-\overline{\alpha}_{t|s}) - \gamma^\theta_{{t|s}}])$. 

\eqref{eq:reparam_t|0} and \eqref{eq:reparam_s|t} essentially show that the sampling process can be divided into two steps: first, sample the branch indicator $\rvb_t$ (or $\rvb_{s|t}$), and then sample from the categorical distribution of that branch, i.e. $\rvx_t$, $\rvm_0$,  or $\tilde{\rvx}_{0|t}$. Moreover, the three terms in \eqref{eq:reparam_s|t} highlight that the denoising step of generating $\rvx_s $ from $\rvx_t$ essentially draws samples via three levers: (1) use the predicted sample $\rvx_0$ from the trained network $f_t^\theta(\rvx_t)$ directly, (2) keep it unchanged as $\rvx_t$, or (3) roll it back to noise $\rvm_0$, offering an intuitive understanding.

\subsection{Discrete-Time Multi-element Object Extension}\label{ssec:discrete_multi_element_app}

We first extend $q(\rvx_t|\rvx_0)$, $q(\rvx_s | \rvx_t, \rvx_0)$, and $p_\theta(\rvx_s | \rvx_t)$ to multi-element object, and then present the negative VLB loss extension. First, for the forward process conditional on $\rvx_0$, each element $\rvx^d_t$ of the multi-element object $\rvx^{1:D}_{t}$ has its own diffusion process without interactions with others. Formally,
\begin{align}
    q(\rvx_t^{1:D}|\rvx_0^{1:D}) = \prod_{d=1}^D  q(\rvx_t^{d}|\rvx_0^{d}) \quad, \text{and} \quad q(\rvx^{1:D}_s | \rvx^{1:D}_t, \rvx^{1:D}_0) = \prod_{d=1}^D q(\rvx^d_s | \rvx^d_t, \rvx^d_0) \;.
\end{align}
The corresponding forward reparameterization form for generating $\rvx^{1:D}_{t|0}$ can be updated as 
\begin{align}
    \label{eq:forward_sampling}
    \rvx^{1:D}_{t|0} & = \delta^{1:D}_{1, \rvb^{1:D}_t} \odot \rvx^{1:D}_0 + (1-  \delta^{1:D}_{1, \rvb^{1:D}_t})\odot\rvm^{1:D}_0 \quad ,
\end{align}
where $\delta^{1:D}_{1, \rvb^{1:D}_t} $ is a $D$-dimensional vector with $d$-th element being $\delta_{1, \rvb^d_t}$ and $\rvb^d_t \sim \text{Bernoulli}(\overline{\alpha}_t)$.

For the backward process, all elements' transition processes are coupled together, as shown in the graphical model in Appx. \figref{fig:pgm}(b). We start with the parameterization $p_\theta(\rvx^{1:D}_0 | \rvx^{1:D}_t)$, 
with the form
\begin{align}
    p_\theta(\rvx^{1:D}_0 | \rvx^{1:D}_t) = \prod_{d=1}^D p_\theta(\rvx^{d}_0 | \rvx^{1:D}_t) \quad \text{with} \quad p_\theta(\rvx^{d}_0 | \rvx^{1:D}_t) = \text{Cat}\big(\rvx_0^d | f_t^{\theta, d}( \rvx_t^{1:D})\big)
\end{align}
Then, $p_\theta(\rvx^{1:D}_s|\rvx^{1:D}_t) = \prod_{d=1}^D p_\theta(\rvx^{d}_s|\rvx^{1:D}_t)$, with the component $p_\theta(\rvx^{d}_s|\rvx^{1:D}_t)$ has the form
\begin{gather}
     \text{Cat}\Big(\rvx^d_s; \ 
        (1-\mu_{t|s})f_t^{\theta, d}( \rvx_t^{1:D}) + 
        (\mu_{t|s}\overline{\alpha}_{t|s} + \gamma^{\theta, d}_{{t|s}}) \rvx_t^d  + 
        (\mu_{t|s}(1-\overline{\alpha}_{t|s}) - \gamma^{\theta, d}_{{t|s}}) \vm^d
    \Big) \;, \\
    \text{where } \gamma^{\theta, d}_{{t|s}} =  
    (\mu_{t|s} - \lambda^d_{t|s} - \mu_{t|s}\overline{\alpha}_{t|s})
    \langle f_t^{\theta, d}( \rvx_t^{1:D}), \rvx^d_t \rangle, 
    \ 
        \lambda^d_{t|s} = \frac{(1- \overline{\alpha}_s)(1-\overline{\alpha}_{t|s}) \langle \vm^d, \rvx^d_t \rangle}{ \overline{\alpha}_t  + (1 - \overline{\alpha}_t) \langle \vm^d, \rvx^d_t \rangle } \;.
    \nonumber
\end{gather}
Similarly, the reparameterization form can be expressed as 
\begin{align}
    \rvx_{s|t}^{1:D} = \delta^{1:D}_{1, \rvb^{1:D}_{s|t}} \odot \tilde{\rvx}^{1:D}_{0|t} \;+\; \delta^{1:D}_{2, \rvb^{1:D}_{s|t}} \odot \rvx_{t}^{1:D} \;+\; \delta^{1:D}_{3, \rvb^{1:D}_{s|t}} \odot \rvm_0^{1:D} \;,
\end{align}
where $\tilde{\rvx}^d_{0|t}\sim \text{Cat}(f_t^{\theta, d}( \rvx_t^{1:D}))$ and $\rvb^d_{s|t} \sim \text{Cat}(1-\mu_{t|s}, \ \mu_{t|s}\overline{\alpha}_{t|s} + \gamma^{\theta,d}_{{t|s}}, \ \mu_{t|s}(1-\overline{\alpha}_{t|s}) - \gamma^{\theta,d}_{{t|s}})$.

Finally, the following reformulate the loss functions $\Ls_t(\theta)$  for multi-element objects, where we drop the constant term for simplicity.
\begin{align}
    \Ls^{1:D}_t(\theta) =& \E_{q(\rvx^{1:D}_t| \rvx^{1:D}_0)} \sum_{d=1}^{D} \Big[  \delta_{\rvx^d_t, \rvx^d_0}\E_{q(\rvx^d_{t-1}| \rvx^d_t = \rvx^d_0)}[-\log p_\theta(\rvx^d_{t-1} | \rvx^{1:D}_t)] + \nonumber\\
    & \qquad \qquad \qquad(1-\delta_{\rvx^d_t, \rvx^d_0})\E_{q(\rvx^d_{t-1}| \rvx^d_t\neq \rvx^d_0)}[-\log p_\theta(\rvx^d_{t-1} | \rvx^{1:D}_t)] \Big] 
\end{align}

\subsection{CTMC introduction}
\label{ssec:ctmc-intro}
 The CTMC process can go with either increasing $t$ or decreasing $t$, and we use increasing $t$ as the default direction to introduce CTMC. 
 For any CTMC, its \textit{transition rate matrix} $R_t$ with  $[R_t]_{ij} = r_t(\ve_j| \ve_i)$ fully determines the underlying stochastic process. CTMC is categorized into time-homogeneous and time-\textit{in}homogeneous based on whether $R_t$ is static with respect to $t$. In this paper we work with time-inhomogeneous CTMC. 

 Based on the definition in \eqref{eq:r_t limit}, we can derive some properties as follows
\begin{gather}
    \forall \vx,  r_t(\vx|\vx) \leq 0;\qquad \forall \vy \neq \vx,  r_t(\vy| \vx) \geq 0; \qquad \forall \vx, \sum_{\vy} r_t(\vy | \vx ) = 0  \label{eq:transition_rate_properties} \\
    q_{t| t - \triangle t} (\vy | \vx) = \delta_{\vx, \vy} + r_t(\vy| \vx)\triangle t + o(\triangle t)  \label{eq:local_transition_prob_change}
\end{gather}
\eqref{eq:transition_rate_properties} shows the properties of transition rate, which imply $r_t(\vx|\vx) = - \sum_{\vy \neq \vx} r_t(\vy|\vx)$. \eqref{eq:local_transition_prob_change} characterizes the infinitesimal change of transition probability, and can be used to derive the relationship between transition matrix $\overline{Q}_{t|s}$ and transition rate matrix $R_t$. 
Formally, a CTMC's transition probabilities satisfies the Kolmogorov forward and backward equations \citep{kolmogorov_equation} :
\begin{align}
    \frac{d}{dt}q_{t|s} (\vy| \vx) = \sum_{\rvz}q_{t|s}(\vz | \vx) r_t(\vy | \vz)  \quad &\text{or} \quad
    \frac{d}{dt}\overline{Q}_{t|s}  = \overline{Q}_{t|s} R_t   &\text{Kolmogorov Forward} \label{eq:kol_forward}\\ 
     \frac{d}{ds}q_{t|s} (\vy| \vx) = -\sum_{\vz}r_s(\vz | \vx) q_{t|s}(\vy | \vz)  \quad &\text{or} \quad
     \frac{d}{ds}\overline{Q}_{t|s}  = -R_s\overline{Q}_{t|s} &\text{Kolmogorov Backward} \label{eq:kol_backward}
\end{align}

The above equations are Ordinary Different Equations (ODEs) and have unique solutions. \cite{inhomo_CTMC} mentioned that when $R_{t_1}$ and $R_{t_2}$ commute (i.e. 
 $R_{t_1}R_{t_2} = R_{t_2}R_{t_1} $) for any $t_1,t_2$, the transition probability matrix can be written as 
\begin{align}
    \overline{Q}_{t|s} = \exp\Big(\int_s^t R_a da \Big) \; \label{eq:rate_to_transit_0}
\end{align}
where $\exp(M):=\sum_{k=0}^{\infty}  \frac{M^k}{k!}$ is the matrix exponential operation. The commutative property of $R_t$ can be achieved by choosing $R_t = \beta(t) R_b$ where $R_b \in \R^{K\times K}$ is a time-independent base transition rate matrix satisfying properties in \eqref{eq:transition_rate_properties} and $\beta(t)$ is a positive scalar dependent on time. Now assume $R_b$ is diagonalizable and $R_b = U\Sigma U^{-1}$ by eigen-decomposition, then we can simplify \eqref{eq:rate_to_transit_0} as 
\begin{align}
  \overline{Q}_{t|s} = \exp\Big(\int_s^t \beta(t) R_b da \Big)   = \exp(\overline{\beta}_{t|s} R_b) = \exp(\overline{\beta}_{t|s}U\Sigma U^{-1}) = U\exp(\overline{\beta}_{t|s}\Sigma) U^{-1}
\end{align}
where $\overline{\beta}_{t|s}:= \int_s^t \beta(a) da$. 

\subsection{Derivation of \eqref{eq:Qbar_ts_continuous}}
\label{ssec:derivation-qbar}
\begin{align}
    \overline{Q}_{t|s} = \exp(\overline{\beta}_{t|s}R_b) &= I + \sum_{k=1}^\infty \frac{(-\overline{\beta}_{t|s})^k (-R_b)^k}{k!} = I - (\sum_{k=1}^\infty \frac{(-\overline{\beta}_{t|s})^k}{k!}) R_b \nonumber\\
    &= I-(e^{-\overline{\beta}_{t|s}}-1)R_b 
     = e^{-\overline{\beta}_{t|s}} I + (1-e^{-\overline{\beta}_{t|s}}) \1 \vm^\top 
\end{align}

\subsection{Derivation of \eqref{eq:vector_g}}
\label{ssec:derivation-gt}

With the help of \eqref{eq:forward_CTMC}, we first show that $g_t^\theta$ can be simplified as follows.
\begin{align}
    g^\theta_t(\vz | \vx) & = \frac{1}{\langle \vx, \vm \rangle}\Big[ 
        \langle \vz, \vm \rangle + 
        \frac{\overline{\alpha}_{t|0}}{1-\overline{\alpha}_{t|0}}p^\theta_{0|t}(\vz | \vx) -
        \frac{\overline{\alpha}_{t|0} \langle \vz, \vm \rangle}{\overline{\alpha}_{t|0} + (1- \overline{\alpha}_{t|0} )\langle \vx , \vm \rangle} p^\theta_{0|t}(\vx | \vx)
    \Big] \nonumber\\
    & = \frac{1}{\langle \vx, \vm \rangle}\Big[ 
    \big(1-\frac{\overline{\alpha}_{t|0} p^\theta_{0|t}(\vx | \vx)}{\overline{\alpha}_{t|0} + (1- \overline{\alpha}_{t|0} )\langle \vx , \vm \rangle}\big) \langle \vz, \vm \rangle + \frac{\overline{\alpha}_{t|0}}{1-\overline{\alpha}_{t|0}}p^\theta_{0|t}(\vz | \vx)
   \Big]  \quad \forall \vz \neq \vx \label{eq:g_formulation}
\end{align}

\begin{proof}
\begin{align}
    \frac{q_{t|0}(\vz | \vx_0) }{q_{t|0}(\vx | \vx_0)} = 
    \frac{\vz^\top \overline{Q}_{t|0}^\top \vx_0 }{\vx^\top \overline{Q}_{t|0}^\top \vx_0} =
    \frac{\overline{\alpha}_{t|0} \vz^\top\vx_0 + (1-\overline{\alpha}_{t|0})\langle \vz, \vm \rangle}{\overline{\alpha}_{t|0} \vx^\top\vx_0 + (1-\overline{\alpha}_{t|0})\langle \vx, \vm \rangle}
\end{align}
Then when $\vz \neq \vx$, we can write it as 
\begin{align}
    \frac{q_{t|0}(\vz | \vx_0) }{q_{t|0}(\vx | \vx_0)} = 
    \begin{cases}
         \frac{\langle \vz, \vm \rangle}{\langle \vx, \vm \rangle} 
            &\quad\text{if } \vx_0 \neq \vx
             \text{ and }\vx_0 \neq \vz\\[5pt]
         \frac{\frac{\overline{\alpha}_{t|0}}{1-\overline{\alpha}_{t|0}} +  \langle \vz, \vm \rangle}{\langle \vx, \vm \rangle} 
            &\quad\text{if } \vx_0 = \vz \\[5pt]
         \frac{\langle \vz, \vm \rangle}{\frac{\overline{\alpha}_{t|0}}{1-\overline{\alpha}_{t|0}} +  \langle \vx, \vm \rangle}
            &\quad\text{if } \vx_0 = \vx 
    \end{cases}
\end{align}
Hence,
\begin{align}
    &g^{\theta}_t(\vz| \vx) = \sum_{\vx_0} \frac{q_{t|0}(\vz | \vx_0)}{q_{t|0}(\vx | \vx_0) } p_{0|t}^\theta(\vx_0|\vx) \nonumber \\
    &= \frac{q_{t|0}(\vz | \vx)}{q_{t|0}(\vx | \vx) } p_{0|t}^\theta(\vx|\vx) \nonumber + \frac{q_{t|0}(\vz | \vz)}{q_{t|0}(\vx | \vz) } p_{0|t}^\theta(\vz|\vx)
    + \sum_{\vx_0\neq \vz, \vx} \frac{q_{t|0}(\vz | \vx_0)}{q_{t|0}(\vx | \vx_0) } p_{0|t}^\theta(\vx_0|\vx) \nonumber\\
    &= \frac{\langle \vz, \vm \rangle}{\frac{\overline{\alpha}_{t|0}}{1-\overline{\alpha}_{t|0}} +  \langle \vx, \vm \rangle} p^\theta_{0|t}(\vx | \vx) + 
    \frac{\frac{\overline{\alpha}_{t|0}}{1-\overline{\alpha}_{t|0}} +  \langle \vz, \vm \rangle}{\langle \vx, \vm \rangle} p^\theta_{0|t}(\vz | \vx) + 
     \frac{\langle \vz, \vm \rangle}{\langle \vx, \vm \rangle} (1-p^\theta_{0|t}(\vz | \vx)-p^\theta_{0|t}(\vx | \vx) ) \nonumber \\
     & = \frac{1}{\langle \vx, \vm \rangle}\Big[ 
    \big(1-\frac{\overline{\alpha}_{t|0} p^\theta_{0|t}(\vx | \vx)}{\overline{\alpha}_{t|0} + (1- \overline{\alpha}_{t|0} )\langle \vx , \vm \rangle}\big) \langle \vz, \vm \rangle + \frac{\overline{\alpha}_{t|0}}{1-\overline{\alpha}_{t|0}}p^\theta_{0|t}(\vz | \vx)
   \Big]  \quad \forall \vz \neq \vx 
\end{align}

Additionally, when $\vz = \vx$, $g_t^\theta(\vz | \vx) = \sum_{\vx_0} p^\theta_{0|t}(\vx_0 | \vx) = 1$.
\end{proof}

We can also compute the vectorization $g_t^\theta(\cdot| \vx)$ directly as 
\begin{align}
    g_t^\theta(\cdot| \vx) = \frac{1}{\langle \vx, \vm \rangle} \Big[ 
        \big(1 - \frac{\overline{\alpha}_{t|0} \langle f_t^\theta(\vx), \vx \rangle }{\overline{\alpha}_{t|0} + (1-\overline{\alpha}_{t|0})\langle \vx, \vm \rangle }\big) \vm +  
        \frac{\overline{\alpha}_{t|0}}{1-\overline{\alpha}_{t|0}} f_t^\theta(\vx)
    \Big] \odot(\1 - \vx) + \vx 
\end{align}
where $f_t^\theta(\vx)$ is the parameterized neural network for $p^\theta_{0|t}(\cdot | \vx)$ that outputs the distribution of $\rvx_{0|t}$. \eqref{eq:vector_g} combines \eqref{eq:g_formulation} and $g_t^\theta(\vx| \vx)=1, \forall \vx$.

\subsection{Derivation of Forward and Backward Transition Rate in Multi-element Case}
\label{ssec:transition-rate-multi-dim}

In this section, we show how to extend  transition rates, and and the ratio $g_t^\theta$, into multi-element case. We let $\vx^{\backslash d}$ represent $\vx^{1:D\backslash d}$, i.e. the object without $d$-th element, for simplicity.

{\bf Forward Transition Rates:} First, the transition rates for forward sampling has a specific decomposition formulation in multi-element case as proven by \cite{campbell2022continuous}, thus, we summarize the result as follows. The key assumption for CTMC is that at a single time, only one dimension can change. 
\begin{align}
    r^{1:D}_t(\vy^{1:D} | \vx^{1:D}) = \sum_{d=1}^D r^d_t(\vy^d | \vx^d) \delta_{\vx^{\backslash d}, \vy^{\backslash d}}
\end{align}

where $\delta_{\vx^{\backslash d}, \vy^{\backslash d}}$ is the Kronecker delta and it is 1 if and only if $\vx^{\backslash d} = \vy^{\backslash d}$. 
As we also assume that all dimension processes are indepedent, $r^d_t$ denotes the transition rate of the CTMC process at d-th element/dimension. 

{\bf Backward Transition Rates:} 
Now let us work on $\hat{r}_t^{1:D}(\vy^{1:D} | \vx^{1:D} )$. Notice that as the backward process is also a CTMC, it also satisfies that only one dimension can change at a time.
We summarize two equivalent formulations as follows.

{
\setlength{\abovedisplayskip}{4pt}
\setlength{\belowdisplayskip}{4pt}
\begin{align}
    \hat{r}^{1:D}_t(\vy^{1:D} | \vx^{1:D}) &= \sum_{d=1}^D  r^d_t(\vx^d | \vy^d) \delta_{\vx^{\backslash d}, \vy^{\backslash d}}
       \sum_{\vx_0^d} \frac{q_{t|0}(\vy^d | {\vx_0^d})}{q_{t|0}(\vx^d | {\vx_0^d})} q_{0|t}( \vx_0^d| \vx^{1:D}) \label{eq:rate_multi_element_1}
\end{align}
}
\begin{align}
    \hat{r}^{1:D}_t(\vy^{1:D} | \vx^{1:D}) &= 
    \sum_{d=1}^D \frac{  r^d_t(\vx^d | \vy^d) \delta_{\vx^{\backslash d}, \vy^{\backslash d}}}{
        \sum_{\vx_0^d} \frac{q_{t|0}(\vx^d | {\vx_0^d})}{q_{t|0}(\vy^d | {\vx_0^d})} q_{0|t}( \vx_0^d| \vy^{1:D}) 
    }\label{eq:rate_multi_element_2}
\end{align}

Notice that these two formulations should be equivalent. In practice, we use the first formulation to parameterize the reverse transition rate in learning. 

\begin{proof}
\begin{align}
    \frac{q_t(\vy^{1:D})}{q_t(\vx^{1:D})} 
= \sum_{\vx_0^{1:D}} \frac{q_{t|0}(\vy^{1:D}| \vx_0^{1:D}) }{q_{t|0}(\vx^{1:D}| \vx_0^{1:D})} q_{0|t}(\vx_0^{1:D} | \vx^{1:D})= \sum_{\vx_0^{1:D}} q_{0|t}(\vx_0^{1:D} | \vx^{1:D}) \prod_{d=1}^D \frac{q_{t|0}(\vy^d | \vx_0^d)}{q_{t|0}(\vx^d | \vx_0^d)}
\end{align}
\begin{align}
    \sum_{\vx_0^{\backslash d}} q_{0|t}({ \vx_0^{1:D}} | \vx^{1:D}) =  \sum_{\vx_0^{\backslash d}} q_{0|t}({ \vx_0^{d}} | \vx^{1:D}) q_{0|t}({ \vx_0^{\backslash d}} | \vx^{1:D}, \vx_0^{d}) = q_{0|t}({ \vx_0^{d}} | \vx^{1:D})
\end{align}
\textbf{(Case 1)}
\begin{align}
    \hat{r}^{1:D}_t(\vy^{1:D} | \vx^{1:D}) &= r^{1:D}_t(\vx^{1:D} | \vy^{1:D}) \frac{q_t(\vy^{1:D})}{q_t(\vx^{1:D})} \nonumber\\
    &= \Big(\sum_{d=1}^D r^d_t(\vx^d | \vy^d) \delta_{\vx^{\backslash d}, \vy^{\backslash d}}\Big)\Big(\sum_{\vx_0^{1:D}} q_{0|t}(\vx_0^{1:D} | \vx^{1:D}) \prod_{d=1}^D \frac{q_{t|0}(\vy^d | \vx_0^d)}{q_{t|0}(\vx^d | \vx_0^d)}\Big) \nonumber \\
    &= \sum_{d=1}^D \sum_{\color{orange}\vx_0^{1:D}} r^d_t(\vx^d | \vy^d) \delta_{\vx^{\backslash d}, \vy^{\backslash d}}  q_{0|t}({\color{orange} \vx_0^{1:D}} | \vx^{1:D}) \frac{q_{t|0}(\vy^d | {\color{orange}\vx_0^d})}{q_{t|0}(\vx^d | {\color{orange}\vx_0^d})}  \nonumber  \\
    &= \sum_{d=1}^D  r^d_t(\vx^d | \vy^d) \delta_{\vx^{\backslash d}, \vy^{\backslash d}}
       \sum_{\color{orange}\vx_0^d} \frac{q_{t|0}(\vy^d | {\color{orange}\vx_0^d})}{q_{t|0}(\vx^d | {\color{orange}\vx_0^d})} \sum_{\vx_0^{\backslash d}} q_{0|t}({ \vx_0^{1:D}} | \vx^{1:D}) \nonumber  \\
    &= \sum_{d=1}^D  r^d_t(\vx^d | \vy^d) \delta_{\vx^{\backslash d}, \vy^{\backslash d}}
       \sum_{\vx_0^d} \frac{q_{t|0}(\vy^d | {\vx_0^d})}{q_{t|0}(\vx^d | {\vx_0^d})} q_{0|t}( \vx_0^d| \vx^{1:D})
\end{align}
\textbf{(Case 2)}
\begin{align}
    \hat{r}^{1:D}_t(\vy^{1:D} | \vx^{1:D}) &= r^{1:D}_t(\vx^{1:D} | \vy^{1:D}) / \frac{q_t(\vx^{1:D})}{q_t(\vy^{1:D})} \nonumber\\
    &= \sum_{d=1}^D \frac{  r^d_t(\vx^d | \vy^d)  \delta_{\vx^{\backslash d}, \vy^{\backslash d}}}
    {\sum_{\vx_0^{1:D}} q_{0|t}(\vx_0^{1:D} | \vy^{1:D}) \prod_{i=1}^D \frac{q_{t|0}(\vx^i | \vx_0^i)}{q_{t|0}(\vy^i | \vx_0^i)}} \nonumber\\
    & = \sum_{d=1}^D \frac{  r^d_t(\vx^d | \vy^d) \delta_{\vx^{\backslash d}, \vy^{\backslash d}}}{\sum_{\vx_0^{1:D}} q_{0|t}(\vx_0^{1:D} | \vy^{1:D})  \frac{q_{t|0}(\vx^d | \vx_0^d)}{q_{t|0}(\vy^d | \vx_0^d)}} \nonumber \\
    & = \sum_{d=1}^D \frac{  r^d_t(\vx^d | \vy^d) \delta_{\vx^{\backslash d}, \vy^{\backslash d}}}{
        \sum_{\vx_0^d} \frac{q_{t|0}(\vx^d | {\vx_0^d})}{q_{t|0}(\vy^d | {\vx_0^d})} q_{0|t}( \vx_0^d| \vy^{1:D})
    }
\end{align}
\end{proof}

{\bf Ratio:} We now define an estimator  $g_t^{\theta,d}(\vx^d | \vy^{1:D})$ as follows. 
\begin{align}
    g_t^{\theta,d}(\vx^d | \vy^{1:D}): &= \sum_{\vx_0^d} \frac{q_{t|0}(\vx^{d}| \vx_0^{d}) }{q_{t|0}(\vy^{d}| \vx_0^{d})} p_{0|t}^\theta(\vx_0^{d} | \vy^{1:D}) \approx \sum_{\vx_0^d} \frac{q_{t|0}(\vx^{d}| \vx_0^{d}) }{q_{t|0}(\vy^{d}| \vx_0^{d})} q_{0|t}(\vx_0^{d} | \vy^{1:D}) =\frac{q_t(\vx^{d} | \vy^{\backslash d})}{q_t( \vy^d | \vy^{\backslash d})} \label{eq:gtd}\\
    g_t^{\theta}(\vx^{1:D} | \vy^{1:D}): &= \prod_{d=1}^D g^{\theta,d}_t(\vx^d|\vy^{1:D}) = \sum_{\vx_0^{1:D}} \frac{q_{t|0}(\vx^{1:D}| \vx_0^{1:D}) }{q_{t|0}(\vy^{1:D}| \vx_0^{1:D})} p_{0|t}^\theta(\vx_0^{1:D} | \vy^{1:D}) \approx \frac{q_t(\vx^{1:D})}{q_t(\vy^{1:D})}
\end{align}

We can extend the vector formulation \eqref{eq:vector_g} in Proposition 4 to $ g_t^{\theta,d}(\vx^d | \vy^{1:D})$:
\begin{align}
        g_t^{\theta,d}(\cdot| \vx^{1:D}) = \frac{1}{\langle \vx^d, \vm^d \rangle} \Big[ 
        \big(1 - \frac{\overline{\alpha}_{t|0} \langle f_t^{\theta,d}(\vx^{1:D}), \vx^d \rangle }{\overline{\alpha}_{t|0} + (1-\overline{\alpha}_{t|0})\langle \vx^d, \vm^d \rangle }\big) \vm^d +  
        \frac{\overline{\alpha}_{t|0}}{1-\overline{\alpha}_{t|0}} f_t^{\theta,d}(\vx^{1:D})
    \Big] \odot(\1 - \vx^d) + \vx^d \label{eq:vector_g_multidim}
\end{align}

Then, we can derive two approximators for transition rate $\hat{r}^{1:D}_t(\vy^{1:D} | \vx^{1:D})$ as follows.
\begin{align}
    [\hat{r}^{\theta,1:D}_t]^{1}(\vy^{1:D} | \vx^{1:D}) &:= \sum_{d=1}^D    r^d_t(\vx^d | \vy^d) g_t^{\theta,d}(\vy^d | \vx^{1:D})  \cdot \delta_{\vx^{\backslash d}, \vy^{\backslash d}} \approx \text{\eqref{eq:rate_multi_element_1}}\\
    [\hat{r}^{\theta,1:D}_t]^{2}(\vy^{1:D} | \vx^{1:D}) &:= \sum_{d=1}^D   \frac{ r^d_t(\vx^d | \vy^d)}{ g_t^{\theta,d}(\vx^d | \vy^{1:D})}  \cdot \delta_{\vx^{\backslash d}, \vy^{\backslash d}}  \approx \text{\eqref{eq:rate_multi_element_2}}
\end{align}

\subsection{Derivation of Negative VLB Loss in Multi-element Case}
\label{ssec:cont-vlb-multi-dim}

As forward process is defined in \eqref{eq:forward_CTMC}, in multi-element case we can easily get 
\begin{align}
    r^d_t(\vx^d|\cdot) &= R^d_t \vx^d = \beta(t) \big(\langle \vx^d, \vm^d \rangle \1 - \vx^d \big)\\
    r^d_t(\cdot|\vx^d) &= (R^d_t)^\top \vx^d =\beta(t)\big( \vm^d - \vx^d \big)
\end{align}
The $r^d_t(\vx | \cdot)$ and $r^d_t(\cdot| \vx)$ are essentially the $\vx$-th column and row of the transition rate matrix $R^d_t$.

In Mult-element case, the negative VLB loss in \eqref{eq:CTMC_VLB_0} can be written as 
\begin{align}
T \ \E_{
    \substack{t\sim\text{Uni}(0,T) \\ \vx^{1:D} \sim q_{t|0} }} 
    \Big[ \sum_{\vz^{1:D} \neq \vx^{1:D}} \hat{r}_t^{\theta,1:D}(\vz^{1:D} | \vx^{1:D}) - \sum_{\vz^{1:D} \neq \vx^{1:D}}r^{1:D}_t(\vz^{1:D}|\vx^{1:D})\log \hat{r}_t^{\theta,1:D}(\vx^{1:D} | \vz^{1:D}) \Big]  
\end{align}
As there are two terms, let's work on each term separately. 

\subsubsection{Term 1}
Based on the formulation of $r_t^{1:D}$ and $\hat{r}_t^{\theta, 1:D}$, we can rewrite the first term as 
\begin{align}
\text{Term1} &= T\ \E_{t,\vx^{1:D}} \sum_{\vz^{1:D} \neq \vx^{1:D}} \hat{r}_t^{\theta,1:D}(\vz^{1:D} | \vx^{1:D}) \nonumber\\
&= T\ \E_{t,\vx^{1:D}} \sum_{\vz^{1:D} \neq \vx^{1:D}} \sum_{d=1}^D r^d_t(\vx^d | \vz^d) g_t^{\theta,d}(\vz^d | \vx^{1:D})  \cdot \delta_{\vx^{\backslash d}, \vz^{\backslash d}} \nonumber\\
&= T\ \E_{t,\vx^{1:D}}\Big[
        \sum_{d=1}^D \sum_{\vz^d \neq \vx^d } r^d_t(\vx^d | \vz^d) g_t^{\theta,d}(\vz^d | \vx^{1:D}) \Big] \nonumber\\
&= T\ \E_{t,\vx^{1:D}}\Big[ \sum_{d=1}^D r^d_t(\vx^d|\cdot)^\top  g_t^{\theta,d}(\cdot| \vx^{1:D}) \Big] + \text{const.}  \nonumber\\
&= T \ \E_{\substack{t \sim \text{Uni.}(0,T) \\ \vx^{1:D} \sim q_{t|0}}} \ \beta(t) \sum_{d=1}^D  \langle \vx^d, \vm^d \rangle  \Big[
        \1^\top g_t^{\theta,d}(\cdot| \vx^{1:D}) \Big] + \text{const.}
\end{align}

\subsubsection{Term 2}
As the evaluation of $\hat{r}^\theta_t(\cdot | \vx)$ for any $\vx$ requires a single forward pass of the parameterized network $p_{0|t}^\theta(\cdot|\vx)$, the second term within the expectation  of \eqref{eq:CTMC_VLB_0} requires multiple passes of the model. This complexity is even greatly amplified in cases with multi-element objects.  \citet{campbell2022continuous} avoids the multiple passes by changing the expectation variable through importance sampling. We take a similar approach to simplify the second term. Differently, \citet{campbell2022continuous} uses a specific sampling distribution (same as the forward transition rate) to introduce the auxiliary variable for changing the expectation variable, we generalize it to use a general sampling process $S_t$ defined below. 

 Let $\vy^{1:D}$ be the new variable upon which the exchanged expectation is based, and assume that $\vy^{1:D} $ is sampled from an unnormalized joint distribution $S_t(\vy^{1:D} | \vx^{1:D})$. We restrict $S_t(\vy^{1:D} | \vx^{1:D})$ to be a unnormalized probability that is nonzero if and only if $\vy^{1:D} $ and $ \vx^{1:D}$ are different at a single element. 
Formally, we can write the unnormalized distribution as 
\begin{align}
    S_t(\vy^{1:D} | \vx^{1:D}) &= (1- \delta_{\vy^{1:D},\vx^{1:D}}) \sum_{d=1}^D S^d_t(\vy^d | \vx^d) \delta_{\vy^{\backslash d}, \vx^{\backslash d} } \nonumber \\
 \text{with normalizer } \gS_t(\vx^{1:D}) &= \sum_{\vy^{1:D}} S_t(\vy^{1:D} | \vx^{1:D}) = \sum_{d=1}^D \sum_{\vy^d \neq \vx^d} S_t^d(\vy^d | \vx^d) \label{eq:St}
\end{align}
where $S_t^d (\vy^d | \vx^d)$ is any unnormalized probability at dimension $d$.  

Now for the second term, we have 
\begin{align}
    \text{Term2} &= T \ \E_{t, \vx^{1:D}} \sum_{\vz^{1:D} \neq \vx^{1:D} } r_t(\vz^{1:D} | \vx^{1:D}) \log \hat{r}_t^\theta (\vx^{1:D} | \vz^{1:D} ) \nonumber\\
    &= T \ \E_{t, \vx^{1:D}} \sum_{\vz^{1:D} \neq \vx^{1:D} } 
    \frac{S_t(\vz^{1:D}|\vx^{1:D})}{\gS(\vx^{1:D})} \cdot \frac{ \gS_t(\vx^{1:D})}{S_t(\vz^{1:D}|\vx^{1:D})} \cdot r_t(\vz^{1:D} | \vx^{1:D}) \log \hat{r}_t^\theta (\vx^{1:D} | \vz^{1:D} )  \nonumber\\
    &= T \ \E_{\substack{t, \vx^{1:D}\\\vz^{1:D}\sim S_t }} \Big [ 
    \frac{ \gS_t(\vx^{1:D})}{S_t(\vz^{1:D}|\vx^{1:D})} \cdot r_t(\vz^{1:D} | \vx^{1:D}) \log \hat{r}_t^\theta (\vx^{1:D} | \vz^{1:D} ) \Big] \qquad \text{ (As $S_t$ samples $\vz^{1:D} \neq \vx^{1:D}$ )} \nonumber\\
    &= T \ \E_{\substack{t, \vz^{1:D}\sim S_t }} 
    \sum_{\vx^{1:D}}
    \Big[
        q_{S_t}(\vx^{1:D} | \vx_0^{1:D}, \vz^{1:D} ) \cdot
    \frac{ \gS_t(\vx^{1:D})}{S_t(\vz^{1:D}|\vx^{1:D})} \cdot r_t(\vz^{1:D} | \vx^{1:D}) \log \hat{r}_t^\theta (\vx^{1:D} | \vz^{1:D} ) 
    \Big]
\end{align}
where $ q_{S_t}(\vx^{1:D} | \vx_0^{1:D}, \vz^{1:D} )$ is the conditional posterior distribution such that (for clearity we replace $\vx^{1:D}, \vz^{1:D}$ with $\vx^{1:D}_t, \vz^{1:D}_t$ respectively, as they are variables at time $t$) 
\begin{align}
  &q_{S_t}(\vx_t^{1:D} | \vx_0^{1:D}, \vz_t^{1:D} ) \nonumber\\
  & = 
  \frac{q_{S_t}(\vx_t^{1:D}, \vz_t^{1:D} | \vx_0^{1:D} )}{\sum_{\vy_t^{1:D}} q_{S_t}(\vy_t^{1:D}, \vz_t^{1:D} | \vx_0^{1:D} ) } 
  = \frac{q_{t|0}(\vx^{1:D}_t | \vx^{1:D}_0)  \cdot S_t(\vz^{1:D}_t | \vx^{1:D}_t) / \gS_t(\vx^{1:D}_t) }{
  \sum_{\vy^{1:D}_y} q_{t|0}(\vx^{1:D} | \vx^{1:D}_0)  \cdot S_t(\vz^{1:D}_t | \vy^{1:D}_t) / \gS_t(\vy^{1:D}_t) 
  } \nonumber\\
  &=\frac{
    (1-\delta_{\vz_t, \vx_t}) \sum_{d=1}^D\delta_{\vz_t^{\backslash d}, \vx_t^{\backslash d}} S_t^d(\vz_t^d|\vx_t^d ) / \gS_t(\vx_t^{1:D})
    \cdot q_{t|0}(\vx^d_t \circ \vz_t^{\backslash d} |\vx_0^{1:D})
  }
  {
    \sum_{\vy^{1:D}_t} \big[ 
    (1-\delta_{\vz_t, \vy_t}) \sum_{d=1}^D\delta_{\vz_t^{\backslash d}, \vy_t^{\backslash d}} S_t^d(\vz_t^d|\vy_t^d ) / \gS_t(\vy_t^{1:D} )
    \cdot q_{t|0}(\vy_t^d \circ \vz_t^{\backslash d} |\vx_0^{1:D})
    \big]  
  } \nonumber\\
  &=\frac{
    (1-\delta_{\vz_t, \vx_t}) \sum_{d=1}^D\delta_{\vz_t^{\backslash d}, \vx_t^{\backslash d}} 
    \frac{S_t^d(\vz_t^d|\vx_t^d )} {\gS_t(\vx_t^{1:D})}
    \cdot \frac{q_{t|0}(\vx^d_t |\vx_0^d)}{q_{t|0}(\vz^d_t |\vx_0^d)}
  }
  {
    \sum_{\vy_t^{1:D}} \big[ 
    (1-\delta_{\vz_t, \vy_t}) \sum_{d=1}^D\delta_{\vz_t^{\backslash d}, \vy_t^{\backslash d}} 
    \frac{S_t^d(\vz_t^d|\vy_t^d )} {\gS_t(\vy_t^{1:D})}
    \cdot \frac{q_{t|0}(\vy_t^d |\vx_0^d)}{q_{t|0}(\vz^d_t |\vx_0^d)}
    \big]  
  }\nonumber\\
  &=\frac{
    (1-\delta_{\vz_t, \vx_t}) \sum_{d=1}^D\delta_{\vz_t^{\backslash d}, \vx_t^{\backslash d}} 
    \frac{S_t^d(\vz_t^d|\vx_t^d )} {\gS_t(\vx_t^{1:D})}
    \cdot \frac{q_{t|0}(\vx^d_t |\vx_0^d)}{q_{t|0}(\vz^d_t |\vx_0^d)}
  }{
  \sum_{d=1}^D \sum_{\vy_t^d \neq \vz^d}  \frac{S_t^d(\vz_t^d|\vy_t^d )} {\gS_t(\vy_t^d \circ \vz_t^{\backslash d} )}
    \cdot \frac{q_{t|0}(\vy_t^d |\vx_0^d)}{q_{t|0}(\vz^d_t |\vx_0^d)}
  }
\end{align}

Now taking the formulation of $q_{S_t}$ back into Term 2, we further simplify Term 2 as 
\begin{align}
    &\text{Term2} \nonumber\\
    &= T \ \E_{\substack{t, \vz^{1:D}\sim S_t }} 
    \frac{\sum_{\vx^{1:D}}\Big[
  (1-\delta_{\vz, \vx}) \sum_{d=1}^D\delta_{\vz^{\backslash d}, \vx^{\backslash d}} 
    \frac{S_t^d(\vz^d|\vx^d )} {S_t(\vz^{1:D}|\vx^{1:D})}
    \cdot \frac{q_{t|0}(\vx^d |\vx_0^d)}{q_{t|0}(\vz^d |\vx_0^d)}
     \cdot r_t(\vz^{1:D} | \vx^{1:D}) \log \hat{r}_t^\theta (\vx^{1:D} | \vz^{1:D} ) 
        \Big]
    }{
      \sum_{d=1}^D \sum_{\vy^d \neq \vz^d}  \frac{S_t^d(\vz^d|\vy^d )} {\gS_t(\vy^d \circ \vz^{\backslash d})}
        \cdot \frac{q_{t|0}(\vy^d |\vx_0^d)}{q_{t|0}(\vz^d|\vx_0^d)}
    }\nonumber\\
&=\E_{\substack{t, \vz^{1:D}\sim S_t }} \frac{
\sum_{d=1}^D \sum_{\vx^d \neq \vz^d} \frac{S_t^d(\vz^d|\vx^d )} {S_t(\vz^{1:D}|\vx^d \circ \vz^{\backslash d})}
    \cdot \frac{q_{t|0}(\vx^d |\vx_0^d)}{q_{t|0}(\vz^d |\vx_0^d)}
    \cdot r_t(\vz^{1:D} | \vx^d \circ \vz^{\backslash d}) \log \hat{r}_t^\theta (\vx^d \circ \vz^{\backslash d} | \vz^{1:D} ) 
}
{
      \sum_{d=1}^D \sum_{\vy^d \neq \vz^d}  \frac{S_t^d(\vz^d|\vy^d )} {\gS_t(\vy^d \circ \vz^{\backslash d})}
        \cdot \frac{q_{t|0}(\vy^d |\vx_0^d)}{q_{t|0}(\vz^d|\vx_0^d)}
    }\nonumber\\
&=\E_{\substack{t, \vz^{1:D}\sim S_t }} 
\frac{ 
\sum_{d=1}^D \sum_{\vx^d \neq \vz^d}  
 \frac{q_{t|0}(\vx^d |\vx_0^d)}{q_{t|0}(\vz^d|\vx_0^d)} \cdot 
 r_t^d(\vz^d | \vx^d) \cdot
 \log  r^d_t(\vz^d | \vx^d) g_t^{\theta,d}(\vx^d | \vz^{1:D})
 }
{
 \sum_{d=1}^D \sum_{\vy^d \neq \vz^d}  
 \frac{q_{t|0}(\vy^d |\vx_0^d)}{q_{t|0}(\vz^d|\vx_0^d)} \cdot \frac{S_t^d(\vz^d|\vy^d )} {\gS_t(\vy^d \circ \vz^{\backslash d})}
} \nonumber\\
&=\E_{\substack{t, \vz^{1:D}\sim S_t }} 
\frac{ 
\sum_{d=1}^D \sum_{\vx^d \neq \vz^d}  
 \frac{q_{t|0}(\vx^d |\vx_0^d)}{q_{t|0}(\vz^d|\vx_0^d)} \cdot 
 r_t^d(\vz^d | \vx^d) \cdot
 \log g_t^{\theta,d}(\vx^d | \vz^{1:D})
 }
{
 \sum_{d=1}^D \sum_{\vy^d \neq \vz^d}  
 \frac{q_{t|0}(\vy^d |\vx_0^d)}{q_{t|0}(\vz^d|\vx_0^d)} \cdot \frac{S_t^d(\vz^d|\vy^d )} {\gS_t(\vy^d \circ \vz^{\backslash d})}
} + \text{const.}
\end{align}
The above equation further shows that the sampling distribution $S_t$ for adding exchanging variable $\vz^{1:D}$  only affect a weighting term of the loss computation. Let us define the scalar weighting term as 
\begin{align}
    \gM_{S_t}(\vz^{1:D}|\vx^{1:D}_0):= \sum_{d=1}^D \sum_{\vy^d \neq \vz^d}  
 \frac{q_{t|0}(\vy^d |\vx_0^d)}{q_{t|0}(\vz^d|\vx_0^d)} \cdot \frac{S_t^d(\vz^d|\vy^d )} {\gS_t(\vy^d \circ \vz^{\backslash d})}
 \label{eq:M_St}
\end{align}

With this definition, the Term 2 can be further rewrited as 
\begin{align}
\text{Term2} &=  
\E_{\substack{t, \vz^{1:D}\sim S_t }} \Big[
\frac{1}{\gM_{S_t}(\vz^{1:D}|\vx^{1:D}_0)}
\sum_{d=1}^D \frac{1}{q_{t|0}(\vz^d|\vx_0^d)} \sum_{\vx^d \neq \vz^d} 
q_{t|0}(\vx^d |\vx_0^d) \cdot 
 r_t^d(\vz^d | \vx^d) \cdot
 \log g_t^{\theta,d}(\vx^d | \vz^{1:D}) 
 \Big]\nonumber\\
 &=
 \E_{\substack{t, \vz^{1:D}\sim S_t }} \Big[
 \frac{1}{\gM_{S_t}(\vz^{1:D}|\vx^{1:D}_0)}
\sum_{d=1}^D \frac{\1^\top
[q_{t|0}(\cdot |\vx_0^d) \odot 
 r_t^d(\vz^d | \cdot) \odot
 \log g_t^{\theta,d}(\cdot | \vz^{1:D})]
}{q_{t|0}(\vz^d|\vx_0^d)} 
 \Big]+ \text{const.}\nonumber\\
&= 
\E_{\substack{t, \vz^{1:D}\sim S_t }} \Big[
\frac{\beta(t)}{\gM_{S_t}(\vz^{1:D}|\vx^{1:D}_0)}
\sum_{d=1}^D \frac{\langle \vz^d, \vm^d \rangle 
}{q_{t|0}(\vz^d|\vx_0^d)}
\1^\top
[q_{t|0}(\cdot |\vx_0^d) \odot 
 \log g_t^{\theta,d}(\cdot | \vz^{1:D})] + \text{const.} 
 \Big]
\end{align}

\subsubsection{All Terms}
Combine term 1 and term 2 together, we can write the negative VLB loss as 
\begin{align}
&T \ \E_{\substack{t\sim\text{Uni}(0,T) \\ \vx^{1:D} \sim q_{t|0} }} 
    \Big[ \sum_{\vz^{1:D} \neq \vx^{1:D}} \hat{r}_t^{\theta,1:D}(\vz^{1:D} | \vx^{1:D}) - \sum_{\vz^{1:D} \neq \vx^{1:D}}r^{1:D}_t(\vz^{1:D}|\vx^{1:D})\log \hat{r}_t^{\theta,1:D}(\vx^{1:D} | \vz^{1:D}) \Big]   \nonumber\\
&= T\ \E_{\substack{t\sim\text{Uni}(0,T) 
\nonumber
\\ \vx^{1:D} \sim q_{t|0}(\vx_0^{1:D}) 
\nonumber\\\vz^{1:D} \sim S_t(\vx^{1:D})}} 
\Big[ \sum_{d=1}^D r^d_t(\vx^d|\cdot)^\top  g_t^{\theta,d}(\cdot| \vx^{1:D}) 
\nonumber\\
&-
 \frac{1}{\gM_{S_t}(\vz^{1:D}|\vx^{1:D}_0)}
\sum_{d=1}^D \frac{\1^\top
[q_{t|0}(\cdot |\vx_0^d) \odot 
 r_t^d(\vz^d | \cdot) \odot
 \log g_t^{\theta,d}(\cdot | \vz^{1:D})]
}{q_{t|0}(\vz^d|\vx_0^d)} 
    \Big] + \text{const.} 
\end{align}

\subsection{Further Simplification of Continuous VLB} \label{apdx:proof_sedd}

\begin{proposition}
The loss in \eqref{eq:ctmc_vlb_final} can be further simplified as 
\begin{equation}
\scalebox{1}{$
\begin{aligned}
    T\ \E_{\substack{t\sim\text{Uni}(0,T) \\ \vx^{1:D} 
    \sim q_{t|0}(\cdot|\vx_0^{1:D}) }} 
\Big[ 
\sum_{d=1}^D r^d_t(\vx^d|\cdot)^\top 
\Big( g_t^{\theta,d}(\cdot| \vx^{1:D}) 
- 
\frac{
q_{t|0}(\cdot |\vx_0^d) \odot 
\log g_t^{\theta,d}(\cdot | \vx^{1:D})
}{q_{t|0}(\vx^d|\vx_0^d)} \Big) 
\Big] 
\end{aligned}
$}
\end{equation}
where the loss only requires a single forward pass of the model and no auxiliary variable is needed. 
\end{proposition}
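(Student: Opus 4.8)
The plan is to start from the general loss \eqref{eq:ctmc_vlb_final}, which holds for \emph{any} admissible $S_t$, and to exploit one structural observation: the first term inside the expectation depends only on $\vx^{1:D}$, whereas the entire subtracted second term depends only on the auxiliary variable $\vz^{1:D}$ and on $\vx_0^{1:D}$ — it never references $\vx^{1:D}$. Hence, after taking the joint expectation with $\vx^{1:D}\sim q_{t|0}(\cdot|\vx_0^{1:D})$ and $\vz^{1:D}\sim S_t(\cdot|\vx^{1:D})$, the second term sees $\vz^{1:D}$ only through its \emph{marginal} $p_{S_t}(\vz^{1:D}):=\sum_{\vx^{1:D}} q_{t|0}(\vx^{1:D}|\vx_0^{1:D})\, S_t(\vz^{1:D}|\vx^{1:D})/\gS_t(\vx^{1:D})$, which one checks is a normalized distribution. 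So I would first split the expectation into an $\vx$-expectation of the first term and a $\vz$-expectation (under $p_{S_t}$) of the second.

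The heart of the argument — and the step I expect to be the main obstacle — is the exact identity
\[
\gM_{S_t}(\vz^{1:D}|\vx_0^{1:D}) \;=\; \frac{p_{S_t}(\vz^{1:D})}{q_{t|0}(\vz^{1:D}|\vx_0^{1:D})}\,.
\]
To prove it I would expand $p_{S_t}$ using the single-element-change structure of $S_t$ in \eqref{eq:St}, so that the nonzero contributions come from $\vx^{1:D}$ differing from $\vz^{1:D}$ at exactly one coordinate $d$, giving $p_{S_t}(\vz^{1:D})=\sum_{d}\sum_{\vy^d\neq\vz^d} q_{t|0}(\vy^d\circ\vz^{\backslash d}|\vx_0^{1:D})\,S_t^d(\vz^d|\vy^d)/\gS_t(\vy^d\circ\vz^{\backslash d})$. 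Factorizing $q_{t|0}(\vx^{1:D}|\vx_0^{1:D})=\prod_i q_{t|0}(\vx^i|\vx_0^i)$ via element-wise independence and dividing by $q_{t|0}(\vz^{1:D}|\vx_0^{1:D})$ cancels every coordinate $i\neq d$, leaving exactly $\sum_d\sum_{\vy^d\neq\vz^d}\frac{q_{t|0}(\vy^d|\vx_0^d)}{q_{t|0}(\vz^d|\vx_0^d)}\frac{S_t^d(\vz^d|\vy^d)}{\gS_t(\vy^d\circ\vz^{\backslash d})}$, which is precisely the definition of $\gM_{S_t}$ in \eqref{eq:M_St}. The only real care required is the bookkeeping of the per-coordinate normalizers $\gS_t(\vy^d\circ\vz^{\backslash d})$.

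With this identity the finish is mechanical. The second term becomes $\sum_{\vz} p_{S_t}(\vz^{1:D})\,\gM_{S_t}(\vz^{1:D}|\vx_0^{1:D})^{-1}(\cdots)=\sum_{\vz} q_{t|0}(\vz^{1:D}|\vx_0^{1:D})(\cdots)=\E_{\vz^{1:D}\sim q_{t|0}}(\cdots)$, so the $S_t$-dependence cancels entirely and $\vz^{1:D}$ effectively follows $q_{t|0}(\cdot|\vx_0^{1:D})$ — the \emph{same} law as $\vx^{1:D}$ in the first term. Renaming $\vz^{1:D}\to\vx^{1:D}$ merges the two into a single expectation over $\vx^{1:D}\sim q_{t|0}$, so one forward pass of the model suffices. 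Finally I would reassociate the elementwise product, writing $\1^\top[q_{t|0}(\cdot|\vx_0^d)\odot r_t^d(\vx^d|\cdot)\odot\log g_t^{\theta,d}(\cdot|\vx^{1:D})]=r_t^d(\vx^d|\cdot)^\top[q_{t|0}(\cdot|\vx_0^d)\odot\log g_t^{\theta,d}(\cdot|\vx^{1:D})]$, and collect it with the first term under the common factor $r_t^d(\vx^d|\cdot)^\top$ to recover the stated single-pass form.
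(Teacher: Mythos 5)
Your proposal is correct and follows essentially the same route as the paper's proof: your marginal $p_{S_t}(\vz^{1:D})$ is exactly the paper's $\tilde{q}_{t|0}(\cdot|\vx_0^{1:D})$, and your key identity $\gM_{S_t}(\vz^{1:D}|\vx_0^{1:D}) = p_{S_t}(\vz^{1:D})/q_{t|0}(\vz^{1:D}|\vx_0^{1:D})$ is precisely the paper's Eq.~(\ref{eq:specific_mt_for_single_pass}), proved by the same single-coordinate-change expansion and element-wise factorization of $q_{t|0}$. The remaining steps — canceling $\gM_{S_t}$ so that $\vz^{1:D}$ effectively follows $q_{t|0}(\cdot|\vx_0^{1:D})$, renaming $\vz^{1:D}\to\vx^{1:D}$, and regrouping under $r_t^d(\vx^d|\cdot)^\top$ — likewise coincide with the paper's derivation.
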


\begin{proof}
We use $q_{t|0}(\vx^{1:D} | \vx_0^{1:D})$ to denote $P(\rvx^{1:D}_t = \vx^{1:D}| \vx_0^{1:D})$. Similarly, let $\tilde{q}_{t|0}(\vx^{1:D} | \vx_0^{1:D})$ denotes $P(\rvz^{1:D}_t = \vx^{1:D}| \vx_0^{1:D})$. Based on the definition of $S_t$ in \eqref{eq:St} and the procedure of sampling $\rvz^{1:D}_t$ conditional on $\rvx^{1:D}_t$ described in Proposition \ref{prop:ctmc_vlb_final}, we can compute the conditional probability $\tilde{q}_{t|0}(\vx^{1:D} | \vx_0^{1:D})$ as follows:
\begin{align}
    \tilde{q}_{t|0}(\vx^{1:D} | \vx_0^{1:D})&= \sum_{\vx_t^{1:D}} P(\rvz^{1:D}_t = \vx^{1:D}, \vx_t^{1:D}| \vx_0^{1:D})= \sum_{\vx_t^{1:D}} \frac{S_t( \vx^{1:D}|\vx_t^{1:D} )} {S_t(\vx_t^{1:D})} q_{t|0}(\vx_t^{1:D}|\vx_0^{1:D}) \nonumber \\ 
    &= \sum_{\vx_t^{1:D}}(1-\delta_{\vx^{1:D},\vx_t^{1:D}}) 
    \Big( \sum_{d=1}^D \delta_{\vx^{\backslash d}, \vx_t^{\backslash d}} \cdot  \frac{S_t^d(\vx^d|\vx_t^d)}{S_t(\vx^{\backslash d} \circ \vx_t^d  )}q_{t|0}(\vx^{\backslash d} \circ \vx_t^d  |\vx_0^{1:D}) \Big) \nonumber \\
    &=q_{t|0}(\vx^{1:D} | \vx_0^{1:D}) \sum_{\vx_t^{1:D}}(1-\delta_{\vx^{1:D},\vx_t^{1:D}}) 
    \Big( \sum_{d=1}^D \delta_{\vx^{\backslash d}, \vx_t^{\backslash d}} \cdot  \frac{S_t^d(\vx^d|\vx_t^d)}{S_t(\vx^{\backslash d} \circ \vx_t^d  )}  \frac{q_{t|0}(\vx_t^d | \vx_0^d)}{q_{t|0}(\vx^d | \vx_0^d)}  \Big ) \nonumber \\
    & = q_{t|0}(\vx^{1:D} | \vx_0^{1:D}) \ \gM_{S_t}(\vx^{1:D}|\vx^{1:D}_0) \text{ (Based on $\gM_{S_t}$ in \eqref{eq:M_St}).} \label{eq:specific_mt_for_single_pass}
\end{align}

The original loss in \eqref{eq:ctmc_vlb_final} can be written as
\begin{align}
    \eqref{eq:ctmc_vlb_final}&= T\ \E_{\substack{t\sim\text{Uni}(0,T) \\ \vx^{1:D} \sim q_{t|0}(\cdot|\vx_0^{1:D}) }} 
\Big[ 
\sum_{d=1}^D r^d_t(\vx^d|\cdot)^\top  g_t^{\theta,d}(\cdot| \vx^{1:D}) \Big]- \nonumber\\
&T\ \E_{\substack{t\sim\text{Uni}(0,T) \\ \vx^{1:D} \sim q_{t|0}(\cdot|\vx_0^{1:D}) 
\\\vz^{1:D} \sim S_t(\cdot | \vx^{1:D})}} \Big[\frac{1}{\gM_{S_t}(\vz^{1:D}|\vx^{1:D}_0)}
\sum_{d=1}^D \frac{\1^\top
[q_{t|0}(\cdot |\vx_0^d) \odot 
r_t^d(\vz^d | \cdot) \odot
\log g_t^{\theta,d}(\cdot | \vz^{1:D})]
}{q_{t|0}(\vz^d|\vx_0^d)} 
\Big]  \nonumber 
\end{align}

Notice that the second part of the loss can rewritten to 

\begin{align}
&\E_{\substack{t\sim\text{Uni}(0,T) \\ \vz^{1:D} \sim \tilde{q}_{t|0}(\cdot|\vx_0^{1:D}) }} \Big[\frac{1}{\gM_{S_t}(\vz^{1:D}|\vx^{1:D}_0)}
\sum_{d=1}^D \frac{\1^\top
[q_{t|0}(\cdot |\vx_0^d) \odot 
r_t^d(\vz^d | \cdot) \odot
\log g_t^{\theta,d}(\cdot | \vz^{1:D})]
}{q_{t|0}(\vz^d|\vx_0^d)} 
\Big]  \nonumber \\
=& \E_t \sum_{\vz^{1:D}}
 \Big[\frac{\tilde{q}_{t|0}(\vz^{1:D}|\vx_0^{1:D})}{\gM_{S_t}(\vz^{1:D}|\vx^{1:D}_0)}
\sum_{d=1}^D \frac{\1^\top
[q_{t|0}(\cdot |\vx_0^d) \odot 
r_t^d(\vz^d | \cdot) \odot
\log g_t^{\theta,d}(\cdot | \vz^{1:D})]
}{q_{t|0}(\vz^d|\vx_0^d)} 
\Big] \text{(now take \eqref{eq:specific_mt_for_single_pass} into)} \nonumber \\
=&\E_t \sum_{\vz^{1:D}}
 \Big[q_{t|0}(\vz^{1:D}|\vx_0^{1:D})
\sum_{d=1}^D \frac{\1^\top
[q_{t|0}(\cdot |\vx_0^d) \odot 
r_t^d(\vz^d | \cdot) \odot
\log g_t^{\theta,d}(\cdot | \vz^{1:D})]
}{q_{t|0}(\vz^d|\vx_0^d)} 
\Big] \nonumber \\
=& \E_{\substack{t\sim\text{Uni}(0,T) \\ \vz^{1:D} \sim q_{t|0}(\cdot|\vx_0^{1:D}) }} 
\Big[ \sum_{d=1}^D \frac{\1^\top
[q_{t|0}(\cdot |\vx_0^d) \odot 
r_t^d(\vz^d | \cdot) \odot
\log g_t^{\theta,d}(\cdot | \vz^{1:D})]
}{q_{t|0}(\vz^d|\vx_0^d)} 
\Big] \label{eq:rewrite_second_part_loss}
\end{align}

Taking \eqref{eq:rewrite_second_part_loss} back into \eqref{eq:ctmc_vlb_final} we can get the following further simplified loss that only have a single forward pass of the model:
\begin{align}
     &T\ \E_{\substack{t\sim\text{Uni}(0,T) \\ \vx^{1:D} \sim q_{t|0}(\cdot|\vx_0^{1:D}) }} 
\Big[ 
\sum_{d=1}^D r^d_t(\vx^d|\cdot)^\top  g_t^{\theta,d}(\cdot| \vx^{1:D}) - \sum_{d=1}^D \frac{\1^\top
[q_{t|0}(\cdot |\vx_0^d) \odot 
r_t^d(\vx^d | \cdot) \odot
\log g_t^{\theta,d}(\cdot | \vx^{1:D})]
}{q_{t|0}(\vx^d|\vx_0^d)} 
\Big] \nonumber\\
&=T\ \E_{\substack{t\sim\text{Uni}(0,T) \\ \vx^{1:D} 
    \sim q_{t|0}(\cdot|\vx_0^{1:D}) }} 
\Big[ 
\sum_{d=1}^D r^d_t(\vx^d|\cdot)^\top 
\Big( g_t^{\theta,d}(\cdot| \vx^{1:D}) 
- 
\frac{
q_{t|0}(\cdot |\vx_0^d) \odot 
\log g_t^{\theta,d}(\cdot | \vx^{1:D})
}{q_{t|0}(\vx^d|\vx_0^d)} \Big) 
\Big] 
\end{align}
\end{proof}


\subsection{Algorithm for Unified Training and Generation}\label{ssec:alg-unify}
Alg. \ref{alg:overall} and Alg. \ref{alg:generation} demonstrate our unified training and generation framework.

\begin{algorithm}
\caption{\method Unified Training: \textcolor{purple}{Red: discrete-time step}, and \textcolor{blue}{blue: continuous-time step}.}
\label{alg:overall}
\begin{algorithmic}[1]
    \STATE {{\bfseries Input:}}  A stationary distribution $\vm$, samples  $\sim p_\text{data}$, weight factor $\lambda$, \textcolor{purple}{max time $T$}
    \STATE \textbf{repeat}
        \STATE \quad Draw $\rvx_0 \sim p_\text{data}(\rvx_0)$
        \STATE \quad Draw \textcolor{purple}{$t \sim \textbf{Uniform}({0,...,T})$} or \textcolor{blue}{$t \sim \textbf{Uniform}(0,1)$}
        \STATE \quad Compute $\overline{\alpha}_{t}$ from Table \ref{tab:noise_schedule}
        \STATE \quad Draw $\rvm_0^{1:D} \sim \text{Cat}(\rvm_0^{1:D};\vm)$
        \STATE \quad 
        $\rvx^{1:D}_{t|0}  = \delta^{1:D}_{1, \rvb^{1:D}_t} \odot \rvx^{1:D}_0 + (1-  \delta^{1:D}_{1, \rvb^{1:D}_t})\odot\rvm^{1:D}_0 $
    $\text{ where } \rvb_t \sim \text{Bernoulli} (\overline{\alpha}_t)$, from \eqref{eq:forward_sampling}
        \STATE \quad Compute $f_t^\theta(\vx_t^{1:D})$ as parameterization of $p_{\theta}(\rvx_{0}^{1:D} | \rvx_{t}^{1:D})$ 
        \STATE \quad Take gradient descent step on
     \textcolor{purple}{ $\nabla_{\theta}\{\Ls_t(\theta)+ \lambda \Ls^{CE}_t(\theta)\}$ (from \eqref{eq:vlb-discrete} $ + $  \eqref{eq:lt_approx})} 
     \STATE \quad \quad \quad \quad \quad \quad \quad \quad \quad \quad or \textcolor{blue}{ $\nabla_{\theta}\{\mathcal{L}^{CTMC}_{t}(\theta)+ \lambda \Ls^{CE}_t(\theta)\}$ (from \eqref{eq:CTMC_VLB_0} $ + $  \eqref{eq:lt_approx})} 
\STATE \textbf{until convergence}
\end{algorithmic}
\end{algorithm}
\begin{algorithm}
\caption{\method Unified Sampling}
\label{alg:generation}
\begin{algorithmic}[1]
    \STATE {{\bfseries Input:}}  A stationary distribution $\vm$, $\{t_i\}_{i=0}^n$ s.t. 
    $0=t_0 < t_1 < ... < t_n=T$, learned $f^{\theta}_{t_i}$. 
    \STATE {\bfseries MCMC Input:} use$\_$MCMC, step size $\triangle n$, total steps $N$.
    \STATE Set $\rvx_{n}^{1:D} \sim
    \text{Cat} (\rvx_{n}^{1:D}; \vm)$
    \STATE \textbf{for} $i \in \{n,...,0\}$ \textbf{do}
        \STATE \quad Compute $f_{t_i}^\theta(\vx_{i}^{1:D})$ and $p_\theta(\rvx_{i-1}^{1:D} | \rvx_{i}^{1:D})$ from \eqref{eq:s|t}
        \STATE \quad Draw $\rvx_{i-1}^{1:D} \sim \text{Cat}(\rvx_{i-1}^{1:D} ;p_\theta(\rvx_{i-1}^{1:D} | \rvx_{i}^{1:D})) $
        \STATE \quad \textbf{If} use$\_$MCMC: 
        \STATE \quad \quad $\rvx_{i-1}^{1:D} \leftarrow \text{MCMC$\_$Corrector}(t_i, \rvx_{i-1}^{1:D},  \triangle n, f_{t_i}^\theta, N)$ (Algo. \ref{alg:cap})
\STATE \textbf{return $\rvx_0$}
\end{algorithmic}
\end{algorithm}

\subsection{Continuous-Time MCMC Sampling Corrector \& Noise Scheduling}\label{ssec:cont_other}

\citet{campbell2022continuous} show that the MCMC for discrete data can be done by a predictor
step to simulate $\widehat{r}^\theta_t$
and a corrector step using $r_t + \widehat{r}^\theta_t$. We extend this result with improved derivation and show that,  thanks to the shared parameterized form, \textit{both} discrete- and continuous-time discrete diffusion can leverage the same transition probability calculation (see \eqref{eq:corrector_CTMC_prob}), leading to a shared MCMC scheme, with detailed derivation provided in  Appx. \S\ref{ssec:cont_other}.

\subsubsection{The MCMC Sampling Corrector}
\cite{song2021scorebased} introduced a predictor-corrector step to further improve the quality of generated data based on score-based Markov Chain Monte Carlo (MCMC) for continuous-time diffusion over continuous distribution. \cite{campbell2022continuous} showed that there is a similar MCMC based corrector that can be used for CTMC to improve reverse sampling at any time $t$. Although we use different reverse sampling than \cite{campbell2022continuous}, the similar corrector step can also be developed to improve the quality of reverse sampling introdued in \S \ref{ssec:CTMC_reverse_sampling}. In this section, we derive the corrector formally and simplify it based the multi-element formulations summarized in \eqref{eq:corrector_CTMC_prob}. 

Formally, at any time $t$, \cite{campbell2022continuous} proved that a \textit{time-homogeneous} CTMC with transition rate being $c_t:= r_t + \hat{r}_t$ has its stationary distribution being $q_t(\rvx^{1:D}_t)$. To avoid ambiguity, we use $n \in [0, +\infty)$ as the time variable for that CTMC with stationary distribution $q_t(\rvx^{1:D}_t)$. Then for any sample $\vz_t^{1:D}$ generated from reverse sampling process at time $t$, we can push it closer to the target marginal distribution $q_t(\rvx^{1:D}_t)$ by sampling from the corrector CTMC with initial value being $\vz_t^{1:D}$, named as $\vz_{t,n=0}^{1:D}$. Let $N$ be the maximum time allocated in the corrector CTMC, then after the corrector step $\vz_{t,n=N}^{1:D}$ is used to replace the original $\vz_t^{1:D}$. 

We now introduce how to sampling from the CTMC. Let $\triangle n$ be the time incremental for each sampling step of the corrector CTMC. Solving the Kolmogorov forward equation of this time-homogeneous CTMC can derive the transition probability at any time $n$ as 
\begin{align}
   \forall n \text{ and } \triangle n, \   p_{n+\triangle n|n}( \vy^{1:D} | \vx^{1:D}) =  \exp(\triangle n\cdot C^{1:D}_t)[\vx^{1:D}, \vy^{1:D}] 
\end{align}
Where $C_t$ is the transition rate matrix of the corrector CTMC, and 
$\exp(\triangle n\cdot C^{1:D}_t)$ is the transition probability matrix at time $n$. Notice that this matrix exponential does not have analytical formulation. Instead, we propose to control $\triangle n$ to be small enough such that 
\begin{align}
    \exp(\triangle n\cdot C^{1:D}_t) \approx I + \triangle n \cdot C^{1:D}_t 
\end{align}
Then taking it back we can obtain 
\begin{align}
     p_{n+\triangle n|n}( \vy^{1:D} &| \vx^{1:D}) \approx \delta_{\vx^{1:D}, \vy^{1:D}} + \triangle n \cdot c^{1:D}_t(\vy^{1:D}|\vx^{1:D})
     \nonumber\\
     &=  \delta_{\vx^{1:D}, \vy^{1:D}} + \triangle n \sum_{d=1}^D \Big[ r_t^d(\vy^d | \vx^d) + r_t^d(\vx^d| \vy^d)\cdot g_t^d(\vy^d|\vx^{1:D})
     \Big] \delta_{\vx^{\backslash d}, \vy^{\backslash d}} 
\end{align}

Instead of sampling all elements jointly, we propose to sample each element of the object independently from their individual marginal distribution, which can be analytically formulated as   
\begin{align}
    p_{n+\triangle n|n}( \vy^d | \vx^{1:D}) &= \sum_{\vy^{\backslash d}} p_{n+\triangle n|n}( \vy^{1:D} | \vx^{1:D}) \nonumber\\
    &= \triangle n \Big[ r_t^d(\vy^d | \vx^d) + r_t^d(\vx^d| \vy^d)\cdot g_t^d(\vy^d|\vx^{1:D})
     \Big] \text{ if } \vy^d \neq \vx^d \nonumber \\
    &=  \triangle n (\vy^d)^\top \Big[ r_t^d(\cdot | \vx^d) + r_t^d(\vx^d| \cdot )\odot g_t^d(\cdot |\vx^{1:D})
     \Big] \text{ if } \vy^d \neq \vx^d  \nonumber \\
    &=  \triangle n \beta(t) (\vy^d)^\top \Big[\vm^d - \vx^d + \big(\langle \vx^d, \vm^d  \rangle\1 - \vx^d\big)\odot g_t^d(\cdot| \vx^{1:D}) 
    \Big ] \text{ if } \vy^d \neq \vx^d \nonumber \\
    &= \triangle n \beta(t) (\vy^d)^\top \Big[\vm^d + \langle \vx^d, \vm^d  \rangle g_t^d(\cdot| \vx^{1:D}) \Big ] \text{ if } \vy^d \neq \vx^d \nonumber \\
    &\approx \triangle n \beta(t) (\vy^d)^\top \Big[\vm^d + \langle \vx^d, \vm^d  \rangle g_t^{\color{orange}\theta, d}(\cdot| \vx^{1:D}) \Big ] \text{ if } \vy^d \neq \vx^d
\end{align}

Now we define the notation $\overline{p_{\triangle n}( \cdot | \vx^{1:D})}$  to derive the distributional form of $p_{n+\triangle n|n}( \vy^d | \vx^{1:D})$. 
\begin{align}
    \overline{p^{\theta,d}_{\triangle n}( \cdot | \vx^{1:D})}:= \triangle n \beta(t)
     \Big[ 
        \big(2 - \frac{\overline{\alpha}_{t|0} \langle f_t^{\theta,d}(\vx^{1:D}), \vx^d \rangle }{\overline{\alpha}_{t|0} + (1-\overline{\alpha}_{t|0})\langle \vx^d, \vm^d \rangle }\big) \vm^d +  
        \frac{\overline{\alpha}_{t|0}}{1-\overline{\alpha}_{t|0}} f_t^{\theta,d}(\vx^{1:D})
    \Big] \odot(\1 - \vx^d) \label{eq:helper_of_corrector}
\end{align}
With the above notation, the sampling probability can be further simplified as 
\begin{align}
    p_{n+\triangle n|n}( \vy^d | \vx^{1:D}) = \text{Cat}\Big(\vy^d; 
    \overline{p^{\theta,d}_{\triangle n}( \cdot | \vx^{1:D})}
    + \big(1-\1^\top\overline{p^{\theta,d}_{\triangle n}( \cdot | \vx^{1:D})}\big)\vx^d
    \Big) \label{eq:corrector_CTMC_prob}
\end{align}
Notice that $\triangle n$ should be set small enough such that $\1^\top\overline{p^{\theta,d}_{\triangle n}( \cdot | \vx^{1:D})} \le 1$. This condition can be used to derive $\triangle n$ dynamically. In practice, we can also easily clip the scale of $\overline{p^{\theta,d}_{\triangle n}( \cdot | \vx^{1:D})}$ to 1 when  $\1^\top\overline{p^{\theta,d}_{\triangle n}( \cdot | \vx^{1:D})} > 1$ to prevent illness condition. Intuitively, $1 - \1^\top\overline{p^{\theta,d}_{\triangle n}( \cdot | \vx^{1:D})}$ defines the keeping rate of the $d$-th element during correction step, and it should be larger with increasing $t$ and $n$ during the reverse sampling and correction period. 

\begin{algorithm}
\caption{The MCMC correcting algorithm at time $t$}\label{alg:cap}
\begin{algorithmic}
    \STATE {\bfseries Input:}  The sample at time $t$ from reverse sampling, $\vz_t^{1:D}$; step size $\triangle n$; learned $f^{\theta}_t$; total steps $N$.
    \STATE \textbf{Initialize} $\vx^{1:D}_{t,0} \gets \vz_t^{1:D}$
    \FOR{$i$ from $1$ to  $N$}
        \STATE Compute $\overline{p^{\theta,d}_{\triangle n}( \cdot | \vx^{1:D}_{t,i-1})}$ from \eqref{eq:helper_of_corrector};
        \STATE $\forall d$, Draw $\vx^{d}_{t,i} \sim \text{Cat}\Big(\ \cdot\ ; 
    \overline{p^{\theta,d}_{\triangle n}( \cdot | \vx^{1:D}_{t,i-1})}
    + \big(1-\1^\top\overline{p^{\theta,d}_{\triangle n}( \cdot | \vx^{1:D}_{t,i-1})}\big)\vx^d_{t,i-1} 
    \Big)$ 
    \ENDFOR
\STATE {\bfseries Output:}  the improved (corrected) sample $\vx^{1:D}_{t,N}$
\end{algorithmic}
\end{algorithm}

\subsubsection{Noise Scheduling in Training} 
Notice that $\beta(t)=-\frac{1}{\overline{\alpha}_t}\cdot \frac{d\overline{\alpha}_t}{dt}$ as $\overline{\alpha}_{t|s} = \exp(- \int_s^t \beta(a) da)$. We now first present a general way to design the scheduling of $\overline{\alpha}_t$ based on \cite{chen2023importance}. For any continuous function $h(t)$, we define $\overline{\alpha}_t$ as the follows that satisfies $\overline{\alpha}_0 = 1 $ and $\overline{\alpha}_T = 0$:  
\begin{align}
    \overline{\alpha}_t = \frac{h(T) - h(t)}{ h(T)- h(0)}
\end{align}
We can easily derive that 
\begin{align}
    \beta(t) &= -\frac{1}{\overline{\alpha}_t}\cdot \frac{h'(t)}{h(0)-h(T)}
      = \frac{h'(t)}{h(T) - h(t)} \\
    \overline{\alpha}_{t|s} &= \exp(- \int_s^t \frac{d h(t)}{h(T)-h(t)})
    = \exp(\int_s^t \frac{d (h(T) - h(t))}{h(T)-h(t)}) = \frac{h(T)-h(t)}{h(T)-h(s)}
\end{align}

Based on the above general formulation, we now present some widely used noise schedules 

\begin{table}[H]
\fontsize{8.7}{9}\selectfont
 \setlength{\tabcolsep}{1pt}
    \caption{Widely used noise scheduling with maximum time T}\label{tab:noise_schedule}
    \centering
    \begin{tabular}{lcc|cccc}
    \toprule
    Type & Param. & Reference & $h(t)$ & $\overline{\alpha}_t$ & $\beta(t)$ & $\overline{\alpha}_{t|s}$  \\
    \midrule
    Cosine & $a=0.008 $
           & \citep{hoogeboom2021argmax} 
           & $\cos(\frac{t/T + a}{1+a}\frac{\pi}{2})$ 
           & $\frac{h(t)}{h(0)}$ 
           & $\frac{\pi \tan(\frac{t/T + a}{1+a}\frac{\pi}{2})}{2T(1+a)}$ 
           & $\frac{h(t)}{h(s)}$\\
    Linear & - 
           & \citep{DDPM} 
           & $t$ 
           & $1-\frac{t}{T} $ 
           & $\frac{1}{T-t}$ 
           & $\frac{T-t}{T-s}$ \\
    Exp. & $a, b$ 
               & \citep{campbell2022continuous} 
               & - 
               & $\exp(Ta(1-b^{\frac{t}{T}}))$ 
               & $ab^{\frac{t}{T}}\log b$ &$\exp(Ta(b^{\frac{s}{T}}-b^{\frac{t}{T}}))$\\
    \bottomrule
    \end{tabular}
\end{table}

\subsection{Experiment Details}
\label{ssec:exp}

\subsubsection{Lakh Piano Dataset Details}
\label{sssec.piano}
The Lakh pianoroll dataset contains $6,000$ training and $973$ evaluating piano sequences, with each music sequence spanning a length of $256$ in total. Each music note in the sequence can take on a value of the $128$ music notes plus $1$ additional class meaning an empty note. The music note orderings are scrambled into the same random order as described in \cite{campbell2022continuous} such that the ordinal structure of the music notes are destroyed.

For evaluation, the first $32$ notes of the $967$ evaluation sequences are given to the model, while the model is asked to generate the resting $224$ notes. Upon an analysis of the training and evaluation music sequences, we find that a total of $124$ evaluation samples can be found with at least one matching training samples that have the same $32$ dimensions. We separate out these samples and call the set \pone. Among \pone, 20 samples can be found to contain the same first $32$ notes with $2$ samples from the training sequences, three of them contain the same first $32$ notes with $4$ training samples each, one of them shares the same with $6$ training samples, and one shares the same with $8$ training samples. If the same $32$ notes appear both in the training samples and as quests for the model to provide the inferences, the model is likely to directly memorizing the remaining $224$ notes from the training set, and "parroting" music sequences according to the training samples. The rest $843$ evaluation samples that do not have matching training samples are constituent of \ptwo.

\subsubsection{Pre-training VQGAN}
\label{sssec.vqgan}
To generate images in a categorical discrete latent space, we follow the implementation of VQGAN \cite{esser2020taming} in MaskGIT \cite{chang2022maskgit}. Specifically, we use the same VQGAN setting as mentioned in \cite{sun2023scorebased}. VQGAN is a variant of Vector Quantized Variational Autoencoder (VQ-VAE) by \cite{oord2018neural}.
In our setup for CIFAR10, a VQGAN encodes an image of shape $H \times W \times 3$ to $(\frac{H}{4} \times \frac{W}{4})$ tokens with vocabulary size of $512$. For the encoder, we use three convolutional blocks, with filter sizes of $64$,$128$ and $256$, and an average pooling between each blocks. For each block, it consists of two residual blocks. After an image is encoded, the output is mapped to a token index with a codebook of $512 \times 256$. For VQGAN loss objective, the additional GAN loss and perceptual loss are added with weight $0.1$. To train a general VQGAN model that allows us to embed CIFAR10 images without overfitting, we apply data augmentation (random flipping and cropping) to the $64 \times 64$ version of ImageNet dataset [\cite{imagenet}], and train for $90$ epochs. The VQGAN is trained with Adam optimizer ($\beta_1 =0$, $\beta_2 = 0.99$), and the learning rate is linearly warmup to the peak of $1e^{-4}$ and then drops with cosine decay. 

After VQGAN is trained, we freeze the VQGAN and apply encoder to the CIFAR10 images to create $8 \times 8$ latent codes. The latent codes then flattened to a vector of size $64$ as the input of the diffusion model. Before we evaluate our diffusion methods, we will feed the generated latent codes back to the VQGAN decoder to reconstruct a sample in the image spaces. We test the effectivenss of VQGAN by trying to reconstruct the CIFAR10 dataset. The reconstruction gives a FID of $7.68$ and IS of $10.42$, using the Inception V3 Model \footnote{\url{https://github.com/openai/consistency_models/tree/main/evaluations}}.

\subsubsection{Music Generation Eval Metrics}
\label{sssec.music_metrics}
In evaluation of the conditional music generation task, we apply the following metrics to measure generation quality: 
\begin{itemize}[nosep, left=5pt,itemsep=4pt,leftmargin=1em,labelwidth=*,align=left]
\item $1$-gram Hellinger Distance (\textdownarrow) and $1$-gram Proportion of Outliers (\textdownarrow): for these two metrics, we following the same evaluation as described in \cite{campbell2022continuous} and \cite{chen2023importance}.
\item $\{2,3\}$-gram Hellinger Distance (\textdownarrow) and $\{2,3\}$-gram Proportion of Outliers (\textdownarrow): similar to n-gram models, we first convert the music sequences into tuples of neighboring nodes. Then for Hellinger Distance, we compute the distance of the empirical probabilities of conditional generated samples to the ground truth samples. The empirical probabilities are constructed based on the histograms of the neighboring tuples, with bins being all possible $\{2,3\}$-gram nodes. Similarly, for the Proportion of Outliers, we count the fractions of newly appeared tuples that are not seen in the training samples. With these metrics, we are able to capture the sequence information instead of just measuring the single node distributions.
\item Diverse Edit Distance (\textuparrow): which
accounts for the creativity/novelty of generated samples across multiple generation runs. For the  conditionally generated music samples given the same first $32$ notes, we calculate the edit distance, which is the minimum number of single-character edits (insertions, deletions, or substitutions) required to change one music sequence into the other, between each two of the generation samples. The mean and standard deviation are obtained for all edit distances between pairs. The higher the diverse edit distance, the further apart the music sequences are and more creativity are enforced in the generation process. However, there is also a trade-off between diverse edit distance and other accuracy measurements when the model processes large uncertainty about the underlying distribution.

\item Train-to-Test
Ratios (\textuparrow) for $\{1,2,3\}$-gram Hellinger as well as Proportion of Outliers, which compare the weighted distance
of a generated sample to its evaluation (test) vs. training
sequence that share the same first 32 notes. We evaluate the ratios only for \pone. Denote the evaluation ground truth set in \pone as \textit{tr} , the corresponding set of training samples as \textit{ts}, and conditionally generated samples as \textit{gs}. For the selected distance metrics \textit{dist()} (from n-gram Hellinger or n-gram Proportion of Outliers), we calculate the ratio as: $\frac{1}{\textit{dist(tr,gs)} + \textit{dist(ts,gs)}} * \frac{\textit{dist(tr,gs)}}{\textit{dist(ts,gs)}}$. Such ratio measures quantify the extent of “parroting” in \pone. The larger the ratio, the more equally distant the generated examples is from its training and evaluating set, and the less "parroting" occurs by simply memorizing all training sequences. We apply an additional coefficient to the ratio such that it will also penalize the models that provide unrealistic examples that do not conform both training and evaluation distributions.
\end{itemize}

\subsubsection{Training Details}
\label{ssec.training_details}
\textbf{\method Lakh Pianoroll Training Details.} 

For the backbone sequence transformer structure, we adopt a similar transformer model as utilized by in \tauldr-0 \cite{campbell2022continuous}. The sequence transformer is composed of several encoder blocks, where for each internal block, time is fed into the block through a FiLM layer and added to the input. The result is then fed into a multi-headed self-attention layer and fully connected layer. We use RELU for activation. At the output of self-attention layer and fully-connected layer, a dropout of 0.1 is applied. After obtaining the final embedding of each token, we feed that into
a 1-block ResNet to obtain the predicted logits. In comparison with other baseline metrics, the transformer contains 6 encoder blocks, each containing 16 attention heads, input dimension of 1024 and MLP dimension of 4096. In ablation study, we also test our methods on a smaller architecture that contains 6 encoder blocks, each containing 8 attention heads, with input dimension of 128 and MLP dimension of 1024. 

For training the pianoroll dataset, we use a batch size of $64$, a learning rate of $5e^{-4}$, with a warmup of first $25$ epochs. We adopt a constant learning rate decay scheduler, decaying the learning rate by half after every $500$ epochs. The final result is given over $3000$ epochs. We run our results with 2 A6000 GPUs. In discrete-time diffusion, we sample 1000 number of timesteps, fixing a cosine scheduler with $\alpha = 0.008$, In continuous-time diffusion, we sample time $t$ between $[0,1]$ and apply a constant scheduler with rate equals $0.007$. We maintain an exponential moving average of parameters with decay factor $0.9999$. We clip the gradient norm at a norm value of $1.0$.

\textbf{Baseline Training Details.}

For \textbf{\dpm} and \textbf{\tauldr-0}, for fair comparison, we use the same architecture, diffusion scheme and training scheme. For calculating the loss of both methods, we follow the previous literature and give $0.001$ to CE loss, and $1$ to the VLB loss. For \textbf{\rdm}, we use the official implementation of loss function in reparam multinomial diffusion\footnote{\url{https://github.com/HKUNLP/reparam-discrete-diffusion}}.We keep the same neural network architecture, and apply reweighting = 1.0, with no label smoothing, and let sampling strategy be cmlm-dep. Similar to \dpm and \method, we apply cosine scheduler with uniform noise. We remove the padding functions and sampling temperatures when adapting \rdm from language tasks to music and image generation.

For \textbf{\sedd}, we follow the official code implementation with uniform noise and loss functions\footnote{\url{https://github.com/louaaron/Score-Entropy-Discrete-Diffusion}}. While keeping the same neural architectures as \dpm and \method, we add an additional masking to the output (the same masking as in \sedd's diffusion transformer), since \sedd models the ratio of two probabilities instead of direct log probabilities. We use log-linear noise schedules with $1e^{-3}$. 

For \textbf{\sddm}, since it adopts a different architecture, directly utilizing \sddm with our experiment configurations is not feasible. Instead, we use the experiment configurations as given in the paper: the backbone structure is a hollow transformer \cite{sun2023scorebased}. Each transformer block has 6 layers with embedding size of 256, 8 attention heads and hidden dimension of 2048. The batch size is 64 and number of training steps is 2 million. The weight decay is set to $1e^{-6}$. The learning rate is at constant $1e^{-3}$. For diffusion, \sddm adopts the constant noise scheduler with a uniform rate constant of $0.04$.

\textbf{\method VQCIFAR10 Training Details.}

For image generation task, we parameterize $f_t^{\theta}(\mathbf{x}_t)$ with the same sequence transformer as mentioned before. The model is a $12$ layer transformer, where each layer has 16 attention heads, input embedding of $768$ and a hidden layer size of $3072$ for the MLPs. We use ReLU for activation. At the output of each internal block, a dropout of $0.1$ is applied. The time is input into the network through FiLM layers before the self attention block. We use a learning rate of $5e^{-4}$, with warmup of $50000$ steps, and a cosine learning rate decay scheduler, to train 2 million steps. In discrete-time diffusion, all the \method and its variants use the same cosine noise scheduler with $\alpha = 0.008$.  For continuous-time diffusion, \methodce and \methodsim apply the cosine noise scheduler with $\alpha = 0.008$. We find that \method is extremely hard to optimize due to the scale differences of coefficient $\beta(t)$, so we provide \methodsim which clips the 
$\beta(t) = \max(1, \beta(t))$. \method utilizes a constant noise scheduler with 0.007, to match the scheduler in \tauldr-0 and \sddm. We maintain an exponential moving average of parameters with decay factor $0.9999$. We clip the gradient norm at a norm value of $1.0$.

\textbf{Baseline Training Details.}

For \textbf{\tauldr-0} and \textbf{\dpm}, we again use the same transformer architecture and same training scheme. We apply a hybrid loss with cross entropy as a directly supervision, added with $0.001$ CE loss. The \tauldr-0 uses a constant rate noise scheduler of $0.007$, while \dpm uses cosine scheduler with $\alpha=0.008$. We train all models in parallel on $2$ A6000 GPUs. For \textbf{\dpm} and \textbf{\tauldr-0}, for fair comparison, we use the same architecture, diffusion scheme and training scheme. For calculating the loss of both methods, we follow the previous literature and give $0.001$ to CE loss, and $1$ to the VLB loss. For \textbf{\rdm}, we apply reweighting = 1.0, with no label smoothing, and let sampling strategy be cmlm-dep. We apply cosine scheduler with uniform noise. For \textbf{\sedd}, we use log-linear noise schedules with $1e^{-3}$.

For \textbf{\sddm}, the model uses a masked modeling, where backbone neural network is BERT-based, with 12
layers of transformers. Each layer has 12 attention heads, embedding size of 768 and hidden layer size of 3072 for MLPs. After obtaining the final embedding of each token, the output is fed that into
a 2-block ResNet to acquire the predicted logits. For diffusion, \sddm uses a constant uniform rate of $0.007$ as the noise scheduler in the forward process. In \cite{sun2023scorebased}, the number of training step is set to $700,000$, where the learning rate is warmed up to $1e^{-4}$ during the first $3\%$ steps, and then decays to 0 in a linear schedule. We extended the training to $2m$ steps, but did not observe improved performance.

\subsection{Additional Experiment Results}

\subsubsection{Full Experiment Results on Music Generation}
We present the full evaluation metrics and results, including mean and standard deviation, for Lakh Pianoroll music generation tasks, as shown in Table \ref{tab:piano_ngram_full}.

\label{ssec:full_experiment_result}
\begin{table*}[ht]
\centering
\caption{ Metrics comparing generated conditional samples and evaluating ground truths for Lakh Pianoroll. For each of n-gram Hellinger Distance (ng.-Hellinger) and Proportion of Outliers (ng.-Prop. Outlier) and Diverse Edit Distance metrics, we show mean $\pm$ std with respect to 3 generated samples. The top two are highlighted by \first{First}, \second{Second}.}
\scalebox{0.67}{
\begin{tabular}{llccccccc}
\toprule
& Method & 1g.-Hellinger(\textdownarrow) & 2g.-Hellinger(\textdownarrow)  & 3g.-Hellinger(\textdownarrow) & 1g.-Prop.Out.(\textdownarrow)  & 2g.-Prop.Out.(\textdownarrow) & 3g.-Prop.Out.(\textdownarrow) & Edit Distance(\textuparrow) \\
\midrule
\multirow{4}{*}{\scalebox{0.86}{\rotatebox{90}{Discrete-time}}} & \dpm  & 0.3982$\pm$0.0004  & 0.5303$\pm$0.0038 & 0.5918$\pm$0.0046 & 0.1209$\pm$0.0008 & 0.2532$\pm$ 0.0053 & 0.3790$\pm$0.0059 & \first{0.2950$\pm$0.0775}\\
& \rdm  & 0.4123$\pm$0.0012  & 0.5964$\pm$0.0021 & 0.6198$\pm$0.0039 & 0.1401$\pm$0.0011 & 0.2459$\pm$ 0.0023 & 0.3864$\pm$0.0030 & 0.2891$\pm$0.0690\\
\cmidrule{2-9}
& \scalebox{0.78}{\methodce} & 0.3754$\pm$0.0007 & 0.4835$\pm$0.0009 & 0.5741$\pm$0.0008 & 0.1079$\pm$0.0002 & 0.2099$\pm$0.0005 & 0.3034$\pm$0.0007 & 0.0472$\pm$0.0465  \\
& \scalebox{0.78}{\methodvlb} & 0.3790$\pm$0.0009 & 0.4640$\pm$0.0010 & \first{0.5427$\pm$0.0009} & 0.1174$\pm$0.0007 & 0.1845$\pm$0.0010 & 0.2734$\pm$0.0011 & \second{0.0828$\pm$0.0567}\\
& \method & 0.3770$\pm$0.0011 & 0.4693$\pm$0.0015 & 0.5525$\pm$0.0015 &    \second{0.1077$\pm$0.0006} & 0.1861$\pm$0.0011 & 0.2861$\pm$0.0013 & 0.0648$\pm$0.0459
\\
& \methodsim & 0.3753$\pm$0.0013 & 0.4704$\pm$0.0010 & 0.5550$\pm$0.0012 & 0.1107$\pm$0.0005  & 0.1911$\pm$0.0005 & 0.2832$\pm$0.0011 & 0.0664$\pm$0.0471 \\
\midrule
\multirow{4}{*}{\scalebox{0.86}{\rotatebox{90}{Continuous-time}}} & \sddm & 0.3759$\pm$0.0020 & 0.4856$\pm$0.0015 & 0.5773$\pm$0.0009 & 0.1101$\pm$0.0015 & 0.2059$\pm$0.0005 & 0.3409$\pm$0.0017 & 0.0606$\pm$0.0249\\
& \tauldr-0 & 0.3796$\pm$0.0009 & 0.4811$\pm$0.0008 & 0.5710$\pm$0.0007 & 0.1149$\pm$0.0008 & 0.2078$\pm$0.0014 & 0.3202$\pm$0.0014 & 0.0553$\pm$0.0637\\
& \sedd & 0.3814$\pm$0.0012 & 0.4712 $\pm$0.0020 & 0.5470 $\pm$0.0013 & \first{0.1052$\pm$0.0010} & \first{0.1823$\pm$0.0015} & 0.2709$\pm$0.0013 & 0.0632$\pm$0.0532\\
\cmidrule{2-9}
& \scalebox{0.78}{\methodce} & \second{0.3734$\pm$0.0002} & 0.4837$\pm$0.0003 & 0.5776$\pm$0.0013 & 0.1158$\pm$0.0004 & 0.2218$\pm$0.0003 & 0.3461$\pm$0.0005 & 0.0434$\pm$0.0357\\
& \scalebox{0.78}{\methodvlb} & 0.3764$\pm$0.0006 & \second{0.4620$\pm$0.0014} & 
0.5487$\pm$0.0018 & 0.1198$\pm$0.0009 & 0.1866$\pm$0.0011 & 0.2952$\pm$0.0011 & 0.0661$\pm$0.0478\\
& \method & \first{0.3735$\pm$0.0005} & \first{0.4617$\pm$0.0012} & \second{0.5432$\pm$0.0018} & 0.1123$\pm$0.0009 & \second{0.1840$\pm$0.0011} & \first{0.2704$\pm$0.0009} & 0.0661$\pm$0.0401 \\
& \methodsim & 0.3737$\pm$0.0004 & 0.4623$\pm$0.0009 & 0.5490$\pm$0.0017 & 0.1143$\pm$0.0008 & 0.1844$\pm$0.0008 & \second{0.2707$\pm$0.0011} & 0.0516$\pm$0.0365\\
\bottomrule
\end{tabular}
}
\label{tab:piano_ngram_full}
\vspace{-0.15in}
\end{table*}

\subsubsection{Study of ``Parroting'' with Ratio Metrics}
\label{ssec:study_parroting}
As shown in Table \ref{tab:piano_parroting}, we find that models with combined losses, \method and
\methodsim, achieve higher Train-to-Test ratio than pure \methodce or
\methodvlb, suggesting that the combination of two
losses can alleviate overfitting, i.e. “parroting” the training data. Overall, \tauldr-0 is a strong competitor in n-gram Hellinger Train-to-Test ratios, suggesting that its VLB focused loss optimization can also alleviate parroting and generate diverse samples.
\vspace{-0.2in}
\begin{table*}[ht!]
\centering
\caption{ Train-to-test Ratio Metrics (\textuparrow) for Lakh Pianoroll, using evaluation sequences fron \pone and the training sequences. The higher ratios indicate less "parroting" phenomenon and better generalization ability. Mean and std are calculated across 3 generated samples.The top two are highlighted by \first{First}, \second{Second}.}
\scalebox{0.67}{
\begin{tabular}{llccc|ccc}
\toprule
& Method & 1g.-Hellinger Ratio & 2g.-Hellinger Ratio& 3g.-Hellinger Ratio& 1g.-Prop.Out. Ratio& 2g.-Prop.Out. Ratio& 3g.-Prop.Out.r Ratio\\
\midrule
\multirow{5}{*}{\scalebox{0.9}{\rotatebox{90}{Discrete-time}}} & \dpm  & 1.7182$\pm$0.0500 & 1.3624$\pm$0.0411 & 1.1307$\pm$0.0304 & 5.6237$\pm$0.1463 & 3.7051$\pm$0.1162 & 2.2216$\pm$0.0616   \\
& \rdm & 1.7534$\pm$0.0928 & 1.4023$\pm$0.00701 & 1.2023$\pm$0.0392 & 5.9981$\pm$0.0497 & 3.6053$\pm$0.1532 & 2.5211$\pm$0.0256\\
\cmidrule{2-8}
& \scalebox{0.78}{\methodce} & 1.7922$\pm$0.0186 & 1.4028$\pm$0.0074 & 1.1906$\pm$0.0090 & 6.3964$\pm$0.4669 & 3.9206$\pm$0.1082 & 2.8887$\pm$0.0399  \\
& \scalebox{0.78}{\methodvlb} & 1.7758$\pm$0.0097 & 1.4123$\pm$0.0118 & 1.2011$\pm$0.0101 & 6.2024$\pm$0.2584 & 3.9093$\pm$ 0.0663 & 2.8635$\pm$0.0479 \\
& \method & 
1.8073$\pm$0.0880& 1.4749$\pm$0.0708 & 1.2078$\pm$0.0518 & \first{6.8742$\pm$0.7366} & \first{4.2448$\pm$ 0.4161} & \first{3.0830$\pm$0.2520}  \\
& \scalebox{0.78}{\methodsim} & 1.7705$\pm$0.0235 & \first{1.4959$\pm$0.0176} & 1.1928$\pm$0.0125 & \second{6.6452$\pm$0.2702} & 3.9867$\pm$0.1188 & 2.9591$\pm$0.0653 \\
\midrule
\multirow{5}{*}{\scalebox{0.9}{\rotatebox{90}{Continuous-time}}} & \sddm & 1.7872$\pm$0.0131 & 1.4516$\pm$0.0097 & 1.1812$\pm$0.0103 & 6.3019$\pm$0.2522 & 4.0652$\pm$0.0591 & 2.9014$\pm$0.0782 \\
& \tauldr-0 & \first{1.8648$\pm$0.0306} & 1.4726$\pm$0.0233 & \first{1.2497$\pm$0.0185} &  6.4586$\pm$0.3798 & 4.0863$\pm$0.2001 & 2.9651$\pm$0.1089 \\
& \sedd & 1.8230$\pm$0.0298& \second{1.4902$\pm$0.0090} & 1.2152$\pm$0.0150 & 6.4361$\pm$0.2001 & 4.0598$\pm$0.0621 &
3.0421$\pm$0.0922\\
\cmidrule{2-8}
& \scalebox{0.78}{\methodce} & 1.7951$\pm$0.0290 & 1.4023$\pm$0.0264 & 1.1474$\pm$0.0206 & 6.3712$\pm$0.4502 & 3.6232$\pm$0.1734 & 2.6379$\pm$0.0823 \\
& \scalebox{0.78}{\methodvlb} & 1.7379$\pm$0.0220 & 1.4117$\pm$0.0250 & 1.1810$\pm$0.0232 & 6.2752$\pm$0.1232 &
3.9321$\pm$0.0949 &
2.9042$\pm$0.0853 \\
& \method & \second{1.8623$\pm$0.0340}  & 1.4516$\pm$0.0077 & 1.2017$\pm$0.0096 & 6.4218$\pm$0.1739 & \second{4.0877$\pm$0.0553} & \second{3.0828$\pm$0.0502} \\
& \methodsim & 1.7970$\pm$0.0106 & 1.4898$\pm$0.0026 & \second{1.2377$\pm$0.0015} & 6.4361$\pm$0.2023 & 3.9340$\pm$0.0284 & 2.9234$\pm$0.0110\\
\bottomrule
\end{tabular}
}
\label{tab:piano_parroting}
\end{table*}

\subsubsection{Model Sizes and Loss Analysis on Piano Dataset}
\label{ssec:model_size}

Table \ref{tab:piano_modelsize} shows the evaluation metrics of different model sizes.This ablation study shows that CE loss is also preferred in combination with VLB, especially for smaller network structures, since VLB is harder to optimize alone.  

\begin{table*}[ht!]
\centering
\caption{ Metrics comparing different loss combinations and different model sizes for Lakh Pianoroll. For each of n-gram Hellinger Distance (ng.-Hellinger) and Proportion of Outliers (ng.-Prop. Outlier) metrics, we show mean $\pm$ std with respect to 3 generated samples. We use \methodvlb to denote an additional variant of our model that only uses the exact VLB loss in training. "Small" refers to the backbone transformer model that has 6 Layers, 8 Attention Heads, Input Dimension of 128 and MLP dimension of 1024. The top two are highlighted by \first{First}, \second{Second}.}
\scalebox{0.67}{
\begin{tabular}{llccccccc}
\toprule
& Method  & 1g.-Hellinger(\textdownarrow) & 2g.-Hellinger(\textdownarrow)  & 3g.-Hellinger(\textdownarrow) & 1g.-Prop.Out.(\textdownarrow)  & 2g.-Prop.Out.(\textdownarrow) & 3g.-Prop.Out.(\textdownarrow) & Edit Distance(\textuparrow) \\
\midrule
\multirow{6}{*}{\scalebox{0.95}{\rotatebox{90}{Discrete-time}}} & \scalebox{0.78}{\methodce-S}  & 0.3984$\pm$0.0006 & 0.4902 $\pm$0.0004 &0.5785$\pm$0.0004 & 0.1158$\pm$0.0002 & 0.1899$\pm$0.0006 & 0.3142$\pm$0.0005 & 0.1301$\pm$0.0613  \\
& \scalebox{0.78}{\method-S}  & 0.4011$\pm$0.0014 & 0.4902$\pm$0.0009 & 0.5707$\pm$0.0008 & 0.1215$\pm$ 0.0007 & 0.1866$\pm$0.0006 & 0.3006$\pm$0.0013 & 0.1292$\pm$0.0623  \\
& \scalebox{0.78}{\methodvlb-S}  & 0.4115$\pm$0.0001 & 0.4954$\pm$ 0.0001 & 0.5738$\pm$0.0005 & 0.1203$\pm$0.0006 & 0.1958$\pm$0.0009 & 0.3036$\pm$0.0010 & 0.2137$\pm$0.0843  \\
\cmidrule{2-9}
& \scalebox{0.78}{\methodce} & 0.3754$\pm$0.0007 & 0.4835$\pm$0.0009 & 0.5741$\pm$0.0008 &  \second{0.1079$\pm$0.0002} & 0.2099$\pm$0.0005 & 0.3034$\pm$0.0007 & 0.0472$\pm$0.0465  \\
& \scalebox{0.78}{\method} & 0.3770$\pm$0.0011 & 0.4693$\pm$0.0015 & 0.5525$\pm$0.0015 &    \first{0.1077$\pm$0.0006} & \second{0.1861$\pm$0.0011} & 0.2861$\pm$0.0013 & 0.0648$\pm$0.0459 \\
& \scalebox{0.78}{\methodvlb}  & 0.3790$\pm$0.0009& 0.4640$\pm$0.0010 & \first{0.5427$\pm$0.0009} & 0.1174$\pm$0.0007 & 0.1845$\pm$0.0010 & \second{0.2734$\pm$0.0011} & 0.0828$\pm$0.0567  \\
\midrule
\multirow{6}{*}{\scalebox{0.95}{\rotatebox{90}{Continuous-time}}} & \scalebox{0.78}{\methodce-S}  & 0.4208$\pm$0.0170 & 0.5196$\pm$0.0078 & 0.6072$\pm$0.0005& 0.1355$\pm$0.0147 & 0.2276$\pm$0.0007 & 0.3431$\pm$0.0181 & 0.2358$\pm$0.0619  \\
& \scalebox{0.78}{\method-S}  & 0.4239$\pm$0.0012& 0.5083$\pm$0.0011 & 0.5852$\pm$0.0011 & 0.1403$\pm$0.0007 & 0.2070$\pm$0.0008 & 0.2943$\pm$0.0016  & \second{0.2468$\pm$0.0907}  \\
& \scalebox{0.78}{\methodvlb-S}  & 0.4435$\pm$0.0012 & 0.5295$\pm$0.0011 & 0.6070$\pm$0.0009 & 0.1562$\pm$0.0014 & 0.2269$\pm$0.0013 & 0.3168$\pm$0.0013 & \first{0.2809$\pm$0.0866}  \\
\cmidrule{2-9}
& \scalebox{0.78}{\methodce} & \second{0.3734$\pm$0.0002} & 0.4837$\pm$0.0003 & 0.5776$\pm$0.0013 & 0.1158$\pm$0.0004 & 0.2218$\pm$0.0003 & 0.3461$\pm$0.0005 & 0.0434$\pm$0.0357\\
& \scalebox{0.78}{\method} & \first{0.3735$\pm$0.0005} & \first{0.4617$\pm$0.0012} & \second{0.5432$\pm$0.0018} & 0.1123$\pm$0.0009 & \first{0.1840$\pm$0.0011} & \first{0.2704$\pm$0.0009} & 0.0661$\pm$0.0401   \\
& \scalebox{0.78}{\methodvlb}  & 0.3764$\pm$0.0006 &\second{0.4620$\pm$0.0014} & 
0.5487$\pm$0.0018 & 0.1198$\pm$0.0009 & 0.1866$\pm$0.0011 & 0.2952$\pm$0.0011 & 0.0661$\pm$0.0478\\
\bottomrule
\end{tabular}
}
\label{tab:piano_modelsize}
\end{table*}

\subsubsection{Memory and Running-time Comparison}
\label{ssec:memory_and_runningtime}

\begin{table}[!t]
\centering
\caption{The GPU-memory, running time and number of network parameters in all methods.\method is easier to train and incurs the least GPU memory in both discrete- and continuous- time diffusions.}
\scalebox{0.78}{
\begin{tabular}{llccc|llccc}
\toprule
& Method  & Num. Parameters &Memory & Runtime  &   &Method & Num. Parameters  & Memory & Runtime \\
\midrule
\multirow{4}{*}{\scalebox{0.90}{\rotatebox{90}{Discrete-time}}} &
  \dpm  & $\sim$ 102,700,000 & 15669MiB & 93 hrs &  \multirow{4}{*}{\scalebox{0.86}{\rotatebox{90}{Continuous-time}}} 
 & \sddm & $\sim$ 12,350,000 & 85528MiB &  96 hrs \\
 & \rdm & $\sim$ 102,700,000  & 9570MiB & 86 hrs &  & \sedd & $\sim$ 102,700,000 & 4620MiB & 85 hrs \\
& \methodce & $\sim$ 102,700,000 & 9735MiB &  83 hrs&  & \tauldr-0  & $\sim$ 102,700,000 & 17669 MiB & 129 hrs \\
& \methodsim & $\sim$ 102,700,000  &  9735MiB  &  82 hrs &  & \methodce & $\sim$ 102,700,000 & 15703MiB &  93 hrs  \\
& \method & $\sim$ 102,700,000 & 9735MiB &  90 hrs & & \methodsim & $\sim$ 102,700,000 & 17620MiB & 91 hrs   \\
& & & & & &\method & $\sim$ 102,700,000  & 17620MiB & 101 hrs  \\
\bottomrule
\end{tabular}
}
\label{tab:size_time_memory}
\end{table}
Table \ref{tab:size_time_memory} provides the running time, number of parameters (in the transformer architecture) and GPU-memory. While \dpm, and \tauldr-0 use the same backbone transformer structure as \method and contain the same number of parameters in the transformer, they require a passing of transition matrices, which incur additional memory and slow down the training process. \rdm is similar to \methodce and \methodsim in memory calculation and runtime. \sddm requires a special architecture and considerably larger memory during training. \sedd requires significantly less memory than \method (continuous version), and training time is almost equivalent to the discrete-version \methodce and \methodsim. \sedd's loss calculation removes the redundant additional forward sampling, yet the loss calculation is less exact tham \method in continuous-time.

\subsubsection{Qualitative Examples for VQCIFAR10 Image Generation}

We provide 80 reconstructed images VQGAN decoder in Fig. \ref{fig:vq_recons}. Most examples are easy to recognize from one of the 10 classes in the original CIFAR10: airplanes, cars, birds, cats, deer, dogs, frogs, horses, ships, and trucks.

\label{ssec:imagesamples}

\begin{figure}[h]
    \centering   \includegraphics[width=0.98\columnwidth]{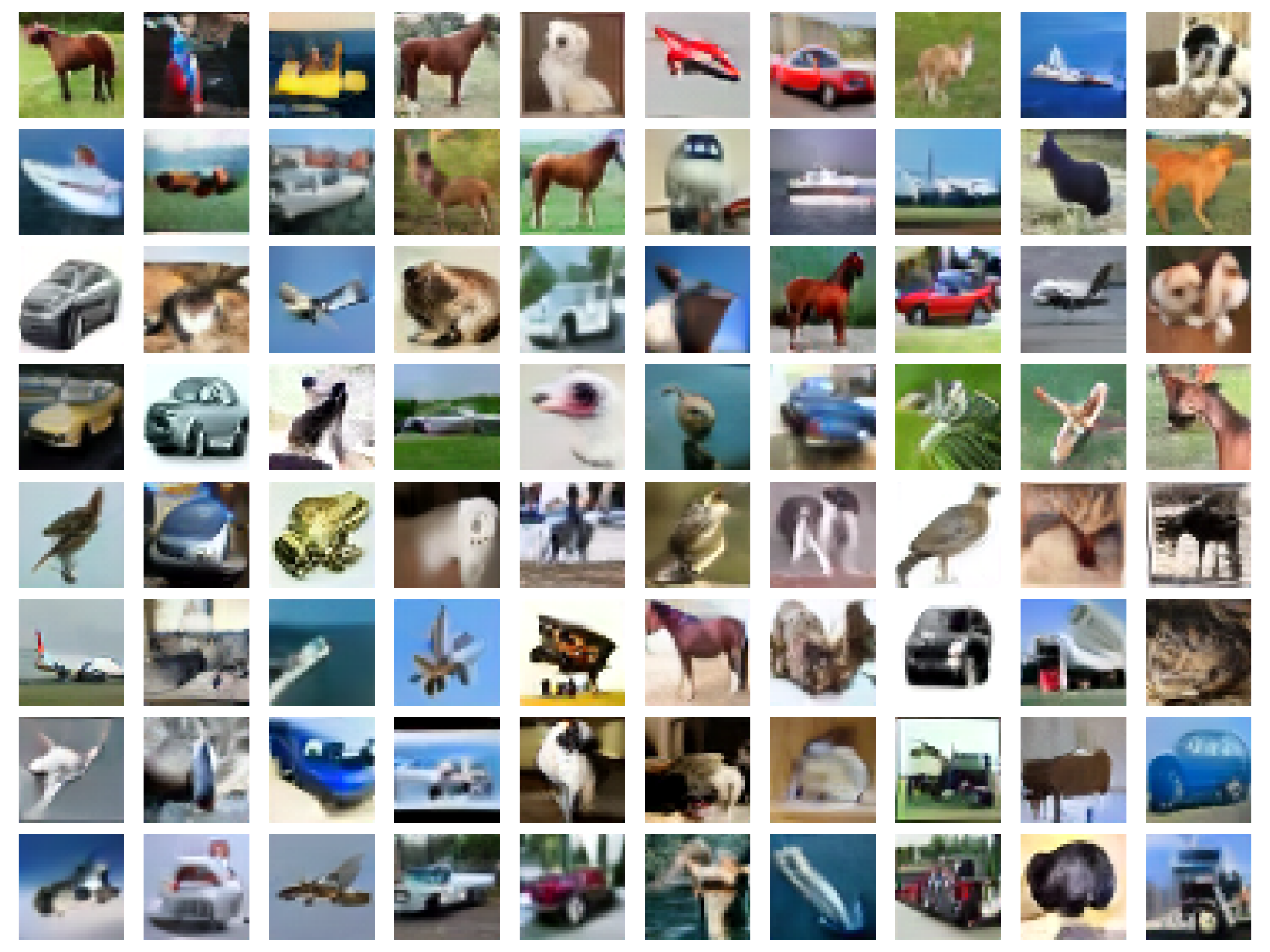}
    \caption{Example image samples generated by \methodsim as trained on \cifar. Most images are easy to recognize as being from one of the 10 classes in CIFAR10.}
    \label{fig:vq_recons}
\end{figure}

\subsubsection{MCMC for Discrete-time and Continuous-time Discrete Diffusion}
\label{ssec:mcmc}
To demonstrate the effectiveness of MCMC corrector steps, we take the top performing methods in discrete and continuous time diffusion models (for discrete time, \methodsim, and for continuous time \methodce) and show improved quality metrics over   generated images after MCMC corrector is applied. Due to extensive time required for MCMC corrector step, we could only conduct evaluation over $5,000$ images, and thus the results are not comparable to the main VQCIFAR10 result in Table \ref{tab:vqcifar10}.  We set the number of generation steps to be $100$ and use MCMC corrector on the last $10$, $20$ generation timesteps, respectively. 

From the results shown in Table \ref{tab:mcmc_result} and Table \ref{tab:mcmc_result_cont}, we can see that MCMC corrector can significantly improve the quality of the generated samples. Specifically, for discrete-time generation, IS is showing a significant improvement than the sampling process without the MCMC corrector. For continuous-time case, both IS and FID scores are improving from the baseline (without MCMC).

\noindent 
\begin{minipage}{0.48\textwidth}
\begin{table}[H]
\centering
\caption{Image gen. quality w.r.t. Inception Score (IS) and the Frechet Inception Dist. (FID) over 5,000 samples unconditionally generated by \methodsim in discrete-time case. MCMC corrector is conducted for the last 10,20 timesteps over 100 sampling steps.}
\scalebox{0.8}{
\begin{tabular}{lcc}
\toprule
 MCMC Configuration & IS (\textuparrow) & FID (\textdownarrow) \\
 \midrule
 Without MCMC & 9.01 & 19.79\\ 
 \midrule
 $\Delta n=0.001,N=2$, Start Steps:10 & 9.28 & 18.46\\
 $\Delta n=0.001,N=2$, Start Steps:20 & \textbf{9.59} & 18.36 \\
 $\Delta n=0.005,N=2$, Start Steps:10 & 9.43 & 18.26 \\
 $\Delta n=0.005,N=2$, Start Steps:20 & 9.43 & 20.37 \\
 $\Delta n=0.001,N=5$, Start Steps:10 & 9.29 & \textbf{18.02}\\
 $\Delta n=0.001,N=5$, Start Steps:20 & 9.47 & 18.56 \\
 $\Delta n=0.002,N=5$, Start Steps:10 & 9.35 & 18.18 \\
 $\Delta n=0.002,N=5$, Start Steps:20 & 9.48 & 20.37 \\
\bottomrule
\end{tabular}
}
\label{tab:mcmc_result}
\end{table}
\end{minipage}%
\hfill 
\begin{minipage}{0.48\textwidth}
\begin{table}[H]
\centering
\caption{Image gen. quality w.r.t. Inception Score (IS) and the Frechet Inception Dist. (FID) over 5,000 samples  generated by \methodce in continuous-time case. MCMC corrector is conducted for the last 10,20 timesteps over 100 sampling steps.}
\scalebox{0.8}{
\begin{tabular}{lcc}
\toprule
 MCMC Configuration & IS (\textuparrow) & FID (\textdownarrow) \\
 \midrule
 Without MCMC & 8.98 & 19.19 \\ 
 \midrule
 $\Delta n=0.001,N=2$, Start Steps:10 & 9.12 & 17.62 \\
 $\Delta n=0.001,N=2$, Start Steps:20 & \textbf{9.23} & 17.35 \\
 $\Delta n=0.005,N=2$, Start Steps:10 & 9.01 & 17.71 \\
 $\Delta n=0.005,N=2$, Start Steps:20 & \textbf{9.23} & 17.58 \\
 $\Delta n=0.001,N=5$, Start Steps:10 & 9.03 & \textbf{17.26} \\
 $\Delta n=0.001,N=5$, Start Steps:20 & 9.00 & 17.42 \\
 $\Delta n=0.002,N=5$, Start Steps:10 & 9.16 & 17.98 \\
 $\Delta n=0.002,N=5$, Start Steps:20 & 8.97 & 17.83 \\
\bottomrule
\end{tabular}
}
\label{tab:mcmc_result_cont}
\end{table}
\end{minipage}

\subsubsection{Sampling Time Discrete-time and Continuous-time Discrete Diffusion}
\label{ssec:sampling_time}
In Table \ref{tab:sampling_time}, we have calculated the time required for baselines and our models to sample 50,000 VQ-encoded images with 1000 timesteps on a single A6000 GPU. In addition, we calculate the time required when \method and \tauldr need MCMC corrector steps (=5) in sampling. We compare our results in the following tables. In terms of sampling time, \rdm yields the most sampling time due to its re-routing sampling scheme. \sedd and \tauldr are similar in sampling time, due to the tau-leaping scheme. \method falls short of \sddm, which requires a special NN architecture and different code base(jaxlib) than other methods. But when considering MCMC sampling, \method brings faster sampling time than tau-leaping in \tauldr, while allowing sampling for both continuous-time and discrete-time models.

\begin{table}[h]
\centering
\caption{Sample times and MCMC sampling steps for different models.}
\vspace{0.05in}
\scalebox{0.9}{
\begin{tabular}{l|l|l}
\hline
\textbf{Model} & \textbf{Sample Time} & \textbf{MCMC Sampling Step=5} \\ 
\hline
\rdm & $\sim$23 hours & - \\ 
\dpm & $\sim$8 hours & - \\ 
\tauldr & $\sim$9 hours & $\sim$2 days, 20 hours \\ 
\sddm & $\sim$7 hours & - \\
\sedd & $\sim$8 hours & - \\ \hline
\method (continuous version) & $\sim$8 hours & $\sim$2 days, 12 hours \\ 
\method (discrete version) & $\sim$8 hours & $\sim$2 days, 12 hours \\ \hline
\end{tabular}
}
\label{tab:sampling_time}
\end{table}


\end{document}